\documentclass[11pt,reqno]{amsart}
\usepackage[margin=1in]{geometry}
\usepackage[T1]{fontenc}
\usepackage{lmodern,amsmath,amssymb,amsthm,mathtools,microtype,enumitem,aliascnt,xcolor}
\usepackage[colorlinks=true,linkcolor=blue!50!black,citecolor=blue!50!black,urlcolor=blue!50!black]{hyperref}
\hypersetup{pdftitle={A positive resolution of the gap-entropy conjecture}}
\usepackage{zref-clever}
\zcsetup{cap=true,abbrev=false,nameinlink=true}
\allowdisplaybreaks[1]
\newtheorem{theorem}{Theorem}[section]
\newaliascnt{lemma}{theorem}
\newtheorem{lemma}[lemma]{Lemma}
\aliascntresetthe{lemma}
\newaliascnt{corollary}{theorem}
\newtheorem{corollary}[corollary]{Corollary}
\aliascntresetthe{corollary}
\newaliascnt{proposition}{theorem}
\newtheorem{proposition}[proposition]{Proposition}
\aliascntresetthe{proposition}
\numberwithin{equation}{section}
\newcommand{\E}{\mathbb E}
\newcommand{\Pp}{\mathbb P}
\DeclareMathOperator{\Ent}{Ent}
\DeclareMathOperator{\KL}{KL}
\DeclareMathOperator{\kl}{kl}
\DeclareMathOperator{\Mean}{Mean}
\DeclareMathOperator{\Median}{Median}
\DeclareMathOperator{\Fraction}{Fraction}
\DeclareMathOperator{\Eliminate}{Eliminate}
\newcommand{\LL}{\mathcal L}
\title{A positive resolution of the gap-entropy conjecture}
\author{P. M. Aronow, Nathan Kallus, and Patrick Lopatto}
\date{September 9, 2026}
\thanks{This paper was written
by the authors with the assistance of large language models, which were used to
suggest mathematical arguments, draft and revise exposition, and write code
for computational verification. P.\ L.\ was partially supported by NSF grant DMS-2450004.}
\begin{document}

\begin{abstract}
We prove the gap-entropy conjecture for fixed-confidence best-arm identification with independent
unit-variance Gaussian arms, means in $[0,1]$, and a unique optimal arm. For each suboptimal arm $i$,
let $\Delta_i=\mu_*-\mu_i$ be its gap from the optimal mean, and write
$H=\sum_{i\ne *}\Delta_i^{-2}$. Let $p_r$ be the fraction of $H$ contributed by arms with
$2^{-(r+1)}<\Delta_i\le2^{-r}$, and let
$\Ent(I)=\sum_{r:p_r>0} p_r\log(1/p_r)$. Among all algorithms that identify the optimal arm with probability
at least $1-\delta$ on every Gaussian instance, the optimal expected number of samples on a given
instance, averaged over all permutations of the arm labels, is within absolute constant factors of
$H(\log(1/\delta)+\Ent(I))$. Moreover, there is an algorithm, independent of the instance, whose
expected number of samples is bounded by a constant multiple of this quantity plus
$g^{-2}\log\log(e^e/g)$, where $g=\min_{i\ne *}\Delta_i$ is the gap to the closest competitor.
\end{abstract}
\maketitle

\section{Introduction}\label{sec:introduction}

Experiments often compare several alternatives in order to choose the one with the best mean reward. Adaptive sampling, wherein at each step an algorithm pulls one of $K\geq2$ arms and immediately receives a sample from its unknown reward distribution to inform future pulls, can significantly improve performance. In regret minimization, avoiding apparently inferior arms in future sampling can help attain higher cumulative rewards over the experiment, and in best-arm identification, concentrating future sampling on apparently close competitors for best can find the best faster.
Fixed-confidence best-arm identification formalizes this problem by seeking to identify the best arm with as few samples as possible while controlling the probability of returning the wrong arm for all admissible reward distributions \cite{EMM06,KCG16}.

The difficulty depends on the gaps between the mean rewards $(\mu_1,\dots,\mu_K)\in[0,1]^K$, where $\mu_*=\max_i\mu_i$ is uniquely attained. In the unit-variance Gaussian model, a suboptimal arm $i\neq *$ with gap $\Delta_i=\mu_*-\mu_i$ from the optimal arm requires $\Omega(\Delta_i^{-2}\log(1/\delta))$ samples to distinguish it from the optimal arm at error level $\delta$. Summing
these costs gives the lower bound $\Omega(H\log(1/\delta))$, where $H=\sum_{i\neq*}\Delta_i^{-2}$ is the sum of the inverse
squared gaps, capturing the baseline cost of distinguishing the best arm if we know up to orders of magnitude how well separated it is from each other arm \cite{KCG16,MT04}. This does not, however, reveal the cost of discovering this scale of the gaps, which matters even when the error level $\delta$ is fixed.

Exponential-Gap Elimination
\cite{KKS13} and lil'UCB \cite{JMNB14} achieved upper bounds with iterated-log dependence on gaps of the form $O(H\log(1/\delta)+\sum_{i\neq*}\Delta_i^{-2}\log\log(e^e/\Delta_i))$. Chen and Li \cite{CL15} subsequently improved this bound, capping each arm's iterated-logarithmic factor at $\log\log K$, apart from a $g^{-2}\log\log(1/g)$ term where $g=\min_{i\neq *}\Delta_i$. They also proved that for sufficiently small $\delta$ and large $K$, any $\delta$-correct algorithm has a Gaussian instance requiring $\Omega(H\log\log K)$ samples. This raised the question of how this unavoidable adaptation cost depends on the particular configuration of gaps.

Chen and Li \cite{CL16} proposed that the additional instance-dependent cost is described by
\emph{gap entropy}. For an instance $I$, peel the suboptimal arms into geometrically spaced shells according to their gaps, and assign to each shell the fraction of $H$ contributed by its arms. The entropy of the resulting probability vector, $\mathrm{Ent}(I)$, measures how widely the required sampling effort is distributed across scales.

Chen and Li conjectured (their conjecture 3.5) that, among $\delta$-correct algorithms, the smallest expected sample size achievable on a given instance $I$ (averaged over permutations of its arm labels) is of order $H(\log(1/\delta)+\mathrm{Ent}(I))$. (A different algorithm may attain the benchmark for each $I$, provided it remains $\delta$-correct on all instances.) They further conjectured (their conjecture 3.2) that a single algorithm can attain this instance-wise benchmark simultaneously on all instances, up to an additive term of order $g^{-2}\log\log(1/g)$. Such an adaptation cost is unavoidable already with two arms: adapting to an unknown gap that may be arbitrarily small incurs an iterated-logarithmic cost along some sequences of instances, even though this cost is not required at every fixed instance \cite{CL16,Farrell64}.

The conjectures seek nonasymptotic bounds that hold uniformly in the instance and $\delta$. By contrast,
Track-and-Stop \cite{GK16} already achieves asymptotic optimality as $\delta\downarrow0$ with the instance fixed. In that limit, the gap entropy of the fixed instance is
eventually dominated by $\log(1/\delta)$. An asymptotically optimal leading constant therefore does
not determine the additional cost when the gap configuration and the confidence parameter vary together. Simchowitz, Jamieson, and Recht \cite{SJR17} studied the complexity distinctions between fixed $\delta$ and $\delta\downarrow0$ asymptotics, including the cost of learning a sampling allocation in structured models. Cho and Kallus \cite{CK25} relax exact error control to asymptotic validity in the $\delta\downarrow0$ regime, enabling the use of strong invariance principles to match the optimal exact-error Gaussian benchmark with known variances even under nonparametric reward distributions.

Chen, Li, and Qiao \cite{CLQ17} made progress toward the conjectures, obtaining the upper bound $O(H(\log(1/\delta)+\Ent(I))+g^{-2}\log\log(1/g)\;\mathrm{polylog}(K,1/\delta))$ (their theorem 1.11) and, for Gaussian instances whose gaps are powers of two, a lower bound of $\Omega(H(\log(1/\delta)+\mathrm{Ent}(I)))$ for algorithms whose expected sample cost does not increase when suboptimal arms are deleted (their theorem 1.12 and corollary 1.13).

In this paper, we answer both conjectures in the positive. We remove both the extra polylogarithmic factor from the upper bound (which applies to arbitrary $1$-sub-Gaussian reward distributions) and the restrictions on instances and algorithms in the lower bound (for the Gaussian model).

\subsection{Main results}
For $K\ge2$, a $K$-armed bandit instance $I=(\nu_1,\ldots,\nu_K)$ has reward distributions
$\nu_i$ with means $\mu_i\in[0,1]$. The index of the largest mean, denoted by $*$, is assumed unique. Each arm has an sequence of independent samples with law
$\nu_i$, and the sequences are independent across arms. Pulling an arm reveals its next
sample.

Let $\mathcal S_K$ be the class of such unique-optimum instances with $1$-sub-Gaussian rewards, meaning that
\[
 \E_{X\sim\nu_i}\exp\bigl(\lambda(X-\mu_i)\bigr)
 \le \exp(\lambda^2/2)
 \qquad (\lambda\in\mathbb R,\ i\in[K]).
\]
The algorithm knows that the rewards are $1$-sub-Gaussian, but not their distributions or means.
Let $\mathcal G_K\subset\mathcal S_K$ be the Gaussian subclass with $\nu_i=N(\mu_i,1)$.

For either class, write
\[
 m=\mu_*,\qquad \Delta_i=m-\mu_i\ (i\ne *),\qquad
 H=\sum_{i\ne *}\Delta_i^{-2},\quad
 g=\min_{i\ne *}\Delta_i,\quad D=g^{-2}.
\]
For $r=0,1,\ldots$, let
\[
 G_r=\{i\ne *:2^{-(r+1)}<\Delta_i\le2^{-r}\},\qquad
 H_r=\sum_{i\in G_r}\Delta_i^{-2},\qquad p_r=H_r/H.
\]
For a probability vector $q\in[0,1]^{\mathbb N_0}$, write $\Ent(q)=\sum_{r:q_r>0}q_r\log(1/q_r)$ for its
Shannon entropy. The \emph{gap entropy} of instance $I$ is defined as
\begin{equation}\label{eq:gap-entropy}
 \Ent(I)=\Ent(p).
\end{equation}
It is zero when all gaps lie on the same geometric scale. If $d$ groups contribute equally to
$H$, it equals $\log d$. More generally, writing $\mathcal U=\{r:H_r>0\}$ for the finite set of
indices of the nonempty gap groups, we have $0\le\Ent(I)\le\log|\mathcal U|$.

All logarithms are natural unless otherwise indicated. Constants denoted by $c$ or $C$ are positive
and absolute and may change between occurrences. Constants used as algorithm parameters are fixed
within each construction. We write $[K]=\{1,\ldots,K\}$.

An algorithm is $\delta$-correct on a model class if, on every instance in that class, it stops
almost surely and returns the optimal arm with probability at least $1-\delta$. Write $T_A$ for
its total number of samples. For a permutation $\pi\in S_K$ of the labels, let $\pi I$ be the
relabeled instance. For $I\in\mathcal G_K$, following \cite[definition 3.1]{CL16}, define
\begin{equation}\label{eq:instance-complexity}
 \LL(I,\delta)=\inf_{A:\,\delta\text{-correct on }\mathcal G_K}
 \frac1{K!}\sum_{\pi\in S_K}\E_{\pi I}T_A.
\end{equation}
The averaging removes any advantage from a favorable labeling. The infimum is taken separately for
each Gaussian instance $I$, but every algorithm in it must be $\delta$-correct on the Gaussian class
$\mathcal G_K$. 
\zcref{app:conventions} compares these conventions with those of \cite{CL16} and explains why our results imply their original conjectures.

We now present our main results.

\begin{theorem}\label{thm:instance}
There exist absolute constants $c,C>0$ such that, for every $I\in\mathcal G_K$ and
$0<\delta<0.1$,
\[
 cH(\log(1/\delta)+\Ent(I))\le\LL(I,\delta)
 \le CH(\log(1/\delta)+\Ent(I)).
\]
\end{theorem}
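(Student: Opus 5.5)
The plan is to prove the two inequalities separately.

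\emph{Upper bound.} For each fixed $I$ we are allowed a different algorithm, provided it stays $\delta$-correct on all of $\mathcal G_K$. Knowing the instance, we know the profile $p=(p_r)$ up to the labeling, so we can build an algorithm that ``guesses'' scales according to $p$. The candidate is a modified exponential-gap elimination run over rounds $r=0,1,2,\dots$. In round $r$ each surviving arm is sampled $\Theta(4^r(\log(1/\delta)+\log(1/q_r)+\log(r+1)))$ times, with per-round confidence $\delta_r=\delta q_r$, where $q_r=\tfrac12 p_r+\tfrac12\cdot 6/(\pi^2(r+1)^2)$. Summing the $\delta_r$ gives an error at most $\delta$ on every instance, because $\sum_r q_r\le1$; this gives correctness on all of $\mathcal G_K$ whatever the true gaps are. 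The rounds also use median elimination against a reference arm, as in \cite{KKS13,CL15}.

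On $I$ itself, an arm in $G_r$ is eliminated by about round $r+O(1)$, with geometrically decaying tails. It therefore costs
\[
O\bigl(\Delta_i^{-2}(\log(1/\delta)+\log(1/p_r)+\log r)\bigr).
\]
Summing over arms gives
\[
H\log(1/\delta)+H\sum_r p_r\log(1/p_r)+\sum_r H_r\log r.
\]
The stray $\log r$ term must be absorbed. I would either drop it by using $q_r=p_r$ exactly, with a small floor on empty groups, or bound it through the concentration of $p$. Because $H\ge H_r$, the choice of weights can be arranged so that only $\log(1/p_r)$ appears. Label-averaging is harmless because the algorithm is symmetric. The optimal arm's own sampling is charged to the last elimination round, which costs $O(g^{-2}(\log 1/\delta+\log 1/p_{r_{\max}}))\le O(H(\dots))$.

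\emph{Lower bound.} The $cH\log(1/\delta)$ part is the standard change-of-measure (KL) bound \cite{KCG16}, and it is not affected by permutation averaging. The main obstacle is the $cH\,\Ent(I)$ part for arbitrary Gaussian instances and arbitrary $\delta$-correct algorithms. My approach is to reduce it to a mixture/Fano-type argument. Consider the instances obtained from $I$ by randomly permuting labels; we may assume $\Ent(I)\ge C\log(1/\delta)$, since otherwise there is nothing to prove. For each group $r$ with $p_r$ small, the algorithm must spend about $\Delta^{-2}$ samples on each arm of $G_r$ before it can rule that arm out as a possible best arm under a perturbed instance. Here the perturbation raises that arm by $2\Delta_i$, which remains $\delta$-correct-relevant.

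I would then argue via a data-processing inequality over the random permutation. Let $N_r$ denote the total samples spent on group $r$. Because labels are hidden, the algorithm cannot know which arms belong to which scale. Distinguishing the scales therefore requires information equal to the entropy of the group assignment, which is roughly $K$ times $\Ent$ of the arm-count distribution. Weighting by $\Delta_i^{-2}$ and using the KL between $N(\mu,1)$ laws, $n\Delta^2/2$, I would show that
\[
\E\sum_i T_i\Delta_i^2\,\mathbf 1\bigl[\text{arm }i\text{ underexplored}\bigr]
\]
controls a relative entropy. This should yield
\[
\sum_r H_r\log(1/p_r)\lesssim \E T_A+H\log(1/\delta).
\]
I expect making this information argument rigorous for adaptive, non-monotone algorithms to be the hardest part, specifically the choice of the per-arm perturbations and the chain rule across adaptively chosen pull counts. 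What I would try first is an induction over groups in decreasing $p_r$, reusing the per-arm KL bound with an effective confidence $\delta_r\approx p_r$. The idea is that an algorithm spending less than $c\,\Delta_i^{-2}\log(1/p_r)$ on a typical arm of $G_r$ misidentifies under a perturbed permuted instance with probability exceeding $\delta$.
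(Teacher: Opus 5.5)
\emph{Lower bound.} Your $cH\log(1/\delta)$ argument is fine. The entropy term, however, has no working mechanism in your plan, and it is the core of the theorem.

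Your Fano framing measures the wrong quantity. The entropy of the random group assignment is governed by the arm counts $|G_r|$, not by the weights $p_r=H_r/H$. Moreover, identifying the best arm never requires learning that assignment, so bounding information about it does not constrain $\E T$.

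Your fallback claim is false: that spending less than $c\Delta_i^{-2}\log(1/p_r)$ on a typical arm of $G_r$ forces error above $\delta$. Take the optimal arm, $K-2$ arms with gap $1$, and one arm with gap $2^{-R}$, so $p_R=4^R/(K-2+4^R)\to0$ as $K\to\infty$. Testing every arm against a stored threshold with a two-component exponential mixture, weights $1/2,1/2$, as in the paper's instance-wise construction, is $\delta$-correct. It spends only $O(4^R\log(1/\delta))$ expected samples on the deep arm. So no per-group bound can hold; only an aggregate one can.

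The paper's aggregate bound is the Kraft-type inequality $\sum_{r\in\mathcal U}e^{-8\alpha_r}\le1$. Here $\alpha_r=\sum_{i\in G_r}u_i/(|G_r|4^r)$ is the normalized effort of the label-symmetrized algorithm. Gibbs's inequality against $p$ then gives $H\Ent(I)\le8\sum_rH_r\alpha_r\le32\E T$. To prove the inequality, the paper places all groups on one probability space:
\begin{itemize}
\item For $E_r(a)=\{T<\infty,\ \widehat i=1,\ c_04^r\le N_2\le4a4^r\}$ it shows $P_r(E_r(a))\ge1/2$. The upper tail is Markov. The lower tail is a finite-horizon change of measure to a tied law, whose answer probabilities are controlled by nearby unique-optimum instances.
\item It transfers these events to a single tied law $Q_s$, built by the swap $(\mu_i,\mu_j)\mapsto(m,\mu_i)$ with $j$ uniform in $G_s$. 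This law does not depend on $r$, and $\KL(P_r\Vert Q_s)\le(\alpha_r+\alpha_s)/2$.
\item Under $Q_s$ the events for well-separated $r$ are disjoint, and each has probability at least $e^{-2a}/4$. This bounds the number of groups with $\alpha_r\le a$ by $O(ae^{2a})$, and integrating in $a$ gives the inequality.
\end{itemize}
Nothing in your proposal plays the role of this common reference law.

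\emph{Upper bound.} As specified, your algorithm misses the bound, and the fixes you float do not work.

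Put all suboptimal arms in one deep group $G_R$ and fix $\delta$, so $\Ent(I)=0$. Every such arm, and the optimal arm, passes through rounds $0,\dots,R$. At the empty scales your weights give $\log(1/q_r)\asymp\log(r+2)$. Each of these arms therefore pays about $\sum_{r\le R}4^r\log(1/(\delta q_r))\asymp4^R(\log(1/\delta)+\log R)$. The total is of order $H(\log(1/\delta)+\log R)$, which is not $O(H\log(1/\delta))$ as $R\to\infty$. A summable floor chosen without regard to where the nonempty groups lie has the same defect, since its values just below $R$ must tend to $0$. The accounting would need an instance-dependent smoothing such as $q_r\propto\sum_{r'}p_{r'}2^{-|r-r'|}$, which you do not propose.

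Separately, nothing in your plan controls $\E T$ on the event that the optimal arm is eliminated. Suppose an elimination procedure of this type is left with two tied second-best arms; such ties are allowed in $\mathcal G_K$. It then keeps both forever with positive probability. On such targets $\E_I T=\infty$, and almost-sure stopping fails.

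The paper sidesteps both problems:
\begin{itemize}
\item It stores a threshold $\tau\in(m-g/2,m-g/3)$ and tests each arm against $\tau$ with a supermartingale mixture over only the scales in $\mathcal U$, with weights $q_r\propto|G_r|4^r$. No empty scale is ever paid for.
\item A confirmation step protects instances whose means all lie below $\tau$.
\item A deterministic cap, independent retrials with $\eta_j=\delta2^{-j-2}$, and interleaving with a $\delta/2$-correct confidence-interval procedure give finite expectation and almost-sure stopping.
\end{itemize}
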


The lower and upper bounds are proved in \zcref{sec:lower,sec:instance-upper}, respectively. The lower bound is proved directly for the original Gaussian instance, after averaging over
permutations of the arm labels, without first passing to a subinstance obtained by deleting arms.  The matching upper bound uses an algorithm selected for that target instance.
More generally, \zcref{prop:subgaussian-instance-upper} constructs, for each instance in
$\mathcal S_K$, an algorithm that is $\delta$-correct on $\mathcal S_K$ and attains the same bound on every relabeling of the instance. The next theorem gives one algorithm for the
whole sub-Gaussian class.

\begin{theorem}\label{thm:uniform}
There exists one algorithm, given only $K$ and $\delta$, that is $\delta$-correct on
$\mathcal S_K$ and satisfies, for every $I\in\mathcal S_K$ and $0<\delta<0.1$,
\[
 \E_I T_A\le C\left(H(\log(1/\delta)+\Ent(I))
             +D\log\log(e^e/g)\right).
\]
The constant $C$ is absolute.
\end{theorem}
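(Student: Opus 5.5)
We prove \zcref{thm:uniform} by building a single elimination-style algorithm in the spirit of the Chen--Li--Qiao scheme \cite{CLQ17}, but with the confidence budget spread across rounds so that the polylogarithmic factor disappears. The algorithm runs in rounds $r=0,1,2,\ldots$. In round $r$ it targets gap scale $\epsilon_r=2^{-r}$: it samples every surviving arm about $\epsilon_r^{-2}\log(1/\delta_r)$ times, with $\delta_r$ to be chosen, and removes arms whose empirical mean is clearly below that of a reference arm. The reference arm is found by a median-elimination subroutine ($\Median$/$\Eliminate$) that returns an $\epsilon_r/4$-optimal arm at cost linear in the number of survivors. The algorithm stops and outputs the last survivor, and it also stops if a separate verification test confirms that the candidate beats all remaining arms. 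We let $\delta_r$ depend on the data: it is set using the observed fraction of arms eliminated so far (the empirical analogue of $p_r$), together with a weighting $\delta_r\propto\delta/r^2$ that is charged only for the single unknown scale $g$. This allocation is what should produce the entropy term without extra logarithmic factors.

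\emph{Correctness.} We will show the optimal arm is never eliminated, except on an event of total probability at most $\delta$. A union bound over rounds using $\sum_r\delta_r\le\delta/2$ would be the naive route, but it fails when the $\delta_r$ are data-dependent. To handle this, we allocate error budget in a prescribed way: round $r$ receives $\delta\cdot\min(w_r,\cdot)$, where the weights $w_r$ are determined by the number of arms alive before the round, and a conditional union bound (optional stopping at round boundaries) then suffices. The verification stage needs anytime confidence bounds with a $\log\log$ correction. These are obtained from standard law-of-the-iterated-logarithm-type sub-Gaussian concentration (e.g.\ a peeling argument), which is valid on all of $\mathcal S_K$.

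\emph{Sample complexity.} Arm $i\in G_s$ is removed with high probability by round $s+O(1)$, so the cost of round $r$ is at most $C\epsilon_r^{-2}\log(1/\delta_r)$ times the number of arms with $\Delta_i\lesssim\epsilon_r$ still alive. Summing over $r$ and exchanging the order of summation gives a bound of the form $\sum_s H_s\log(1/\delta_{s})$ up to geometric tails. With the budget choice $\log(1/\delta_r)\approx\log(1/\delta)+\log(1/\hat p_r)+\log r$, where $\hat p_r$ is the empirical estimate of $p_r$, the middle term sums to $H\Ent(I)$. The $\log r$ term is harmless when weighted by $H_s$ for $s<\log_2(1/g)$: since $\sum_s H_s\log s$ is dominated by the top scale, it is absorbed into $D\log\log(e^e/g)$ plus $CH$. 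We then bound the expected cost on the failure events (of probability $\delta_r$) geometrically, which requires that surviving an extra round costs at most a constant factor more.

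\emph{Main obstacle.} The hardest step is making $\delta_r$ depend on the unknown profile $p$ while keeping the cost $O(H\Ent(I))$ rather than $O(H\log|\mathcal U|)$. The empirical $\hat p_r$ can underestimate $p_r$ when arms of a group are eliminated early or late. Our first attempt is to compare against a potential: define $\hat p_r$ from counts of arms eliminated in round $r$ weighted by $\epsilon_r^{-2}$, and show by a Gibbs-inequality argument that $\sum_r H_r\log(1/\hat p_r)\le H\Ent(p)+CH$ in expectation. Here we use that the true and empirical allocations differ only by shifts of $O(1)$ rounds, whose contributions are geometrically damped.
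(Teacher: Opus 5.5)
\textbf{There is a genuine gap, in both the cost bound and the error allocation.}

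\emph{The cost bound fails.} You charge every arm still active in round $r$ the factor $\log(1/\delta_r)\approx\log(1/\delta)+\log(1/\hat p_r)+\log r$. You then claim that $\sum_sH_s\log s$ is dominated by the top scale and absorbed into $D\log\log(e^e/g)+CH$. This is false. Take $R$ even, one arm with gap $g=2^{-R}$, and $k2^R$ arms with gap $2^{-R/2}$. Then $H=(k+1)D$ and $\Ent(I)\le\log2$, but $\sum_sH_s\log s\ge kD\log(R/2)$. For fixed $\delta$ and $k=R\to\infty$, this is of order $H\log\log(1/g)$ and exceeds any constant multiple of $H(\log(1/\delta)+\Ent(I))+D\log\log(e^e/g)$.

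This is a failure of the algorithm, not just of the accounting: those arms survive until round $R/2-O(1)$ and are sampled $\asymp4^{R/2}\log(R/2)$ times each. Charging an iterated logarithm to all arms is exactly the $H\log\log$ cost the theorem removes. In the paper, the only error splitting that depends on the round and stage indices, $\alpha_{t,r}=\eta/(50t^2r^2)$, is applied to the outer $\Mean$ and $\Fraction$ calls. Their per-round cost does not scale with $|S_r|$, and $R_t=\lfloor\log_4(B_t/L_t)\rfloor-J$ keeps their total below $B_t/100$. The iterated logarithm enters only through the requirement $B_t/L_t\gtrsim D$, which ensures that at least $Q+2+b$ rounds are permitted. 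That requirement is what produces the $D\log\log(e^e/g)$ term.

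\emph{The error allocation cannot be normalized.} An allowance of order $\delta H_r/H$ for round $r$ needs the total $H$, which depends on rounds not yet run. What is observable at round $r$ ($|S_r|4^r$, or weighted counts of arms eliminated so far) can only be normalized by the weight seen so far. Such running ratios need not sum to $O(1)$: with geometrically growing round weights each ratio is bounded below by a constant, so the sum grows with the number of rounds. That is what forces your $r^{-2}$ factor, and hence the failure above. Any anytime normalization of this kind leaves an iterated-log factor on every arm.

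Gibbs's inequality does not help. It gives only $\sum_rp_r\log(1/\hat p_r)\ge\Ent(p)$, and the reverse bound you need requires $\hat p_r\gtrsim p_r$, i.e.\ the unknown normalizer again. Your placeholder $\delta\cdot\min(w_r,\cdot)$ is where this is hidden.

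\emph{What the paper does instead.} It uses complexity guessing:
\begin{itemize}
\item independent stages with budgets $B_t=100^t$;
\item allowances $\beta_r=4\rho w_r/B_t$, whose within-stage sum is kept below $\rho$ by the cumulative $U$-test;
\item a $V$-test capping $\sum_rw_r\log(B_t/(\eta w_r))$;
\item a deterministic sample limit, and a restart on rejection.
\end{itemize}
The entropy term then comes from $\Phi$ of the expected round weights. These are a convolution with a geometric $\Median$-failure kernel, normalized by the known $B_t$ rather than by $H$.

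This makes correctness nontrivial. Every stage spends the same elimination budget $\rho$; using $\rho/t^2$ instead would put a $\log t\approx\log\log B_t$ factor on every arm. So no union bound over stages works. The paper splits arms into hard ($\Delta_i^{-2}>B$) and easy ones. It then combines three bounds, summed over the geometric budget sequence:
\begin{itemize}
\item the product bound $(h-1)\rho^{h-1}$ on joint premature eliminations;
\item the localized bound $C(h+1)\rho x$;
\item the exponential-moment bound $\rho^{v_{\mathrm{tot}}/4}$ on acceptance.
\end{itemize}
Your ``conditional union bound with optional stopping'' does not replace this argument.
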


The algorithm is defined in \zcref{sec:algorithm}; its correctness and
bounds on the expected sample size are proved in
\zcref{sec:uniform-analysis,sec:expected-cost}, respectively. The additive term depends only on the smallest gap, with no further factor involving $K$ or
$\delta$. Specializing the uniform algorithm to $\mathcal G_K$ and applying
\zcref{thm:instance} gives the near-optimality on every individual instance conjectured in
\cite[conjecture 3.2]{CL16}.

\begin{corollary}\label{cor:almost}
For every $I\in\mathcal G_K$ and $0<\delta<0.1$, the algorithm in \zcref{thm:uniform} satisfies
\[
 \E_I T_A\le C(\LL(I,\delta)+D\log\log(e^e/g)).
\]
\end{corollary}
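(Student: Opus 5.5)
The corollary follows by composing the two theorems; no new probabilistic argument is needed. Fix $I\in\mathcal G_K$ and $0<\delta<0.1$, and let $A$ be the single algorithm of \zcref{thm:uniform}.

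\textbf{Step 1: apply the uniform upper bound.} Since $\mathcal G_K\subset\mathcal S_K$, $A$ is $\delta$-correct on $\mathcal S_K$ and $I\in\mathcal S_K$. \zcref{thm:uniform} therefore gives
\[
 \E_I T_A\le C_1\bigl(H(\log(1/\delta)+\Ent(I))+D\log\log(e^e/g)\bigr).
\]

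\textbf{Step 2: apply the lower bound.} The lower half of \zcref{thm:instance} gives $H(\log(1/\delta)+\Ent(I))\le c^{-1}\LL(I,\delta)$. Substituting this into Step 1 yields
\[
 \E_I T_A\le C_1c^{-1}\LL(I,\delta)+C_1D\log\log(e^e/g).
\]
Setting $C=C_1\max(1,c^{-1})$ gives the claim.

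\textbf{What needs checking.} The comparison must be on the same instance $I$, not on a permutation average. This is fine: the left side $\E_IT_A$ concerns the fixed labeling. On the right, $\LL(I,\delta)$ is a permutation average, but the lower bound of \zcref{thm:instance} is stated directly in terms of $\LL(I,\delta)$, so no relabeling argument is required. Moreover, $H$, $\Ent(I)$, $g$ and $D$ are all permutation invariant.

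The constants are also consistent: $c$ and $C_1$ are absolute, and the hypothesis $\delta<0.1$ is shared by both theorems.

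I expect no step to be a genuine obstacle. All the substance lies in the lower bound of \zcref{thm:instance} and the upper bound of \zcref{thm:uniform}.
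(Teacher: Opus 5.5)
Your proposal is correct and matches the paper's proof: the paper also substitutes the lower half of \zcref{thm:instance}, $H(\log(1/\delta)+\Ent(I))\le c^{-1}\LL(I,\delta)$, into the bound of \zcref{thm:uniform}. Your checks on the inclusion $\mathcal G_K\subset\mathcal S_K$ and the shared range $0<\delta<0.1$ are all the verification this needs.
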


The proof is given at the end of \zcref{sec:expected-cost}. \zcref{thm:instance} proves \cite[conjecture 3.5]{CL16}, and together with \zcref{cor:almost} proves
\cite[conjecture 1.9]{CLQ17}; see
\zcref{app:conventions} for details.

\subsection{Ideas of the proof}

A short allocation calculation explains why entropy appears. Suppose the group weights $H_r$ were known and group $G_r$ were assigned error budget
$\delta_r$. The corresponding sampling cost would have the form $\sum_r H_r\log(1/\delta_r)$. Under the constraint
$\sum_r\delta_r\le\delta$, this expression is minimized by $\delta_r=\delta p_r$, giving
\begin{equation}\label{eq:entropy-allocation}
 \inf_{\substack{\delta_r>0\ (r\in\mathcal U)\\
                  \sum_{r\in\mathcal U}\delta_r\le\delta}}
       \sum_{r\in\mathcal U}H_r\log(1/\delta_r)
 =H(\log(1/\delta)+\Ent(I)).
\end{equation}
In particular, if $d$ groups contribute equally to $H$, then $\delta_r=\delta/d$ and the
additional cost is $H\log d$. 

This calculation, used to motivate the conjecture in
\cite[section 4]{CL16}, does not by itself prove a lower bound for an arbitrary algorithm or give
an algorithm when the weights are unknown. We first show that every algorithm correct on the
Gaussian class must pay the entropy cost. We then attain this cost using an algorithm selected for
a fixed target instance. Finally, we construct an algorithm that does not depend on the instance
and bound the additional cost of adaptation.

\subsubsection{The lower bound}
The term $H\log(1/\delta)$ follows from the standard bandit change-of-measure
argument; see \cite[lemma 1]{KCG16}. The remaining task is to
establish the additional entropy cost for an arbitrary correct algorithm. For each gap group $G_r$, we choose one of its arms uniformly and examine how many times the
algorithm samples that arm. Since arms in $G_r$ have gaps of order $2^{-r}$, their sampling complexity is on the scale $4^r$, and we associate $G_r$ with an event in which this sample count lies
in a suitable range around that scale. Widely separated scales give disjoint ranges, and the main difficulty is to compare the
probabilities of these events under one common reference law.

Fix a Gaussian instance and an arbitrary algorithm $A$ that is $\delta$-correct on the Gaussian
class. Define a new algorithm by first applying a uniformly random permutation to the arm labels,
running $A$ on the relabeled instance, and then applying the inverse permutation to $A$'s output.
The new algorithm is still $\delta$-correct, treats all labelings symmetrically, and has expected
sample cost on every labeling equal to the average cost of $A$ over all labelings of the original
instance.  Let $u_i$ be its expected number of samples from arm $i$ of the target instance, and let
\[
 \alpha_r=\frac{\sum_{i\in G_r}u_i}{|G_r|4^r},\qquad r\in\mathcal U.
\]
An arm in $G_r$ has inverse squared gap comparable to $4^r$. Thus $\alpha_r$ is the average sample
count in the group divided by this sampling scale.

Fix $a\ge1$. For each group with $\alpha_r\le a$, let $P_r$ be the law of the algorithm's run
when the optimal arm has label $1$, a uniformly chosen arm from $G_r$ has label $2$, and the
remaining arms are uniformly relabeled. Let $N_2$ count the samples from label $2$, and let
$E_r(a)$ be the event that the algorithm returns label $1$ with
$c_04^r\le N_2\le4a4^r$, for a sufficiently small absolute constant $c_0$.
Since $\E_{P_r}N_2=\alpha_r4^r\le a4^r$, Markov's inequality gives
$P_r(N_2>4a4^r)\le1/4$.

For the lower tail, raise label $2$'s mean to $m$, producing a tied reference law $Q_r$.
Correctness on nearby instances in which label $2$ is uniquely optimal forces the probability
of returning label $1$ under $Q_r$ to be at most $\delta$. Before label $2$ has been sampled $c_04^r$ times, the KL divergence between the corresponding
history distributions is bounded by an absolute constant times $c_0$. A finite-horizon
change-of-measure argument therefore shows that returning label $1$ this early also has small
probability under $P_r$. Combining the two tail bounds with correctness gives
$P_r(E_r(a))\ge1/2$; the details are in \zcref{sec:lower}.

For sufficiently separated indices $r,r'$, the ranges $[c_04^r,4a4^r]$ and
$[c_04^{r'},4a4^{r'}]$ are disjoint. But their event probabilities have been bounded under different laws $P_r$, and the tied
reference laws $Q_r$ also depend on $r$. To compare different groups, we construct a reference law
that can be used for all of them. Fix a nonempty group $G_s$. For $r>s$,
let $i$ denote the arm placed at label $2$, choose $j$ uniformly from
$G_s$, independently of the labeling, and change the two means by 
\[
 (\mu_i,\mu_j)\longmapsto(m,\mu_i).
\]
The second change restores the suboptimal mean removed by the first. Labels $1$ and $2$ are now
tied, and the other labels carry exactly the original suboptimal means with $\mu_j$ omitted. 
After averaging over the random choices, those remaining means are uniformly arranged on labels
$3,\ldots,K$. The resulting law, denoted by $Q_s$, therefore depends on $s$ but not on $r$.
For $r=s$, raising the selected arm to $m$ alone gives the same $Q_s$. Note that each of these resulting instances is inadmissible because labels 1 and 2 are tied for best. While $\delta$-correctness or almost-sure stopping are therefore not required for $A$ on these, the law of the algorithm's run is still well-defined and its finite-time answer probabilities are controlled by its required $\delta$-correct behavior on nearby admissible instances where label 1's mean is slightly reduced, making label 2 uniquely optimal.

A Gaussian change-of-measure calculation bounds the KL divergence from $P_r$ to $Q_s$,
at any finite horizon, by $(\alpha_r+\alpha_s)/2$. 
If any group satisfies $\alpha_r\le a$, choose
$s=\min\{r\in\mathcal U:\alpha_r\le a\}$. The divergence bound is then at most $a$ for every
such group, and $P_r(E_r(a))\ge1/2$ implies
$Q_s(E_r(a))\ge e^{-2a}/4$. Under this one law, events from sufficiently separated groups are disjoint. Since each has
probability at least $e^{-2a}/4$, any such collection contains at most $4e^{2a}$ groups.
Partitioning the indices into \(O(a)\) separated collections gives the required counting bound for
\(a\ge1\). The standard per-arm lower bound gives \(\alpha_r>1\) for every \(r\), so integrating
this counting bound, as carried out in \zcref{sec:lower}, yields
\[
 \sum_r e^{-8\alpha_r}\le1.
\]
Since the weights $e^{-8\alpha_r}$ have total mass at most one, normalizing them and applying
Gibbs's inequality gives
\[
 H\Ent(I)\le8\sum_r H_r\alpha_r\le32\sum_{i\ne *}u_i.
\]
Since $\sum_{i\ne *}u_i$ is the expected number of samples from the suboptimal arms, it is at
most the expected total number of samples. Hence, writing $T$ for the total number of samples
\[
 \E T\ge \frac1{32}H\Ent(I).
\]
Together with the standard lower bound $\E T\ge cH\log(1/\delta)$, this gives
\[
 \E T\ge c'H(\log(1/\delta)+\Ent(I))
\]
for an absolute constant $c'>0$, proving the lower half of \zcref{thm:instance}.

\subsubsection{The instance-wise upper bound}
For each fixed target instance $I\in\mathcal S_K$, we may choose an algorithm specifically for $I$,
provided it remains $\delta$-correct on all of $\mathcal S_K$. The algorithm's definition includes the indices and sizes of the target's nonempty gap groups
and a threshold 
$\tau\in(m-g/2,m-g/3)$. Thus $\tau$ lies strictly between the largest mean $m$ and the
second-largest mean $m-g$, at distance comparable to $g$ from both. On the target, the optimal arm is the unique arm with mean above $\tau$. Thus it suffices to eliminate all arms whose means lie below $\tau$. We do this through a
sequence of independent trials. Each trial may return an arm or return without an answer.

Choose probability weights $q_r$ proportional to $|G_r|4^r$, so that $q_r$ is comparable
to $p_r$. Given a parameter $\eta$ that will serve as the trial's error allowance, the trial starts with
all arms active and cycles through the active arms, taking one sample from each in turn. For each arm, we combine one exponential test statistic
for each supplied gap scale, using mixture weight $q_r$ for scale $r$. An arm is eliminated
when this combined statistic reaches its rejection threshold. 
For a suboptimal target arm $i\in G_r$, let $U_i$ be the number of samples from that arm
before it is eliminated. We show that
\[
 \E U_i\le
 C\Delta_i^{-2}\bigl[\log(1/\eta)+\log(1/q_r)\bigr].
\] 
Thus the mixture weight $q_r$ enters the sample bound through the additive term $\log(1/q_r)$.
Since $q_r$ is comparable to $p_r$, summing over the suboptimal arms gives
\[
 \sum_{i\ne *}\E U_i
 \le CH(\log(1/\eta)+\Ent(I)).
\] 
Because the procedure takes one sample from every active arm in each cycle through the active
set, the optimal arm is sampled at most $\max_{i\ne *}U_i$ times before at most one arm remains.  Hence the total number of samples used during elimination is at most
\[
 \sum_{i\ne *}U_i+\max_{i\ne *}U_i
 \le 2\sum_{i\ne *}U_i.
\]

Although the algorithm is chosen using information from the target instance, it must remain
$\delta$-correct when run on every instance in $\mathcal S_K$, including instances whose means
and gap groups differ from those of the target. There are two cases. If the input has an arm with mean at least $\tau$, then the optimal arm also has mean at least
$\tau$. An incorrect output is therefore possible only if the tests eliminate the optimal arm,
which happens with probability at most $\eta$. If instead every input mean is below $\tau$, then before returning the sole remaining arm the
trial takes fresh samples from it and requires its empirical mean to exceed a level slightly
above $\tau$. We choose the number of confirmation samples so that, by the sub-Gaussian tail bound, this event
has probability at most $\eta$. Thus in this case the trial returns any arm with probability at
most $\eta$, and in particular returns an incorrect arm with probability at most $\eta$.

We impose a deterministic sample limit on each trial and repeat independent trials with
geometrically decreasing error allowances whose sum is at most $\delta/2$. On the target instance, each
trial returns the optimal arm with probability at least $3/4$, so the probability of reaching
later trials decreases geometrically while their sample limits grow only linearly with the trial
index. The repeated-trial procedure need not stop on every other instance. To guarantee
almost-sure stopping on the full class, we interleave it with an independent
$\delta/2$-correct confidence-interval procedure and return the first answer. This at most
doubles the target-instance cost, up to one sample, while the two error guarantees together give
$\delta$-correctness.

\subsubsection{The uniform upper bound}

The preceding instance-wise construction uses the instance's gap distribution to choose its testing
scales and error allocation. We now construct one algorithm, given only \(K\) and \(\delta\), that
must discover this information from the data. It runs independent stages with geometrically
increasing budgets $
 B_t=100^t$. 
Each stage begins with all \(K\) arms and either returns an arm or rejects and passes control to the
next stage. A deterministic sample limit of order \(B_t\) bounds the cost of the stage on every
sample path.

Within a stage, round \(r\) works at accuracy $
 \varepsilon_r=2^{-r}$. 
A Median call selects an arm whose mean is within \(\varepsilon_r/8\) of the largest active mean,
with probability at least \(0.99\). The algorithm estimates this arm's mean and uses a Fraction
call to determine whether sufficiently many active arms lie substantially below it. If so, an
Eliminate call removes arms with low estimated means; otherwise the active set is unchanged. The
stage returns as soon as only one arm remains. It rejects if either of its two budget tests is
triggered, its final permitted round is completed without a singleton, or its sample limit is
reached.

For an active set \(S_r\), the sampling-cost scale at accuracy \(\varepsilon_r\) is
\[
 w_r=|S_r|\varepsilon_r^{-2}=|S_r|4^r.
\]
This observable quantity replaces the unknown gap-group weights used by
the instance-wise algorithm. Set \(\eta=c\delta\), where \(c>0\) is a
sufficiently small absolute constant. An Eliminate call in round \(r\)
receives an error allowance proportional to \(w_r/B_t\). The stage
controls both the sum of these allowances and the cumulative sum of
\[
 z_r=w_r\log\frac{B_t}{\eta w_r}.
\]
Up to an absolute constant, \(z_r\) bounds the expected cost of the
Median and Eliminate calls in that round.

The main cost estimate controls how long each suboptimal arm remains
active. In the comparison process of \zcref{sec:uniform-analysis},
if Median succeeds in round \(r\) and every Eliminate call through that
round is valid, the next active set has size at most an absolute
constant times
\[
 1+|\{i\ne *:\Delta_i\le2\varepsilon_r\}|.
\]
If Median fails for \(j\) additional rounds, the probability of that delay is at most
\(0.01^j\), whereas the sampling scale grows by \(4^j\). The series
\[
 \sum_{j\ge0}(4\cdot0.01)^j
\]
is finite, so these delays change the expected cost by only an absolute factor. Consequently, arms with gaps of order \(2^{-r}\) contribute mainly near
round \(r\). For \(B_t\ge H\), sum \(z_r\) over executed rounds up to and
including the first invalid Eliminate call, or until an earlier stage
termination. The active-set and entropy estimates in
\zcref{lem:uniform-charging} give
\[
 \E\sum_r z_r
 \le CH\left[\log(1/\delta)+\Ent(I)+1+\log(B_t/H)\right].
\]

The outer Mean and Fraction calls require a separate summation. The separation between the two
fraction thresholds is indexed by the number of rounds remaining. Reindexing from the final round
turns the resulting polynomial factor into a summable correction to the geometric round costs; see
\zcref{lem:uniform-aux}. The resulting sample costs also grow logarithmically with the stage
index. The calculation in \zcref{lem:uniform-success} accounts for
this dependence through the term \(CH(\log(1/\delta)+\Ent(I))\)
together with
\[
 CD\log\log(e^e/g).
\]

Correctness across stages requires more than a direct union bound,
because each stage receives the same total budget for elimination
errors. We again use the comparison process. For a budget \(B\), call
the optimal arm and every suboptimal arm satisfying
\(\Delta_i^{-2}>B\) hard; call the remaining suboptimal arms easy.
A wrong output requires the optimal arm and all but possibly one of
the hard suboptimal arms to be removed. Every removal of a hard
suboptimal arm occurs before its gap scale is reached. The product
guarantee for Eliminate controls these joint premature removals.
When there are few hard arms, a localized estimate instead uses
\[
 B^{-1}\sum_{i:\,i\text{ easy}}\Delta_i^{-2}.
\]
When this ratio is large, an exponential-moment argument shows that
acceptance itself is unlikely. Together with the bound on disagreement
between the actual and comparison processes, these estimates control
the total error across stages; see \zcref{sec:uniform-analysis}.

Finally, \zcref{lem:uniform-success} shows that a stage accepts with probability at least
\(199/200\) once
\[
 B_t\ge
 C\left\{
 H(\log(1/\delta)+\Ent(I))
 +D\log\log(e^e/g)
 \right\}.
\]
Beyond this point, the probability of reaching the next stage decreases by a factor at most
\(1/200\), while its sample limit increases by a factor of \(100\). The successive expected costs
therefore decrease by a factor at most \(1/2\). This proves both almost-sure stopping and the
claimed bound on the expected sample size.  The construction modifies
\cite[algorithms 4--5]{CLQ17} and is specified in \zcref{sec:algorithm}; its correctness and cost
are proved in \zcref{sec:uniform-analysis,sec:expected-cost}.

\section{The instance-wise lower bound}\label{sec:lower}
Throughout this section the instance belongs to $\mathcal G_K$, and $\delta$-correctness is
required on $\mathcal G_K$. We prove the lower bound in \zcref{thm:instance}. The term $H\log(1/\delta)$
follows from the usual Gaussian change of measure. For the entropy term, we compare events defined by the number of samples from a selected arm across
different gap groups, using a common law with two tied optimal arms.

The conversion of these sample-count bounds into an entropy lower bound parallels the entropy lower-bound argument of Chen, Li and Qiao \cite[lemma 4.1 and appendix D]{CLQ17}. The additional issue here is to
construct a common reference law while evaluating all expected sample counts on permutations of the
original instance.

The proof of the following proposition appears at the end of this section. 

\begin{proposition}\label{prop:explicit-lower}
For every $I\in\mathcal G_K$ and $0<\delta<0.1$, write
$b_\delta=\kl(1-\delta,\delta)$. Then
\begin{equation}\label{eq:lower-stronglower}
 \LL(I,\delta)\ge\max\{2b_\delta H,\ H\Ent(I)/32\},
\end{equation}
where $\kl(p,q)=p\log(p/q)+(1-p)\log((1-p)/(1-q))$.
\end{proposition}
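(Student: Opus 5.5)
I will prove the two lower bounds separately and then take the maximum. Both arguments start from the same reduction. Fix an algorithm $A$ that is $\delta$-correct on $\mathcal G_K$. Let $\tilde A$ be the randomly relabeled algorithm described in the introduction: apply a uniform random permutation to the labels, run $A$, and undo the permutation on the output. Then $\tilde A$ is $\delta$-correct, is invariant under relabeling, and satisfies $\E_I T_{\tilde A}=\frac1{K!}\sum_\pi\E_{\pi I}T_A$. It therefore suffices to lower-bound $\E_I T_{\tilde A}$ for a symmetric algorithm. Write $u_i=\E_I N_i$ for its expected number of samples from arm $i$.

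\textbf{The $2b_\delta H$ term.} For a suboptimal arm $i$, consider the alternative instance $I'$ in which only $\mu_i$ is raised to $m+\varepsilon$, with $m+\varepsilon\le 1$ or after a harmless shift of all means. Then $i$ is uniquely optimal in $I'$. Correctness on both $I$ and $I'$, combined with the transportation lemma \cite[lemma 1]{KCG16}, gives
\[
 u_i(\Delta_i+\varepsilon)^2/2\ge\kl(1-\delta,\delta)=b_\delta .
\]
Letting $\varepsilon\downarrow0$ yields $u_i\ge 2b_\delta\Delta_i^{-2}$. Summing over $i\ne *$ gives $\E T\ge 2b_\delta H$.

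The boundary issue arises when $m=1$. Since the parameter space is $[0,1]$, we cannot raise $\mu_i$ above $1$. In that case I would instead lower $\mu_*$ to $\mu_i-\varepsilon$. The cost is then split between arms $*$ and $i$, and I would need to check that the constant $2$ survives. Averaging over permutations is harmless here, because the bound holds for each labeling.

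\textbf{The entropy term.} I will follow the sketch in the introduction. Define $\alpha_r=\sum_{i\in G_r}u_i/(|G_r|4^r)$. For $a\ge1$ and each $r$ with $\alpha_r\le a$, let $P_r$ be the symmetrized law with the optimal arm at label $1$ and a uniform arm of $G_r$ at label $2$. Define the event $E_r(a)=\{\text{output }1,\ c_04^r\le N_2\le 4a4^r\}$. I then show $P_r(E_r(a))\ge1/2$:
\begin{itemize}
\item Markov's inequality bounds the upper tail by $1/4$.
\item Correctness bounds the probability of a wrong output by $\delta<0.1$.
\item For the lower tail, I compare with the tied law $Q_r$ up to the stopping time $\min(\tau,\text{time }N_2\text{ reaches }c_04^r)$. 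The KL divergence is at most $Cc_0$. Under $Q_r$, the probability of outputting $1$ is at most about $\delta$, by taking limits of admissible instances in which label $2$ is slightly better. The data-processing inequality then gives a small bound on $P_r(\text{output }1,\ N_2<c_04^r)$.
\end{itemize}

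Next I build the common law $Q_s$ with $s=\min\{r:\alpha_r\le a\}$, using the swap $(\mu_i,\mu_j)\mapsto(m,\mu_i)$. The chain rule for Gaussian KL, applied to the two changed arms, gives $\KL(P_r,Q_s)\le \E N_2\Delta_i^2/2+\E N_j(\mu_j-\mu_i)^2/2$. The swap averages to uniform labels on $3,\dots,K$, so symmetry lets me express these expectations through $u$. This gives a bound of $(\alpha_r+\alpha_s)\cdot O(1)\le Ca$; matching the constant $1/2$ needs $\Delta\le 2^{-r}$ and $|\mu_j-\mu_i|\le 2^{-s}$.

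From there, the Bretagnolle–Huber or data-processing inequality gives $Q_s(E_r(a))\ge e^{-2a}/4$. Events for $r$ separated by more than about $\log_4(4a/c_0)$ are disjoint, so $|\{r:\alpha_r\le a\}|\le Ca\,e^{2a}$.

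\textbf{From counting to entropy.} Since $\alpha_r>1$ by the per-arm bound, I integrate this count against $e^{-8a}$. Concretely,
\[
 \sum_r e^{-8\alpha_r}=\sum_r\int_{\alpha_r}^\infty 8e^{-8a}\,da\le\int_1^\infty 8e^{-8a}\,Cae^{2a}\,da ,
\]
and the right-hand side should be at most $1$ once the constants are tracked. Gibbs's inequality with $q_r\propto e^{-8\alpha_r}$ then gives $H\Ent(I)\le\sum_rH_r\cdot8\alpha_r$. Finally, $H_r\le 4\cdot 4^r|G_r|$ turns this into $H\Ent(I)\le 32\sum_i u_i\le32\,\E T$.

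The main obstacle is the lower-tail step under the inadmissible tied law. The algorithm need not stop there, so I will work only with finite-horizon truncated histories and pass to the limit carefully. A second difficulty is making the KL constants come out so that the integral is at most $1$.
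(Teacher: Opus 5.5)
Your outline follows the paper's proof essentially step for step: symmetrization, the window events $E_r(a)$, the tied law for the lower tail, and the common law $Q_s$ built from the swap $(\mu_i,\mu_j)\mapsto(m,\mu_i)$ with $\KL\le(\alpha_r+\alpha_s)/2\le a$. The rest also matches, namely the count $N(a)\le32ae^{2a}$, the integral $\int_1^\infty256\,ae^{-6a}\,da=\tfrac{448}{9}e^{-6}<1$, and Gibbs. The step that fails as written is the per-arm bound when $m=1$.

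Lowering $\mu_*$ to $\mu_i-\varepsilon$ changes only the law of arm $*$. The change-of-measure inequality then reads $\tfrac12u_*(\Delta_i+\varepsilon)^2\ge b_\delta$, which bounds $u_*$, not $u_i$; nothing is charged to arm $i$. The ``harmless shift of all means'' is not available either. Here $u_i$ refers to the fixed instance $I$, and every alternative must keep its means in $[0,1]$.

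This gap affects both halves of the proposition. The per-arm bound also supplies $\alpha_r\ge2b_\delta>1$, which you need to start the integral at $a=1$. The fix is to put the full change on arm $i$ and only a vanishing change on arm $*$. Raise $\mu_i$ to $1$ and lower $\mu_*$ to $1-\varepsilon$; all other means are at most $1-g$, so $i$ becomes the unique optimum. Then $\tfrac12\bigl(u_i\Delta_i^2+u_*\varepsilon^2\bigr)\ge b_\delta$. Letting $\varepsilon\downarrow0$ gives $u_i\ge2b_\delta\Delta_i^{-2}$ with the constant $2$ intact. This uses $u_*<\infty$, which you may assume, since otherwise $\E_IT=\infty$.

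Three smaller points need care when you write this out.

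First, in the lower-tail step, censor at the moment the algorithm selects label $2$ for the $\lceil c_04^r\rceil$-th time, before that reward is revealed. If you stop after it, then for small $r$ (e.g.\ $r=0$, where $c_04^r<1$) a single reward already contributes KL between $1/8$ and $1/2$. Pinsker then no longer gives a small bound.

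Second, for the transfer to $Q_s$, Bretagnolle--Huber is too weak when you only know $P_r(E_r(a))\ge1/2$: it yields $Q_s(E_r(a))\ge\tfrac12e^{-a}-\tfrac12$. Use $\kl(p,q)\ge p\log(1/q)-\log2$ instead, which is what produces your constant $e^{-2a}/4$.

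Third, the case $r=s$ needs its own construction that raises only label $2$. A uniform $j\in G_s=G_r$ may coincide with the identity already placed at label $2$.
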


Since $b_\delta=(1-2\delta)\log((1-\delta)/\delta)\asymp\log(1/\delta)$
uniformly over the stated confidence range, the lower half of \zcref{thm:instance} follows.

\subsection{Symmetry and changes of measure}
Fix $I\in\mathcal G_K$ and let $A$ be an arbitrary algorithm that is
$\delta$-correct on $\mathcal G_K$.
Randomly permuting labels before running $A$, and undoing the permutation on its answer, gives a
permutation-equivariant algorithm $B$ such that
\begin{equation}\label{eq:lower-sym}
 \E_{\pi I}T_B=\frac1{K!}\sum_{\rho\in S_K}\E_{\rho I}T_A
 \quad\text{for every }\pi.
\end{equation}
The algorithm $B$ is still $\delta$-correct. It is useful to distinguish an arm's identity from its
label: the identities refer to the fixed instance, whereas the labels specify where the arms are
presented to the algorithm. Distinct identities are retained even when their means coincide. Let
$u_i$ be the expected number of samples from identity $i$ under $B$. By permutation equivariance, the expected number of samples allocated to a given arm identity
does not depend on the label at which that identity is placed. Thus, for every permutation
$\pi$,
\[
 \E_{\pi I}N_{\pi(i)}=u_i.
\] 
Consequently, averaging over any of the random labelings used below does not alter these
expectations.

If the right side of \eqref{eq:lower-sym} is infinite there is nothing to prove. Henceforth assume
$\E_I T_B<\infty$, and suppress the subscript $B$.

We use the standard finite-horizon likelihood-ratio calculation underlying the bandit
change-of-measure inequality; see \cite[lemma 1 and appendix A.1]{KCG16}. For a deterministic integer $M$, write $P_\mu^{(M)}$
for the distribution of the recorded history up to $T\wedge M$ under mean vector $\mu$.
For two fixed mean vectors $\mu,\lambda$, the chain rule gives
\begin{equation}\label{eq:lower-chain}
 \KL(P_\mu^{(M)}\Vert P_\lambda^{(M)})
 =\frac12\sum_i(\mu_i-\lambda_i)^2
          \E_\mu N_i(T\wedge M).
\end{equation}
Here $N_i(t)$ counts samples from the arm at label $i$ by time $t$. Write $N_i=N_i(T)$
for its total number of samples, and let $\widehat i$ denote the returned label. On
$\{T=\infty\}$, set $\widehat i=0$ and interpret $N_i(T)$ as
$\lim_{t\to\infty}N_i(t)$. The history at time $T\wedge M$ records whether the algorithm
has stopped and, if so, its output, as well as whether truncation occurred. Since the
algorithm uses the same sampling, randomization, and stopping rules under both models,
these decisions do not contribute  additional KL divergence.

We will also use \eqref{eq:lower-chain} after truncation at bounded stopping times and after
introducing auxiliary random variables having the same law under both models; these follow
from the same chain-rule calculation and data processing. Whenever we use a tied reference
law, we first work at a fixed finite horizon and only then pass to the limit. Thus we never
need to assume that the algorithm stops when the two arms are tied.

We first derive the standard lower bound for the expected number of samples from a single
suboptimal arm; see \cite[lemma 1]{KCG16}. We claim that 
\begin{equation}\label{eq:lower-perarm}
 u_i\ge2b_\delta\Delta_i^{-2},\qquad
 \E_I T\ge2b_\delta H.
\end{equation}
For an alternative instance with a different optimal
arm, apply data processing to the event that the algorithm returns the original optimal arm to obtain
\begin{equation}\label{eq:lower-ordinaryKL}
 \frac12\sum_i u_i(\mu_i-\lambda_i)^2\ge b_\delta.
\end{equation}
(To justify \eqref{eq:lower-ordinaryKL} from \eqref{eq:lower-chain}, first use the event
$\{T\le M,\widehat i=*\}$ and then let $M\to\infty$.) If $m<1$, raise suboptimal arm $i$ to $m+\varepsilon$
and send $\varepsilon\downarrow0$ in \eqref{eq:lower-ordinaryKL} to get \eqref{eq:lower-perarm}. If $m=1$, instead raise arm $i$ to
$1$ and lower the original optimal arm to $1-\varepsilon$, with $0<\varepsilon<g$, and take $\varepsilon \rightarrow 0$ in \eqref{eq:lower-ordinaryKL}, using $u_*<\infty$.

For each nonempty gap group $r\in\mathcal U$, put
\begin{equation}\label{eq:lower-alpha}
 n_r=|G_r|,\qquad t_r=\sum_{i\in G_r}u_i,
 \qquad \alpha_r=\frac{t_r}{n_r4^r}.
\end{equation}
Since $\Delta_i^{-2}\ge4^r$ in $G_r$, \eqref{eq:lower-perarm} implies
\begin{equation}\label{eq:lower-alphaone}
 \alpha_r\ge2b_\delta>1.
\end{equation}

\subsection{A common reference law}
For $r\in\mathcal U$, let $P_r$ be the law of the algorithm's run under
the following random relabeling of the original instance $I$, with all
arm distributions left unchanged: the optimal arm has label $1$; an identity $i$ chosen uniformly
from $G_r$ has label $2$; the other identities are placed uniformly on labels $3,\ldots,K$. Then
\begin{equation}\label{eq:lower-Pr}
 P_r(\widehat i=1)\ge1-\delta,\qquad
 \E_{P_r}N_2=\alpha_r4^r.
\end{equation}
The expectation identity follows from permutation equivariance, since the algorithm is not told which
instance identities occupy labels $1$ and $2$.

Under $P_r$, raising only the mean at label $2$ to $m$ would produce a
tied law whose remaining suboptimal arms omit the chosen identity from
$G_r$; this law would therefore still depend on $r$. To obtain a common
reference law, fix $s\in\mathcal U$ and define $Q_s$ as follows. Give
labels $1$ and $2$ mean $m$, choose an identity $j$ uniformly from
$G_s$, and assign all original suboptimal identities other than $j$
uniformly to labels $3,\ldots,K$. Thus $Q_s$ has exactly two tied
optimal arms and does not depend on $r$. Although this tied law lies
outside the unique-optimum model class, its finite-time answer
probabilities are controlled by perturbations with a unique optimum,
as the next lemma shows. Let $P_r^{(M)}$ and $Q_s^{(M)}$ denote the
corresponding laws of the recorded histories up to $T\wedge M$.

\begin{lemma}\label{lem:lower-tie}
For every $s\in\mathcal U$, $
 Q_s(T<\infty,\widehat i=1)\le\delta$.
\end{lemma}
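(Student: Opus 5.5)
We prove the bound at each finite horizon $M$, i.e.\ for the event $\{T\le M,\widehat i=1\}$, and then let $M\to\infty$ by monotone convergence. Fix $M$ and a realization of the random relabeling that defines $Q_s$, namely the choice of $j$ and the placement of the remaining identities. Under this realization the mean vector $\lambda$ has $\lambda_1=\lambda_2=m$ and other coordinates equal to original suboptimal means, all strictly below $m$. For $\varepsilon\in(0,g)$, let $\lambda^\varepsilon$ agree with $\lambda$ except that $\lambda^\varepsilon_1=m-\varepsilon$. Then label $2$ is the unique optimum of $\lambda^\varepsilon$, and all means stay in $[0,1]$ provided $m-\varepsilon\ge0$. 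If $m=0$ there are no suboptimal arms, so this case does not arise; in general $m\ge g>\varepsilon$. Hence $\lambda^\varepsilon\in\mathcal G_K$, and $\delta$-correctness of $B$ gives
\[
P_{\lambda^\varepsilon}(T\le M,\widehat i=1)\le P_{\lambda^\varepsilon}(\widehat i\ne 2)\le\delta .
\]

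Next we pass from $\lambda^\varepsilon$ to $\lambda$ at the fixed horizon $M$. The event $\{T\le M,\widehat i=1\}$ is measurable with respect to the recorded history up to $T\wedge M$. Its probability is a function of the law of at most $M$ Gaussian samples together with the algorithm's internal randomization. By \eqref{eq:lower-chain},
\[
\KL\bigl(P_\lambda^{(M)}\Vert P_{\lambda^\varepsilon}^{(M)}\bigr)=\tfrac12\varepsilon^2\,\E_\lambda N_1(T\wedge M)\le \tfrac12\varepsilon^2 M .
\]
This tends to $0$ as $\varepsilon\downarrow0$. By Pinsker's inequality the total variation between the two truncated history laws tends to zero, so
\[
P_\lambda(T\le M,\widehat i=1)\le \delta+\sqrt{\varepsilon^2M/4}\to\delta .
\]
Alternatively, one can use continuity of the finite-dimensional Gaussian likelihood in the means. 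The key point is that $M$ stays fixed while $\varepsilon\to0$, so no stopping under the tied law is needed.

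Averaging this bound over the random relabeling that defines $Q_s$ gives $Q_s(T\le M,\widehat i=1)\le\delta$ for every $M$. Letting $M\to\infty$ yields $Q_s(T<\infty,\widehat i=1)\le\delta$.

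The only delicate point is the order of limits. We must not let $M\to\infty$ before $\varepsilon\to0$, because the KL bound involves $\E_\lambda N_1(T\wedge M)$, which may be unbounded under the tied law where the algorithm need not stop. Keeping $M$ fixed avoids this. The boundary case $m=1$ causes no trouble because we lower arm $1$ rather than raising arm $2$.
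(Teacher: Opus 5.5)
Your proposal is correct and follows essentially the same route as the paper: lower the mean at label $1$ to $m-\varepsilon$ so that label $2$ is uniquely optimal, use $\delta$-correctness at a fixed horizon $M$, bound the truncated KL divergence by $M\varepsilon^2/2$, apply Pinsker's inequality to get $\delta+\varepsilon\sqrt M/2$, and let $\varepsilon\downarrow0$ before $M\to\infty$. The only differences are cosmetic. You condition on the relabeling and average at the end, whereas the paper works directly with the mixture laws. You also take $\varepsilon<g$ rather than $\varepsilon<m$, which is valid since $g\le m$.
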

\begin{proof}
Since the optimal arm is unique and all means are nonnegative, $m>0$.
For $0<\varepsilon<m$, let $Q_{s,\varepsilon}$ be obtained from $Q_s$
by lowering the mean at label $1$ from $m$ to $m-\varepsilon$. Every
component of $Q_{s,\varepsilon}$ has label $2$ as its unique optimal
arm. Therefore, for every fixed $M$,
\[
 Q_{s,\varepsilon}(T\le M,\widehat i=1)\le\delta.
\]
The KL divergence between the laws under $Q_{s,\varepsilon}$ and $Q_s$
of the history up to $T\wedge M$ is at most $M\varepsilon^2/2$.
Pinsker's inequality consequently gives
\[
 Q_s(T\le M,\widehat i=1)
 \le \delta+\frac{\varepsilon\sqrt M}{2}.
\]
First let $\varepsilon\downarrow0$ with $M$ fixed, and then let
$M\to\infty$ to obtain the claim.
\end{proof}

\begin{lemma}\label{lem:lower-couple}
For $r,s\in\mathcal U$ with $s\le r$ and every finite horizon $M$,
\begin{equation}\label{eq:lower-couple}
 \KL(P_r^{(M)}\Vert Q_s^{(M)})
 \le\frac{\alpha_r+\alpha_s}{2}.
\end{equation}
When $r=s$, the sharper bound $\alpha_s/2$ holds.
\end{lemma}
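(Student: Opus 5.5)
Both $P_r$ and $Q_s$ are mixtures over the random relabelings (and the auxiliary choice of $j$). The plan is to couple the mixing variables so that each component of $P_r$ is paired with a single fixed-mean component of $Q_s$. We then use joint convexity of KL to reduce to fixed mean vectors, and apply the chain rule \eqref{eq:lower-chain} to each pair.

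\emph{Coupling.} For $P_r$, draw $i$ uniformly from $G_r$ and draw a uniform placement of the remaining identities on labels $3,\ldots,K$. Independently draw $j$ uniformly from $G_s$.
\begin{itemize}
\item If $s<r$, then $j\ne i$. Build the $Q_s$ configuration by raising label $2$ to mean $m$ and replacing the mean $\mu_j$ at $j$'s label with $\mu_i$. The resulting arrangement places the multiset of suboptimal identities other than $j$ on labels $3,\ldots,K$. By symmetry this arrangement is uniform: relabel identity $i$ as occupying $j$'s slot, and note that the composite map is measure preserving. The pair $(i,j)$ is uniform on $G_r\times G_s$, so after averaging, the distribution of configurations is exactly the mixture defining $Q_s$.
\item If $r=s$, take $j=i$ and only raise label $2$.
\end{itemize}
Joint convexity of KL (equivalently, data processing applied to the joint law of mixing variables and history, with the mixing variable having the same marginal on both sides) gives
\[
\KL(P_r^{(M)}\Vert Q_s^{(M)})\le \E_{(i,j,\sigma)}\KL\bigl(P_{\mu}^{(M)}\Vert P_{\lambda}^{(M)}\bigr),
\]
where $\mu$ and $\lambda$ are the paired fixed mean vectors.

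\emph{Chain rule per component.} By \eqref{eq:lower-chain}, the component divergence is
\[
\tfrac12\Delta_i^2\,\E_\mu N_2(T\wedge M)+\tfrac12(\mu_j-\mu_i)^2\,\E_\mu N_{\ell_j}(T\wedge M),
\]
where $\ell_j$ is $j$'s label. Use $N(T\wedge M)\le N(T)$. Since $i\in G_r$ we have $\Delta_i\le 2^{-r}$. Since $s\le r$, $|\mu_j-\mu_i|\le\Delta_j\le 2^{-s}$ because both gaps lie in $(0,\Delta_j]$ (indeed $\Delta_i\le 2^{-r}\le 2^{-s}$ and $\Delta_j\le2^{-s}$, so $|\Delta_j-\Delta_i|\le 2^{-s}$).

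Averaging over the mixing variables and using permutation equivariance, $\E N_2$ averaged over $i$ is $t_r/n_r$, and $\E N_{\ell_j}$ averaged over $j$ is $t_s/n_s$. The latter holds because $j$ is chosen independently and uniformly from $G_s$, and $\E_{\pi I}N_{\pi(j)}=u_j$ for every fixed labeling.

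The bound is therefore
\[
\tfrac12 4^{-r}\,t_r/n_r+\tfrac12 4^{-s}\,t_s/n_s=(\alpha_r+\alpha_s)/2 .
\]
When $r=s$ only the first term appears, which gives $\alpha_s/2$.

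\emph{Main obstacle.} The delicate step is verifying that the coupled configuration has exactly the law $Q_s$, and that the averages of $\E N$ factor as claimed even though $i$ and $j$ are paired. This uses the fact that $j$ is independent of the $P_r$ labeling, so conditioning on the labeling and then averaging over $j$ gives $t_s/n_s$ by equivariance. A secondary point is the truncation at $T\wedge M$, which is handled by monotonicity of the counts.
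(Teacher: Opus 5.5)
Your proposal is correct and follows the paper's proof essentially step for step. It uses the same modification $(\mu_i,\mu_j)\mapsto(m,\mu_i)$ with the bijection on arrangements to realize $Q_s$, the same reduction to fixed mean vectors by adjoining the mixing variables (your joint-convexity step), the chain rule \eqref{eq:lower-chain} with $\Delta_i^2\le 4^{-r}$ and $(\Delta_j-\Delta_i)^2\le 4^{-s}$, equivariance to identify the averaged counts as $t_r/n_r$ and $t_s/n_s$, and the same single raise of label $2$ when $r=s$.
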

\begin{proof}
Suppose first that $s<r$. Start from the random labeling defining
$P_r$: choose $i\in G_r$ uniformly, place it at label $2$, and arrange
the remaining identities uniformly on the other labels. Independently
choose $j\in G_s$ uniformly. Modify two means: raise the mean at label
$2$ from $\mu_i$ to $m$, and change the mean at the label occupied by
$j$ from $\mu_j$ to $\mu_i$. The first change makes labels $1$ and $2$
tied for the largest mean, while the second replaces the omitted
identity $i$ among labels $3,\ldots,K$ by the omitted identity $j$.

We verify that the modified model has law $Q_s$. Conditional on $i$ and
$j$, the identities on labels $3,\ldots,K$ are initially a uniform
arrangement of all suboptimal identities except $i$. Replacing $j$ by
$i$ gives a bijection from these arrangements to the arrangements of
all suboptimal identities except $j$. The latter arrangement is
therefore uniform and independent of $i$ and $r$. Averaging over the
uniform choice of $j\in G_s$ gives $Q_s$. This identity-level argument
also applies when distinct arms have equal means.

Adjoin the choices of $i,j$ and the random labeling to the histories
under both models. Conditional on these variables, the two models
differ only at label $2$ and at the label occupied by $j$.
Equation~\eqref{eq:lower-chain} therefore bounds their conditional KL
divergence by
\[
 \frac12\E_{P_r}\left[
 \Delta_i^2 N_2(T\wedge M)
 +(\Delta_j-\Delta_i)^2
   N_{\operatorname{label}(j)}(T\wedge M)
 \right].
\]
Forgetting the auxiliary variables can only decrease KL divergence.
Since $s<r$, the  definitions of $G_r$ and $G_s$ give
\[
 0<\Delta_i<\Delta_j,\qquad
 \Delta_i^2\le4^{-r},\qquad
 (\Delta_j-\Delta_i)^2\le4^{-s}.
\]
Moreover, equivariance and uniform choice within the two groups give
expected total sample counts $t_r/n_r$ for $i$ and $t_s/n_s$ for $j$.
Hence
\[
 \KL(P_r^{(M)}\Vert Q_s^{(M)})
 \le\frac12\left(
      4^{-r}\frac{t_r}{n_r}
      +4^{-s}\frac{t_s}{n_s}\right)
 =\frac{\alpha_r+\alpha_s}{2}.
\]

When $r=s$, raise only the mean at label $2$ from $\mu_i$ to $m$. The
resulting marginal law is $Q_s$, and
\[
 \KL(P_s^{(M)}\Vert Q_s^{(M)})
 \le\frac12\,4^{-s}\E_{P_s}N_2
 =\frac{\alpha_s}{2}.
\]
\end{proof}

\subsection{Sample counts and entropy}
Recall that under $P_r$, label $2$ contains an arm chosen uniformly
from $G_r$, and
\[
 \E_{P_r}N_2=\alpha_r4^r.
\]
Set $c_0=1/256$. For $a\ge1$ and $r\in\mathcal U$ with
$\alpha_r\le a$, define
\begin{equation}\label{eq:lower-window}
 E_r(a)=\{T<\infty,\ \widehat i=1,\
               c_04^r\le N_2\le4a4^r\}.
\end{equation}

\begin{lemma}\label{lem:lower-window}
If $\alpha_r\le a$, then $
 P_r(E_r(a))\ge 1/2 $.
\end{lemma}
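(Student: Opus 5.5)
We bound the complement of $E_r(a)$ by three pieces:
\[
 P_r(E_r(a)^c)\le P_r(\widehat i\ne1\text{ or }T=\infty)+P_r(N_2>4a4^r)+P_r(T<\infty,\ \widehat i=1,\ N_2<c_04^r).
\]
By \eqref{eq:lower-Pr} the first term is at most $\delta<0.1$. By Markov's inequality and $\E_{P_r}N_2=\alpha_r4^r\le a4^r$, the second is at most $1/4$. It therefore suffices to show the third term is at most $0.15$, say.

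For the third term we use a stopped change of measure against the tied law obtained from $P_r$ by raising only label $2$ to $m$. This is exactly the law $Q_r$ appearing in the case $r=s$ of \zcref{lem:lower-couple}. Let $\sigma$ be the stopping time at which label $2$ receives its $\lceil c_04^r\rceil$-th sample, and set $\tau=T\wedge\sigma\wedge M$ for a finite horizon $M$. The event $F_M=\{T\le M,\ \widehat i=1,\ N_2<c_04^r\}$ is determined by the history up to $\tau$. On $F_M$ the algorithm stops before $\sigma$ and before $M$, so the stopped history records the stop and the answer.

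Apply the chain rule \eqref{eq:lower-chain} at the bounded stopping time $\tau$, with the identity $i$ and the labeling adjoined as auxiliary variables. Conditionally, the models differ only at label $2$, by $\Delta_i\le2^{-r}$. Since $N_2(\tau)\le c_04^r+1$, the KL divergence between the stopped histories is at most
\[
 \tfrac12 4^{-r}(c_04^r+1)\le c_0/2+\tfrac12 4^{-r}\le c_0,
\]
provided $4^r\ge1/c_0$. For small $r$ one needs care. A cleaner choice is to cap the count at $c_04^r$ directly, with $\sigma$ the time label $2$ is about to exceed $c_04^r$ samples, so that $N_2(\tau)\le c_04^r$. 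If $c_04^r<1$ the event $N_2<c_04^r$ forces $N_2=0$, and then the KL divergence is $0$. Either way the divergence is at most $c_0/2$.

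Pinsker's inequality then gives
\[
 P_r(F_M)\le Q_r(F_M)+\sqrt{c_0/4}\le Q_r(T<\infty,\ \widehat i=1)+1/32.
\]
By \zcref{lem:lower-tie} with $s=r$, the last probability is at most $\delta$. Letting $M\to\infty$, monotone convergence bounds the third term by $\delta+1/32<0.14$.

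Summing the three pieces gives
\[
 P_r(E_r(a)^c)\le0.1+0.25+0.14<1/2.
\]
The main obstacle is justifying the stopped-time KL bound rigorously: we need that $F_M$ is measurable with respect to the history stopped at $\tau$, and that the per-label expected count is bounded by the cap. The per-label bound holds pathwise, since $N_2(\tau)\le c_04^r$. The paper's remark that \eqref{eq:lower-chain} extends to bounded stopping times and to auxiliary variables covers this, so the data-processing step goes through.
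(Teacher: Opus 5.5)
Your argument is correct and is essentially the paper's proof: the same split of the complement into three pieces, Markov for the upper tail, and a censored-history change of measure against the tied law $Q_r$ (censoring just before label $2$'s count would exceed $c_04^r$, so the divergence is at most $c_0/2$), followed by Pinsker and \zcref{lem:lower-tie} to get the bound $\delta+1/32$. Drop your first censoring rule at the $\lceil c_04^r\rceil$-th sample, since its divergence bound $c_0/2+4^{-r}/2$ fails for $r\le3$; the corrected rule, which records no label-$2$ rewards when $c_04^r<1$, covers every $r$.
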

\begin{proof}
Put $L=c_04^r$ and $n=\lceil L\rceil-1$. Since $N_2$ is integer,
\[
 N_2<L\quad\Longleftrightarrow\quad N_2\le n.
\]
Fix a total horizon $M$. Starting from the history under $P_r$, stop
recording at the first of the following times: when the algorithm
stops, when it has taken $M$ samples in total, or immediately after it
selects label $2$ for the $(n+1)$st time and before the corresponding
reward is revealed. Record which of these three stopping conditions
occurred. The resulting history contains at most $n$ rewards from
label $2$ and determines the event
\[
 A_M=\{T\le M,\ \widehat i=1,\ N_2\le n\}.
\]

Compare this censored history under $P_r$ with the corresponding
history under $Q_r$. Conditional on the selected identity
$i\in G_r$ and the random labeling, the two models differ only in the
mean at label $2$, which changes from $\mu_i$ to $m$. Each observed
reward from that label contributes Gaussian KL divergence
$\Delta_i^2/2$, and at most $n$ such rewards are observed. Since
$\Delta_i^2\le4^{-r}$,
\[
 \KL(\overline P_{r,M,n}\Vert\overline Q_{r,M,n})
 \le \frac12n4^{-r}
 \le\frac{c_0}{2},
\]
where $\overline P_{r,M,n}$ and $\overline Q_{r,M,n}$ denote the laws
of the censored histories. The same bound holds after averaging over
the selected identity and labeling, since forgetting these auxiliary
variables can only decrease KL divergence.

By \zcref{lem:lower-tie},
\[
 Q_r(A_M)\le Q_r(T<\infty,\widehat i=1)\le\delta.
\]
Pinsker's inequality therefore gives
\[
 P_r(A_M)
 \le Q_r(A_M)
    +\sqrt{\frac12
       \KL(\overline P_{r,M,n}\Vert\overline Q_{r,M,n})}
 \le\delta+\frac{\sqrt{c_0}}{2}.
\]
Letting $M\to\infty$ and using $c_0=1/256$, we obtain
\[
 P_r(T<\infty,\widehat i=1,N_2<c_04^r)
 \le\delta+\frac1{32}.
\]

For the upper tail, \eqref{eq:lower-Pr} and $\alpha_r\le a$ give
\[
 \E_{P_r}N_2=\alpha_r4^r\le a4^r.
\]
Hence Markov's inequality yields
\[
 P_r(N_2>4a4^r)\le\frac14.
\]
Finally, $P_r(T<\infty,\widehat i=1)\ge1-\delta$. Removing from this
event the lower- and upper-tail events bounded above gives
\[
 P_r(E_r(a))
 \ge1-\delta-\left(\delta+\frac1{32}\right)-\frac14
 =1-2\delta-\frac1{32}-\frac14>\frac12,
\]
because $\delta<0.1$.
\end{proof}

\begin{lemma}\label{lem:lower-kraft}
For $a\ge0$, let
\[
 N(a)=|\{r\in\mathcal U:\alpha_r\le a\}|.
\]
Then
\begin{equation}\label{eq:lower-count}
 N(a)\le32ae^{2a}\qquad(a\ge1),
 \qquad
 \sum_{r\in\mathcal U}e^{-8\alpha_r}\le1.
\end{equation}
\end{lemma}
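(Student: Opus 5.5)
For the counting bound, fix $a\ge1$. If $N(a)=0$ there is nothing to prove, so let $s=\min\{r\in\mathcal U:\alpha_r\le a\}$; by definition $\alpha_s\le a$.

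For every $r\in\mathcal U$ with $\alpha_r\le a$ we have $r\ge s$. \zcref{lem:lower-couple} then gives $\KL(P_r^{(M)}\Vert Q_s^{(M)})\le a$ for every finite $M$.

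Next transfer \zcref{lem:lower-window} to $Q_s$. Restrict to the finite-horizon event $E_r(a)\cap\{T\le M\}$; its $P_r$-probability exceeds $1/2-o(1)$ as $M\to\infty$. Apply the Bretagnolle--Huber-type inequality $Q(E)\ge P(E)\,e^{-(\KL(P\Vert Q)+\log 2)/P(E)}$, or equivalently data processing for $\kl$: $\kl(P(E),Q(E))\le\KL$. With $P(E)\ge1/2$ and $\KL\le a$, this yields $Q_s(E_r(a))\ge e^{-2a}/4$. The precise constant follows from $\kl(p,q)\ge p\log(1/q)-\log 2$, so $\log(1/q)\le(a+\log2)/p\le 2a+2\log 2$. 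Letting $M\to\infty$ is harmless, since the events increase in $M$.

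Now use disjointness. The windows $[c_04^r,4a4^r]$ for $r<r'$ are disjoint once $4^{r'-r}>4a/c_0=1024a$, that is, once $r'-r\ge k:=\lceil\log_4(1024a)\rceil+1$. Split $\{r\in\mathcal U:\alpha_r\le a\}$ into $k$ residue classes modulo $k$. Within one class the events $E_r(a)$ are pairwise disjoint under the single law $Q_s$, since $N_2$ cannot lie in two disjoint windows. Summing the lower bounds therefore gives at most $4e^{2a}$ indices per class. Hence $N(a)\le 4ke^{2a}$. Since $k\le \log_4(1024a)+2\le 7+\log_4 a\le 8a$ for $a\ge1$, this gives $N(a)\le32ae^{2a}$. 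I expect the constant bookkeeping here to be the only delicate point.

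For the Kraft-type sum, use \eqref{eq:lower-alphaone}: every $\alpha_r>1$, so $N(a)=0$ for $a\le1$. Write
\[
 \sum_{r\in\mathcal U}e^{-8\alpha_r}=\int_0^\infty 8e^{-8a}N(a)\,da=\int_1^\infty 8e^{-8a}N(a)\,da\le\int_1^\infty 256\,a e^{-6a}\,da.
\]
The last integral equals $256e^{-6}(1/6+1/36)=256\cdot\tfrac{7}{36}e^{-6}\approx0.12\le1$, which completes the proof.

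The main obstacle is the change-of-measure step from $P_r$ to the common law $Q_s$ with a constant as small as $e^{-2a}/4$. This requires handling the tied law at a finite horizon only: the events must be truncated by $\{T\le M\}$ before comparing, and only afterwards is the limit taken. The minimality of $s$ is exactly what guarantees that $r\ge s$ for all relevant groups, so that \zcref{lem:lower-couple} applies.
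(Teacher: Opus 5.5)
Your proposal is correct and matches the paper's proof step for step. Both use the choice $s=\min\{r\in\mathcal U:\alpha_r\le a\}$ with \zcref{lem:lower-couple}, and both transfer \zcref{lem:lower-window} to $Q_s$ at a finite horizon via $\kl(p,q)\ge p\log(1/q)-\log 2$ to get $Q_s(E_r(a))\ge e^{-2a}/4$. Both then use disjointness of the windows within residue classes modulo $\lceil\log_4(1024a)\rceil+1$ and the layer-cake integral bounded by $\frac{448}{9}e^{-6}<1$. One small point: the Bretagnolle--Huber-type form you quote first, with the extra factor $P(E)$, would only give $e^{-2a}/8$. You rightly rely on the sharper $\kl$ bound to obtain the stated constant $32$.
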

\begin{proof}
Fix $a\ge1$. If $N(a)=0$, the first bound is immediate. Otherwise let
\[
 s=\min\{r\in\mathcal U:\alpha_r\le a\}.
\]
Every index counted by $N(a)$ satisfies $r\ge s$. Since both
$\alpha_r$ and $\alpha_s$ are at most $a$,
\zcref{lem:lower-couple} gives
\[
 \KL(P_r^{(M)}\Vert Q_s^{(M)})\le a
\]
for every finite horizon $M$.

We use this inequality to transfer the event $E_r(a)$ from $P_r$ to
the common law $Q_s$. Set
\[
 B_{r,M}=E_r(a)\cap\{T\le M\},\qquad
 p_M=P_r(B_{r,M}),\qquad q_M=Q_s(B_{r,M}).
\]
The event $B_{r,M}$ is determined by the history up to $T\wedge M$, so
data processing and the binary-divergence bound
\[
 \kl(p,q)\ge p\log(1/q)-\log2
\]
give
\[
 a\ge\kl(p_M,q_M)
   \ge p_M\log(1/q_M)-\log2.
\]
As $M\to\infty$, $ p_M $ increases to $P_r(E_r(a))$, which satisfies 
\[
P_r(E_r(a))\ge\frac12
\]
by \zcref{lem:lower-window}, while
$q_M$ increases to $Q_s(E_r(a))$. Therefore
\[
 \log\frac1{Q_s(E_r(a))}
 \le2a+2\log2,
\]
and hence
\begin{equation}\label{eq:lower-nullmass}
 Q_s(E_r(a))\ge\frac14e^{-2a}.
\end{equation}

It remains to use the fact that sufficiently separated indices impose
disjoint conditions on the same random variable $N_2$. Let
\[
 d(a)=\left\lceil\log_4(1024a)\right\rceil+1.
\]
If $r'>r$ and $r'\equiv r\pmod{d(a)}$, then
$r'-r\ge d(a)$ and
\[
 c_04^{r'}
 \ge c_04^{r+d(a)}
 >4a4^r,
\]
because $c_0=1/256$ and $4^{d(a)}>1024a$. Thus the intervals
\[
 [c_04^r,\,4a4^r]
\]
are disjoint within each residue class modulo $d(a)$. The corresponding
events $E_r(a)$ are therefore disjoint under the common law $Q_s$.
Since each has probability at least $\frac14e^{-2a}$, each residue
class contains at most $4e^{2a}$ indices counted by $N(a)$. Consequently,
\[
 N(a)\le4d(a)e^{2a}.
\]
Finally,
\[
 d(a)\le\log_4a+7\le8a\qquad(a\ge1),
\]
which proves
\[
 N(a)\le32ae^{2a}.
\]

By \eqref{eq:lower-alphaone}, every $\alpha_r$ is greater than $1$, so
$N(a)=0$ for $a\le1$. Using
\[
 e^{-8\alpha_r}
 =\int_{\alpha_r}^{\infty}8e^{-8a}\,\mathrm da
\]
and summing over $r$ gives
\[
 \begin{aligned}
 \sum_{r\in\mathcal U}e^{-8\alpha_r}
 &=\int_1^\infty8e^{-8a}N(a)\,\mathrm da\le256\int_1^\infty ae^{-6a}\,\mathrm da=\frac{448}{9}e^{-6}<1.
 \end{aligned}
\]
\end{proof}

\begin{proof}[Proof of \zcref{prop:explicit-lower}]
Recall that $T$ denotes the stopping time of the symmetrized algorithm.
Set
\[
 z=\sum_{r\in\mathcal U}e^{-8\alpha_r}.
\]
By \zcref{lem:lower-kraft}, $0<z\le1$, so
\[
 q_r=\frac{e^{-8\alpha_r}}{z},
 \qquad r\in\mathcal U,
\]
defines a probability vector. Gibbs's inequality gives
\[
 \begin{aligned}
 0
 &\le\sum_{r\in\mathcal U}p_r\log\frac{p_r}{q_r}=-\Ent(I)+8\sum_{r\in\mathcal U}p_r\alpha_r+\log z.
 \end{aligned}
\]
Since $\log z\le0$, it follows that
\begin{equation}\label{eq:lower-entropy-alpha}
 \Ent(I)\le8\sum_{r\in\mathcal U}p_r\alpha_r.
\end{equation}

We now convert the right side into an expected sample count. Using
$p_r=H_r/H$ and $\alpha_r=t_r/(n_r4^r)$ gives
\[
 H\sum_{r\in\mathcal U}p_r\alpha_r
 =\sum_{r\in\mathcal U}\frac{H_r}{n_r4^r}t_r.
\]
For every $i\in G_r$, the inequality
$\Delta_i>2^{-(r+1)}$ implies
$\Delta_i^{-2}<4^{r+1}$. Hence
\[
 H_r=\sum_{i\in G_r}\Delta_i^{-2}
 \le4n_r4^r,
\]
and therefore
\[
 H\sum_{r\in\mathcal U}p_r\alpha_r
 \le4\sum_{r\in\mathcal U}t_r
 =4\sum_{i\ne *}u_i
 \le4\E_I T.
\]
Combining this estimate with \eqref{eq:lower-entropy-alpha} yields
\[
 \E_I T\ge\frac{H\Ent(I)}{32}.
\]
The per-arm change-of-measure bound \eqref{eq:lower-perarm} also gives
\[
 \E_I T\ge2b_\delta H.
\]
Finally, \eqref{eq:lower-sym} identifies $\E_I T$ with the average
expected cost of the original algorithm over all relabelings of $I$.
Since that algorithm was arbitrary, taking the infimum in
\eqref{eq:instance-complexity} proves \eqref{eq:lower-stronglower}.
\end{proof}

\section{The instance-wise upper bound}\label{sec:instance-upper}
The following sub-Gaussian proposition implies the upper bound in
\zcref{thm:instance} when restricted to $I\in\mathcal G_K$.

\begin{proposition}\label{prop:subgaussian-instance-upper}
For every $I\in\mathcal S_K$ and $0<\delta<0.1$, there exists an algorithm $A$ that is
$\delta$-correct on $\mathcal S_K$ and satisfies
\[
 \E_{\pi I}T_A\le CH(\log(1/\delta)+\Ent(I))
 \qquad (\pi\in S_K).
\]
The constant $C$ is absolute.
\end{proposition}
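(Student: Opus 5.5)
We follow the outline of the introduction: an algorithm built from the target's group structure, a sequence of budget-limited trials, and an independent fallback procedure that guarantees almost-sure stopping. Fix $I$ with gap groups $G_r$, $r\in\mathcal U$, and sizes $n_r$. Put $q_r=n_r4^r/\sum_{s\in\mathcal U}n_s4^s$. Since $4^r\le\Delta_i^{-2}<4^{r+1}$ on $G_r$, we have $q_r\ge p_r/4$, and hence
\[
\sum_{r\in\mathcal U} p_r\log(1/q_r)\le\Ent(I)+\log4 .
\]
Choose a threshold $\tau\in(m-g/2,m-g/3)$. All of this data is invariant under relabeling, so the construction serves every $\pi I$ at once, and the resulting bounds are label-free.

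A trial with allowance $\eta$ proceeds in round robin over the active arms. For each arm $i$ with $n$ samples and empirical sum $S_n$, form the mixture statistic
\[
Z_i(n)=\sum_{r\in\mathcal U}q_r\exp\bigl(\lambda_r(n\tau-S_n)-n\lambda_r^2/2\bigr),\qquad \lambda_r=2^{-r}/8 .
\]
The scale $\lambda_r$ is comparable to $\Delta_i$ for $i\in G_r$, since $\tau-\mu_i\asymp\Delta_i$. Eliminate arm $i$ once $Z_i(n)\ge1/\eta$.

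If some mean exceeds $\tau$, then $Z_*$ is a nonnegative supermartingale with $\E Z_*(0)=1$ by the sub-Gaussian assumption. Ville's inequality then shows that the optimal arm is ever eliminated with probability at most $\eta$.

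Now take $i\in G_r$. The single term $q_r\exp(\cdot)$ already has drift of order $\lambda_r\Delta_i\asymp\Delta_i^2$ per sample in the exponent. A standard linear-boundary crossing bound therefore gives
\[
\E U_i\le C\Delta_i^{-2}\bigl(\log(1/\eta)+\log(1/q_r)\bigr).
\]
Summing over arms gives $CH(\log(1/\eta)+\Ent(I)+1)$ for all suboptimal arms. The optimal arm costs at most $\max_i U_i$, which doubles the total.

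When one arm remains, the trial draws $C g^{-2}\log(1/\eta)$ fresh samples from it. It returns the arm only if the empirical mean exceeds $\tau+g/24$. If every mean is below $\tau$, this returns any arm with probability at most $\eta$. On the target, the confirmation succeeds with probability close to $1$ once $\eta$ is small, and it costs at most $CH\log(1/\eta)$, since $g^{-2}\le H$.

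Run trials $k=1,2,\dots$ with $\eta_k=\delta 2^{-k-2}$. Trial $k$ is truncated at
\[
L_k=C'H(\log(1/\eta_k)+\Ent(I)+1),
\]
and ends without an answer if the limit is reached. By Markov's inequality and the expectation bound, each target trial returns the optimal arm with probability at least $3/4$. It errs with probability at most $2\eta_k$, which sums to at most $\delta/2$ over $k$, whether or not the input is the target.

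The expected cost on $\pi I$ is at most $\sum_k 4^{-(k-1)}L_k$. Since $L_k$ grows only linearly in $k$ through $\log(1/\eta_k)=\log(4/\delta)+k\log2$, this sum is at most $CH(\log(1/\delta)+\Ent(I))$. Here we use $\log(1/\delta)\ge1$ to absorb the additive constants.

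Finally, alternate samples with an independent $\delta/2$-correct procedure that stops almost surely on $\mathcal S_K$, such as a successive-elimination confidence-interval method. Output whichever of the two answers first. The errors of the two procedures add to at most $\delta$. On the target, the cost at most doubles plus one sample.

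The main obstacle is the crossing bound for the mixture statistic. One must check uniformly that $\tau-\mu_i\ge g/3$, and also $\tau-\mu_i\ge\Delta_i-g/2\ge\Delta_i/2$. This makes the exponent for the scale-$r$ term have drift at least $c\Delta_i^2$ with sub-Gaussian fluctuations. The expected first passage over level $\log(1/(\eta q_r))$ then follows from Wald's identity, or from a summation of tail bounds over $n$, with absolute constants. The optimal arm is handled by the supermartingale property, which requires $\mu_*\ge\tau$ and therefore holds in the relevant case.
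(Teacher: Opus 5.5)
Your proposal is correct and follows essentially the same construction as the paper: a mixture statistic with weights $q_r\propto n_r4^r$, a threshold $\tau\in(m-g/2,m-g/3)$, a fresh-sample confirmation step, capped repeated trials with $\eta_k=\delta2^{-k-2}$, and interleaving with an almost-surely stopping confidence-interval procedure. The one local difference is the entropy comparison: you use the cross-entropy bound $q_r\ge p_r/4$, which gives $\sum_rH_r\log(1/q_r)\le H(\Ent(I)+\log4)$, whereas the paper uses the monotonicity and homogeneity of $\Phi$ to show $W\Ent(q)\le H\Ent(I)$; your version is a slightly more direct route to the same estimate.
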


The remainder of this section proves \zcref{prop:subgaussian-instance-upper}. We first describe
the target-instance-dependent parameters and a single trial, then prove its correctness and expected-cost
bounds, and finally repeat the trials and add an auxiliary procedure to ensure almost-sure stopping
on every instance in $\mathcal S_K$.

It suffices to prove the upper bounds for $0<\delta<0.01$. For
$0.01\le\delta<0.1$, run the corresponding algorithm with error parameter $0.005$.
It remains $\delta$-correct, while
\[
 \log 200\le
 \frac{\log 200}{\log 10}\log(1/\delta),
\]
so the sample bound changes only by an absolute constant factor.

\subsection{Instance-specific parameters and a trial}
Fix a target instance $I\in\mathcal S_K$ and assume
$0<\delta<0.01$. The algorithm constructed in this section may depend
on the target's unlabeled collection of means, but not on the label of
its optimal arm. For $r\in\mathcal U$, set
\[
 n_r=|G_r|,\qquad w_r=n_r4^r,\qquad
 W=\sum_{r\in\mathcal U}w_r,\qquad q_r=\frac{w_r}{W},
\]
and let
\[
 R=\max\mathcal U,\qquad h=2^{-R}.
\]
For $i\in G_r$,
\[
 4^r\le\Delta_i^{-2}<4^{r+1},
\]
so summing over $G_r$ gives $w_r\le H_r<4w_r$. Consequently,
\begin{equation}\label{eq:lower-upperweights}
 w_r\le H_r<4w_r,\qquad W\le H<4W,\qquad
 g\le h<2g,\qquad h^{-2}\le W.
\end{equation}
Here the bounds involving $h$ follow because an arm with gap $g$
belongs to $G_R$, while $G_R$ is nonempty and therefore
$W\ge w_R\ge4^R=h^{-2}$.

We will allocate the mixture weights according to $q=(q_r)$. To
compare the entropy of these weights with $\Ent(I)$, define, for a
finite nonnegative vector $v$,
\begin{equation}\label{eq:entropy-functional}
 \Phi(v)=\Big(\sum_rv_r\Big)\log\Big(\sum_rv_r\Big)
           -\sum_rv_r\log v_r,
 \qquad 0\log0=0.
\end{equation}
If $V=\sum_rv_r>0$, then
\[
 \Phi(v)=V\Ent(v/V).
\]
Moreover, $\Phi$ is homogeneous of degree one and coordinatewise
nondecreasing: at a positive coordinate,
\[
 \frac{\partial\Phi}{\partial v_r}
 =\log\frac{\sum_s v_s}{v_r}\ge0,
\]
and the same monotonicity at zero follows by continuity. Applying these
properties to the coordinatewise inequalities
$w_r\le H_r\le4w_r$ gives
\begin{equation}\label{eq:lower-entcomp}
 W\Ent(q)=\Phi(w)
 \le\Phi((H_r)_r)=H\Ent(I)
 \le\Phi(4w)=4W\Ent(q).
\end{equation}

Choose a rational threshold
\begin{equation}\label{eq:lower-tau}
 \tau\in(m-g/2,m-g/3)\subset(0,1).
\end{equation}
On the target instance, the optimal mean $m$ lies above $\tau$, while
every suboptimal mean is at most $m-g<\tau$. The algorithm stores the
finite set $\mathcal U$, the integers $(n_r)_{r\in\mathcal U}$, and the
rational number $\tau$. These parameters have a finite description
and do not identify the optimal arm's label.

Fix an error allowance $0<\eta<0.01$. A trial attempts to eliminate
arms whose means are below $\tau$ while retaining any arm whose mean is
at least $\tau$. For each arm $i$, after observing $t$ samples
$X_{i,1},\ldots,X_{i,t}$, define
\begin{equation}\label{eq:lower-mixture}
 M_i(t)=\sum_{r\in\mathcal U}q_r
 \exp\left(
   -\lambda_r\sum_{\ell=1}^t(X_{i,\ell}-\tau)
   -\frac{\lambda_r^2t}{2}
 \right),
 \qquad \lambda_r=2^{-r-2}.
\end{equation}
The component indexed by $r$ is tuned to departures below $\tau$ on
the scale $2^{-r}$. Under the hypothesis $\mu_i\ge\tau$, each component,
and hence their mixture, is a nonnegative supermartingale.

Initialize all arms as active. Repeatedly perform a complete sweep,
drawing one new sample from each arm active at the start of the sweep.
At the end of the sweep, simultaneously discard every arm $i$ whose
updated statistic satisfies $M_i(t)\ge1/\eta$. If no arm remains, end
the trial without an answer. If exactly one arm $c$ remains, take
\[
 n_\eta=\left\lceil288h^{-2}\log(1/\eta)\right\rceil
\]
fresh samples from it. Output $c$ if their average exceeds
$\tau+h/12$; otherwise end the trial without an answer. 
If every input mean lies below $\tau$, this confirmation step bounds
the probability that the trial returns an arm by $\eta$.

Finally, impose the deterministic sample limit
\begin{equation}\label{eq:lower-cap}
 B_\eta=
 \left\lceil
 16000W\bigl[\log(1/\eta)+\Ent(q)+1\bigr]
 \right\rceil.
\end{equation}
The limit includes both elimination and confirmation samples. If the
next sample would make the total exceed $B_\eta$, end the trial without
an answer.

\subsection{Correctness and cost of a trial}
We first show that the trial has error probability at most $\eta$ on
every input instance, even though its parameters were chosen from the
fixed target instance. Let an input arm have mean $\mu\ge\tau$. For
each $r\in\mathcal U$ and a fresh sample $X$ from this arm, the
sub-Gaussian assumption gives
\[
 \begin{aligned}
 \E e^{-\lambda_r(X-\tau)-\lambda_r^2/2}
 &=e^{-\lambda_r(\mu-\tau)}
   \E e^{-\lambda_r(X-\mu)-\lambda_r^2/2}\le e^{-\lambda_r(\mu-\tau)}
 \le1.
 \end{aligned}
\]
The same inequality holds conditionally on all preceding samples.
Hence each exponential process in \eqref{eq:lower-mixture} is a
nonnegative supermartingale. Their weighted sum $M_i$ is therefore
also a nonnegative supermartingale, with
$M_i(0)=\sum_rq_r=1$. The maximal inequality for nonnegative
supermartingales gives
\[
 \Pp\left(\sup_{t\ge0}M_i(t)\ge\frac1\eta\right)\le\eta;
\]
see \cite[theorem 3.9]{LS20}. Thus an arm with mean at least $\tau$ is
eliminated with probability at most $\eta$.

Now consider an arbitrary input instance. If its optimal mean is at
least $\tau$, the trial can return a suboptimal arm only if it first
eliminates the optimal arm. The preceding bound shows that this event
has probability at most $\eta$. If every arm has mean below $\tau$,
condition on the history up to the confirmation step and on the
remaining arm $c$. The confirmation samples are fresh, and
$\mu_c<\tau$, so \eqref{eq:subgaussian-tail} gives
\[
 \Pp\left(
   \overline X_c>\tau+\frac h{12}
   \,\middle|\,\text{preceding history}
 \right)
 \le
 \exp\left(-\frac{n_\eta h^2}{288}\right)
 \le\eta.
\]
The deterministic sample limit can only end the trial
without an answer. Therefore, on every instance in $\mathcal S_K$, the
probability that the trial returns an incorrect arm is at most
$\eta$.

We now return to the target instance $I$ and bound the expected cost.
Temporarily remove the deterministic sample limit, retaining all other
parts of the trial. For each arm, generate an infinite independent
sample sequence and define
\[
 U_i=\inf\left\{t\ge1:M_i(t)\ge\frac1\eta\right\},
\]
with $U_i=\infty$ if the threshold is never reached. This definition
allows us to bound the cost even on paths where the optimal arm is
erroneously eliminated.

Fix a suboptimal target arm $i\in G_r$ and put
\[
 d_i=\tau-\mu_i.
\]
Since $m-\tau<g/2\le\Delta_i/2$,
\[
 d_i=\Delta_i-(m-\tau)>\frac{\Delta_i}{2}.
\]
The definition of $G_r$ and the choice
$\lambda_r=2^{-r-2}$ also give
\[
 \frac{\Delta_i}{4}\le\lambda_r
 <\frac{\Delta_i}{2}<d_i.
\]
The logarithm of the unweighted $r$th exponential component after
$t$ samples is
\[
 a_it-\lambda_r\sum_{\ell=1}^t(X_{i,\ell}-\mu_i),
 \qquad
 a_i=\lambda_rd_i-\frac{\lambda_r^2}{2}.
\]
The preceding inequalities imply
\[
 a_i
 =\lambda_r\left(d_i-\frac{\lambda_r}{2}\right)
 \ge\frac{\Delta_i^2}{16},
 \qquad
 \frac{a_i}{\lambda_r}
 =d_i-\frac{\lambda_r}{2}
 \ge\frac{\Delta_i}{4}.
\]

Set
\[
 b_r=\log\frac1{\eta q_r}.
\]
If $U_i>t$, then $M_i(t)<1/\eta$, so its $r$th component satisfies
\[
 \exp\left(
 a_it-\lambda_r\sum_{\ell=1}^t(X_{i,\ell}-\mu_i)
 \right)
 <\frac1{\eta q_r}.
\]
Consequently, for every integer $t\ge2b_r/a_i$,
\[
 \begin{aligned}
 \Pp(U_i>t)
 &\le
 \Pp\left(
 \lambda_r\sum_{\ell=1}^t(X_{i,\ell}-\mu_i)>a_it-b_r
 \right)\\
 &\le
 \exp\left(-\frac{(a_it-b_r)^2}{2\lambda_r^2t}\right)\\
 &\le
 \exp\left(-\frac{a_i^2t}{8\lambda_r^2}\right)
 \le
 \exp\left(-\frac{\Delta_i^2t}{128}\right).
 \end{aligned}
\]
Here the penultimate inequality uses $a_it-b_r\ge a_it/2$, and the
last uses $a_i/\lambda_r\ge\Delta_i/4$.

Let $t_0=\lceil2b_r/a_i\rceil$. Summing the tail probabilities gives
\[
 \E U_i
 \le t_0+\sum_{t=t_0}^{\infty}
             e^{-\Delta_i^2t/128}
 \le130\Delta_i^{-2}(b_r+1).
\]
Indeed, $a_i\ge\Delta_i^2/16$ gives
$t_0\le32b_r\Delta_i^{-2}+1$, while $\Delta_i\le1$ gives
\[
 \sum_{t=t_0}^{\infty}e^{-\Delta_i^2t/128}
 \le\frac1{1-e^{-\Delta_i^2/128}}
 \le129\Delta_i^{-2}.
\]

Summing over the suboptimal arms and using
$b_r=\log(1/\eta)+\log(1/q_r)$ yields
\[
 \begin{aligned}
 \sum_{i\ne *}\E U_i
 &\le130\sum_{r\in\mathcal U}
 H_r\left[\log(1/\eta)+\log(1/q_r)+1\right]\\
 &\le520\sum_{r\in\mathcal U}
 w_r\left[\log(1/\eta)+\log(1/q_r)+1\right]\\
 &=520W\left[\log(1/\eta)+\Ent(q)+1\right].
 \end{aligned}
\]
The second inequality uses $H_r\le4w_r$, while the last equality uses
$q_r=w_r/W$.

It remains to account for samples from the optimal arm and for the
confirmation step. During the elimination phase, suboptimal arm $i$
receives at most $U_i$ samples. If the optimal arm receives $t$
elimination samples, then in its final sweep at least one suboptimal
arm is also active and receives its $t$th sample. Hence $U_i\ge t$ for
that arm, so the optimal arm receives at most
$\max_{i\ne *}U_i$ samples before the elimination phase ends. This
remains true if the optimal arm is itself eliminated. Hence the
elimination cost is bounded pathwise by
\[
 \sum_{i\ne *}U_i+\max_{i\ne *}U_i
 \le2\sum_{i\ne *}U_i.
\]
The confirmation step uses at most $n_\eta$ further samples. Since
$h^{-2}\le W$, the expected cost of the uncapped trial is therefore at
most
\[
 2000W\left[\log(1/\eta)+\Ent(q)+1\right].
\]
Comparing this with \eqref{eq:lower-cap}, Markov's inequality shows
that the probability of reaching the deterministic sample limit is at
most $1/8$.

Finally, on the target instance,
\[
 m-\tau>\frac g3>\frac h6.
\]
Thus the confirmation threshold $\tau+h/12$ lies at least $h/12$
below $m$, and \eqref{eq:subgaussian-tail} gives
\[
 \Pp\left(
 \overline X_* \le \tau+\frac h{12}
 \right)
 \le
 e^{-n_\eta h^2/288}
 \le\eta.
\]
The bounds on $\E U_i$ imply that $U_i<\infty$ almost surely
for every suboptimal arm $i$. If the optimal arm is never
eliminated, the uncapped elimination phase therefore ends
with that arm as the sole survivor. Therefore,
unless the optimal arm is eliminated, the sample limit is reached, or
the confirmation fails, the trial returns the optimal arm. Its success
probability is at least
\[
 1-\eta-\frac18-\eta
 =1-\frac18-2\eta>\frac34.
\]
All estimates are uniform over relabelings of the target instance.

\subsection{Repetition and almost-sure stopping}\label{s:rass}

Run independent trials with error levels
\[
 \eta_j=\delta2^{-j-2},\qquad j\ge0,
\]
and stop at the first trial that returns an arm. On every input instance, the probability that some
trial returns an incorrect arm is at most
\[
 \sum_{j\ge0}\eta_j=\frac{\delta}{2}.
\]
For any relabeling of the target instance, each trial returns the optimal arm with probability at
least \(3/4\). Independence therefore implies that the probability of reaching trial \(j\) is at
most \(4^{-j}\). Moreover,
\[
 B_{\eta_j}
 \le C W\bigl[\log(1/\delta)+\Ent(q)+j+1\bigr].
\]
Consequently, the expected number of samples used by the sequence of trials is at most
\[
 \sum_{j\ge0}4^{-j}B_{\eta_j}
 \le C W\bigl[\log(1/\delta)+\Ent(q)+1\bigr].
\]

The sequence of trials need not stop almost surely when its fixed parameters do not describe the
input instance. To ensure almost-sure stopping on every instance, run it in parallel with an
independent auxiliary procedure that is \(\delta/2\)-correct on \(\mathcal S_K\). Alternate between
the next sample requested by the trial sequence and the next sample requested by the auxiliary
procedure, and return the first arm produced by either procedure. Before the trial sequence returns
an arm, the combined algorithm uses at most twice as many samples, up to one additional sample.
Thus its expected cost on any relabeling of the target instance remains bounded by
\[
 C W\bigl[\log(1/\delta)+\Ent(q)+1\bigr].
\]

We construct the auxiliary procedure using simultaneous confidence intervals. Put
\(\beta=\delta/2\). At stage \(k\ge0\), draw \(2^k\) fresh samples from each arm and let
\(\overline X_{i,k}\) be their average. Form the intervals
\[
 \left[\overline X_{i,k}-c_k,\overline X_{i,k}+c_k\right],
 \qquad
 c_k=\sqrt{2^{1-k}\log\frac{8K(k+1)^2}{\beta}}.
\]
Stop and return arm \(i\) if its lower endpoint exceeds the upper endpoint of every other arm.
For each arm and stage, the sub-Gaussian tail bound gives
\[
 \Pp\bigl(|\overline X_{i,k}-\mu_i|>c_k\bigr)
 \le \frac{\beta}{4K(k+1)^2}.
\]
A union bound over all arms and stages shows that the probability of any confidence-interval
failure is at most
\[
 \frac{\beta}{4}\sum_{k\ge0}\frac1{(k+1)^2}
 =\frac{\beta\pi^2}{24}<\beta.
\]
Whenever all intervals contain their respective means, any arm returned by the procedure is
optimal. The auxiliary procedure is therefore \(\beta\)-correct.

It remains to verify almost-sure stopping. For every fixed \(e>0\),
\[
 \Pp\bigl(|\overline X_{i,k}-\mu_i|>e\bigr)
 \le2\exp(-2^{k-1}e^2),
\]
and the right side is summable in \(k\). Applying Borel--Cantelli for each arm and for
\(e=1,\frac12,\frac13,\ldots\) gives
\[
 \overline X_{i,k}\longrightarrow\mu_i
 \qquad\text{almost surely for every arm }i.
\]
Since \(c_k\to0\) and the optimal arm is unique, its lower endpoint eventually exceeds every
suboptimal arm's upper endpoint. Hence the auxiliary procedure, and therefore the combined
algorithm, stops almost surely on every instance in \(\mathcal S_K\).

The two component procedures have error probabilities at most \(\delta/2\), so the combined
algorithm is \(\delta\)-correct. Finally, \eqref{eq:lower-upperweights} and
\eqref{eq:lower-entcomp} give
\[
 W\le H,
 \qquad
 W\Ent(q)\le H\Ent(I).
\]
Since \(\log(1/\delta)>\log 100\), the additive \(W\) term can be absorbed into
\(H\log(1/\delta)\). Thus, uniformly over all relabelings of the target instance,
\[
 \E_I T
 \le C H\bigl[\log(1/\delta)+\Ent(I)\bigr],
\]
which proves \zcref{prop:subgaussian-instance-upper}. 

\section{The uniform algorithm}\label{sec:algorithm}
We construct the algorithm in \zcref{thm:uniform} for $\mathcal S_K$ by modifying the
Entropy-Elimination and Complexity-Guessing procedures of Chen, Li and Qiao
\cite[algorithms 4--5]{CLQ17}. We retain their geometrically increasing stage budgets, two cumulative budget
tests, and elimination thresholds. We adjust the maximum number of rounds and fraction-threshold increments
to account for all sampling costs, and impose a deterministic sample limit at each stage. The
analysis uses the comparison process in \zcref{sec:uniform-analysis}.
Each stage starts with all $K$ arms. One cumulative sum limits the total elimination error budget,
and the other limits its expected sampling cost.

\subsection{Sampling subroutines}

By the reduction in \zcref{sec:instance-upper}, it suffices to consider
\(0<\delta<0.01\). Each subroutine call uses samples independent of the history before the call.
Consequently, the guarantees below hold conditionally on that history and on the possibly random
inputs to the call. They apply to \(1\)-sub-Gaussian reward laws and use only the sample-mean bound
\eqref{eq:subgaussian-tail} and independence between sample blocks.

We refer to Mean and Fraction calls made directly by the main algorithm as \emph{outer tests}, to
distinguish them from calls made internally by other subroutines.

Median and Fraction require nonempty finite input sets, whereas Eliminate also accepts the empty
set. Throughout, accuracy parameters and differences between mean thresholds lie in \((0,1]\),
error levels lie in \((0,0.1)\), and differences between fraction thresholds lie in \((0,0.1]\).
These ranges cover every call made by the algorithm. The implementations and proofs are given in
\zcref{app:primitives}. Here and below, \(C\) denotes an absolute constant.

\begin{enumerate}[label=(\roman*),leftmargin=*]
\item \emph{Mean estimation.}
The call \(\Mean(i,e,\alpha)\) uses
\(O(e^{-2}\log(1/\alpha))\) samples deterministically and returns an estimate of \(\mu_i\) whose
error is at most \(e\), except with probability \(\alpha\).

\item \emph{Median elimination.}
The call \(\Median(S,e)\) uses \(O(|S|e^{-2})\) samples deterministically and, with conditional
probability at least \(0.99\), returns an arm \(i\in S\) satisfying
\[
 \mu_i\ge\max_{j\in S}\mu_j-e.
\]

\item \emph{Fraction testing.}
For \(l<u\) and \(\theta_-<\theta_+\), the call
\(\Fraction(S,l,u,\theta_-,\theta_+,\alpha)\) has deterministic cost at most
\[
 C(u-l)^{-2}\log(1/\alpha)
   (\theta_+-\theta_-)^{-2}
   \log\frac{1}{\theta_+-\theta_-}.
\]
Except with probability \(\alpha\), its output obeys the following implications:
\[
 \begin{aligned}
  \mathrm{True}
  &\implies |\{i\in S:\mu_i<u\}|>\theta_-|S|,\\
  \mathrm{False}
  &\implies |\{i\in S:\mu_i<l\}|<\theta_+|S|.
 \end{aligned}
\]
Fraction is not required to estimate either proportion. It need only return an answer whose
corresponding implication is valid; on some inputs, both answers are valid.

\item \emph{Elimination.}
The call \(\Eliminate(S,l,u,\alpha)\) returns a subset \(S'\subseteq S\) and terminates almost
surely. If \(S=\varnothing\), it returns immediately. Its conditional expected cost is at most
\[
 C|S|(u-l)^{-2}\log(1/\alpha).
\]
If \(S\ne\varnothing\), fix, before the call, a designated arm
\(i^\dagger\in S\) attaining \(\max_{i\in S}\mu_i\). Except with
probability \(\alpha\), the returned set simultaneously satisfies
\[
 |\{i\in S':\mu_i<l\}|\le0.1|S'|,
\]
and, if \(\mu_{i^\dagger}\ge u\), it contains \(i^\dagger\). When
\(S=\varnothing\), the displayed inequality holds deterministically. 
In addition, for any \(k\) arms designated before the call whose means are all at least \(u\), the
probability that all \(k\) are removed is at most \(\alpha^k\).
\end{enumerate}
We use a product bound to control the joint loss of several good arms across stages. Conditional
on the inputs and the history before an Eliminate call, generate independent estimation blocks in
advance for every arm and every internal stage. If an arm has mean at least \(u\), then it can be
removed only if one of its own estimates is inaccurate. The error budgets for that arm sum to at
most \(\alpha/10\), so this event has probability at most \(\alpha/10\). Because the estimation
blocks are independent across arms, the probability that \(k\) fixed arms of mean at least \(u\)
are all removed is at most
\[
 \left(\frac{\alpha}{10}\right)^k\le\alpha^k.
\]
This conclusion does not require the Fraction calls to be correct: an incorrect Fraction answer
may cause an unnecessary deletion step, but no such step removes an arm of mean at least \(u\)
when that arm's estimate is accurate. This is the argument of \cite[lemma B.9]{CLQ17}. The full
construction, including its behavior on the empty set, is given in \zcref{app:elimination}.

\subsection{Stages and stopping rules}

Let \(J\) and \(M\) be the absolute constants specified in
\zcref{lem:uniform-aux,lem:uniform-success}, respectively.
After fixing them, choose a sufficiently small absolute constant \(c>0\) and set
\[
 \eta=c\delta,\qquad \rho=\eta^2,\qquad a=\log(1/\eta),\qquad
 B_t=100^t,\qquad L_t=a+\log(t+1),\qquad t=1,2,\ldots.
\]
Each stage starts with all \(K\) arms and uses samples and internal randomness independent of every
other stage. We say that a stage \emph{accepts} if it returns an arm and \emph{rejects} if it ends
without doing so.

For stage \(t\), set
\[
 R_t=\max\{0,\lfloor\log_4(B_t/L_t)\rfloor-J\}.
\]
If \(R_t=0\), reject immediately. Otherwise, write \(B=B_t\) and \(R=R_t\), and initialize
\[
 S_1=[K],\qquad U_1=V_1=0.
\]
The variables \(U_r\) and \(V_r\) are running totals used by the two budget checks. At the start of round \(r\),
\[
 U_r=4\sum_{\substack{k<r\\\text{\rm Eliminate was called in round }k}}w_k,
 \qquad
 V_r=\sum_{\substack{k<r\\\text{\rm round }k\text{\rm\ passed both budget checks}}}z_k.
\]
They do not record the number of samples used, since that quantity is controlled by the separate
deterministic limit involving \(M\).

Define the fraction thresholds by
\[
 \theta_0=0.3,\qquad
 \theta_r=\theta_{r-1}+\frac{1}{10(R+1-r)^2},
 \qquad 1\le r\le R.
\]
They are increasing and satisfy
\[
 \theta_R
 =0.3+\frac1{10}\sum_{k=1}^R\frac1{k^2}
 <0.3+\frac{\pi^2}{60}<0.5.
\]
Thus \(\theta_r\in[0.3,0.5)\), and \(v<r\) implies
\(\theta_v\le\theta_{r-1}\).

At the beginning of round \(r\), return the unique arm in \(S_r\) if \(|S_r|=1\). Otherwise, reject
if \(S_r=\varnothing\) or \(r>R\). If the stage continues, set
\[
 \varepsilon_r=2^{-r},\qquad w_r=|S_r|4^r.
\]
The first budget check rejects if
\[
 U_r+4w_r\ge B.
\]
If it passes, then \(w_r<B/4\), and we may define the positive quantity
\[
 z_r=w_r\log\frac{B}{\eta w_r}.
\]
The second budget check rejects if
\[
 V_r+z_r\ge100B.
\]
If both checks pass, perform the following operations:
\begin{enumerate}[leftmargin=*]
\item Update the second budget by setting
\[
 V_{r+1}=V_r+z_r,
 \qquad
 \alpha_{t,r}=\frac{\eta}{50t^2r^2}.
\]

\item Call
\[
 \widehat i_r=\Median(S_r,\varepsilon_r/8),
 \qquad
 \widehat\mu_r=\Mean(\widehat i_r,\varepsilon_r/8,\alpha_{t,r}).
\]

\item Let \(F_r\) be the output of
\[
 \Fraction\left(
 S_r,\,
 \widehat\mu_r-\tfrac74\varepsilon_r,\,
 \widehat\mu_r-\tfrac98\varepsilon_r,\,
 \theta_{r-1},\theta_r,\alpha_{t,r}
 \right).
\]

\item If \(F_r=\mathrm{True}\), set
\[
 \beta_r=\frac{4w_r\rho}{B}
\]
and call
\[
 S_{r+1}
 =\Eliminate\left(
 S_r,\,
 \widehat\mu_r-\tfrac34\varepsilon_r,\,
 \widehat\mu_r-\tfrac58\varepsilon_r,\,
 \beta_r
 \right).
\]
Then set \(U_{r+1}=U_r+4w_r\). If \(F_r=\mathrm{False}\), make no elimination call and set
\[
 S_{r+1}=S_r,\qquad U_{r+1}=U_r.
\]
\end{enumerate}

Maintain a counter for the samples used within the stage. Before taking any sample, including one
requested by a subroutine, reject the stage if that sample would make the counter exceed
\(\lceil MB\rceil\). After a rejection, begin stage \(t+1\) with fresh samples and randomness. If
the stage returns an arm, terminate the algorithm and output that arm.

The singleton check precedes the condition \(r>R\). Consequently, if round \(R\) produces a
singleton \(S_{R+1}\), the algorithm returns that arm at the beginning of round \(R+1\); otherwise,
it rejects at that point.

\section{Progress and error control}\label{sec:uniform-analysis}

Fix an instance \(I\in\mathcal S_K\), with optimal mean \(m\). We analyze the algorithm through a
comparison process in which the outer Mean and Fraction outputs are made valid by construction.
Median and Eliminate are not corrected. Successful Median calls will control the sizes of the
active sets, while invalid Eliminate calls will be handled separately. In particular, we never
condition on all future calls being valid.

\subsection{A comparison process}

For an outer Mean call on arm \(i\) with accuracy \(e\), replace its output by its projection onto
\[
 [\mu_i-e,\mu_i+e].
\]
Thus the corrected estimate always satisfies the advertised accuracy guarantee and agrees with the
original output whenever that output is accurate.

For an outer Fraction call with parameters \(S,l,u,\theta_-,\theta_+\), call True valid if
\[
 |\{i\in S:\mu_i<u\}|>\theta_-|S|
\]
and False valid if
\[
 |\{i\in S:\mu_i<l\}|<\theta_+|S|.
\]
At least one answer is valid. Indeed, if True is invalid, then
\[
 |\{i\in S:\mu_i<l\}|
 \le |\{i\in S:\mu_i<u\}|
 \le\theta_-|S|<\theta_+|S|,
\]
so False is valid. Replace an invalid Fraction output by the other answer, leaving a valid output
unchanged.

These corrections are measurable functions of the current inputs, outputs, and true means, and
do not use future samples. They are introduced only for the analysis. They do not alter the samples
already drawn or the cost of the corrected call, and the outputs of Median and Eliminate are never
corrected.

Couple the actual and comparison processes using the same randomness until their first
disagreement. Such a disagreement can occur only when an outer Mean or Fraction output is invalid.
There are at most two such calls in each round, each with conditional failure probability
\(\alpha_{t,r}=\eta/(50t^2r^2)\). Hence
\begin{equation}\label{eq:uniform-coupling}
 \Pp(\text{the two processes ever disagree})
 \le2\sum_{t,r\ge1}\frac{\eta}{50t^2r^2}
 \le C\eta.
\end{equation}
For a fixed stage \(t\), the same argument gives a bound of \(C\eta/t^2\). We will not condition on
agreement of the two processes. Because every Median call uses fresh samples, its conditional
failure probability in the comparison process remains at most
\[
 \kappa=0.01.
\]

For a call \(\Eliminate(S,l,u,\alpha)\), designate before the call one arm attaining
\(\max_{i\in S}\mu_i\). Call the returned set \(S'\) \emph{valid} if
\[
 |\{i\in S':\mu_i<l\}|\le0.1|S'|
\]
and, whenever the designated arm has mean at least \(u\), that arm belongs to \(S'\).

We first record why the true optimal arm remains active before the first invalid Eliminate call.
The corrected estimate in round \(r\) satisfies
\[
 \widehat\mu_r
 \le\mu_{\widehat i_r}+\frac{\varepsilon_r}{8}
 \le m+\frac{\varepsilon_r}{8}.
\]
Consequently, the upper thresholds used by Fraction and Eliminate satisfy
\[
 \widehat\mu_r-\tfrac98\varepsilon_r\le m-\varepsilon_r,
 \qquad
 \widehat\mu_r-\tfrac58\varepsilon_r\le m-\tfrac12\varepsilon_r.
\]
Thus, whenever Eliminate is called while the optimal arm is active, its upper threshold is below
\(m\). A valid call therefore retains the optimal arm. Induction over the rounds proves the claim.

Call the Median output in round \(r\) \emph{successful} if
\[
 \mu_{\widehat i_r}
 \ge\max_{i\in S_r}\mu_i-\frac{\varepsilon_r}{8}.
\]
Before the first invalid Eliminate call, the optimal arm lies in \(S_r\), so the maximum on the
right is \(m\). If Median is successful, then the corrected Mean output satisfies
\[
 \widehat\mu_r\ge m-\frac{\varepsilon_r}{4}.
\]
It follows that the lower thresholds used by Fraction and Eliminate are bounded below by
\[
 \widehat\mu_r-\tfrac74\varepsilon_r\ge m-2\varepsilon_r,
 \qquad
 \widehat\mu_r-\tfrac34\varepsilon_r\ge m-\varepsilon_r.
\]

Suppose that Median is successful and that every Eliminate call through
the current round is valid. If Fraction returns False,
then
\[
 |\{i\in S_r:\mu_i<
       \widehat\mu_r-\tfrac74\varepsilon_r\}|
 <\theta_r|S_r|<\frac{|S_r|}{2}.
\]
More than half of \(S_r\) therefore consists of the optimal arm and suboptimal arms with
\(\Delta_i\le2\varepsilon_r\). Since a False answer leaves the active set unchanged,
\[
 |S_{r+1}|
 \le2\bigl(1+|\{i\ne *:\Delta_i\le2\varepsilon_r\}|\bigr).
\]
If Fraction returns True, validity of Eliminate implies that at least \(0.9|S_{r+1}|\) returned
arms have mean at least \(m-\varepsilon_r\). All such arms are included among the optimal arm and
the suboptimal arms with \(\Delta_i\le\varepsilon_r\). Hence the same, slightly weaker bound holds:
\begin{equation}\label{eq:uniform-progress}
 |S_{r+1}|
 \le2\bigl(1+|\{i\ne *:\Delta_i\le2\varepsilon_r\}|\bigr).
\end{equation}

There is a sharper conclusion once the round scale is below the smallest gap. Let
\[
 Q=\lfloor\log_4D\rfloor,\qquad D=g^{-2}.
\]
Then
\[
 2^{-(Q+1)}<g\le2^{-Q}.
\]
For \(r\ge Q+2\),
\[
 2\varepsilon_r=2^{1-r}\le2^{-(Q+1)}<g.
\]
Thus every suboptimal arm lies below both lower thresholds displayed above. If \(|S_r|\ge2\), at
least half of \(S_r\) then lies below the Fraction lower threshold, so False is not a valid answer
and the comparison process must return True. A valid Eliminate call retains the optimal arm. If
\(n\) suboptimal arms remain afterward, its validity also gives
\[
 n\le0.1(n+1),
\]
which forces \(n=0\). Therefore every successful Median round \(r\ge Q+2\), before any invalid
Eliminate call, leaves \(S_{r+1}\) equal to the singleton containing the optimal arm.

\subsection{Active-set bounds}

We now use \eqref{eq:uniform-progress} to control the two cumulative quantities appearing in the
budget tests: the linear sum of the round weights and its logarithmically weighted analogue. The
next lemma is a stopped-process version of \cite[lemmas B.4--B.5]{CLQ17}.

Recall \(\Phi\) from \eqref{eq:entropy-functional}. For a summable nonnegative sequence
\(v=(v_r)\) with \(V=\sum_rv_r\), we use the extension
\[
 \Phi(v)=\sum_{r:v_r>0}v_r\log\frac{V}{v_r},
\]
with \(\Phi(0)=0\).

We first remove the deterministic sample limit from the stage. The resulting bounds continue to
hold when the limit is restored, because the limit can only terminate the stage earlier. A round is
\emph{entered} when the first budget test is reached and \emph{executed} when both budget tests are
passed. We adopt the convention
\[
 c_r\log\left(e+\frac{B}{\eta c_r}\right)=0
 \qquad\text{when }c_r=0.
\]

\begin{lemma}\label{lem:uniform-charging}
Consider one stage of the comparison process without its deterministic sample limit. Stop the
analysis at the first invalid Eliminate call, at the first budget rejection, or after the last
permitted round, whichever occurs first. The round containing an invalid Eliminate call and a round
rejected by a budget test are both included.

For every entered round with at least two active arms, set
\[
 c_r=4^r|S_r|,
\]
and set \(c_r=0\) for all subsequent rounds. Thus \(c_r=w_r\) on executed rounds; for a round rejected immediately by a budget test, \(c_r\) records the corresponding round weight. There is a deterministic nonnegative
sequence \(b=(b_r)_{r\ge1}\) such that
\[
 \E c_r\le b_r,\qquad
 \sum_r b_r\le CH,\qquad
 \Phi(b)\le CH(\Ent(I)+1).
\]
Consequently, whenever \(B\ge H\),
\begin{equation}\label{eq:uniform-charges}
 \E\sum_r c_r\le CH,
 \qquad
 \E\sum_r c_r\log\left(e+\frac{B}{\eta c_r}\right)
 \le CH\bigl[a+\Ent(I)+1+\log(B/H)\bigr].
\end{equation}

We also have the following localized version. Fix \(2\le s\le K\), an integer \(q\ge0\), and a deterministic
cutoff \(r_0\in\mathbb N\cup\{\infty\}\). Designate \(s-1\) arms, including the optimal arm, and
suppose that each designated arm has mean at least the upper threshold of every Eliminate call made
in a round \(r<r_0\). Assume that every nondesignated arm belongs to a gap group with index at most
\(q\), and set
\[
 H_{\rm tail}
 =\sum_{\substack{i\ne *\\i\text{ not designated}}}\Delta_i^{-2}.
\]
For this localized estimate only, assign all \(s-1\) designated arms to group \(q\),
regardless of their actual gaps.

Let \(\mathcal R\) contain the entered rounds \(r<r_0\), up to and including the first round
\(r\ge q+2\) in which Median succeeds. If no such round occurs, let \(\mathcal R\) contain every
entered round before the cutoff or an earlier termination. A round rejected immediately by a budget
test is included in \(\mathcal R\). Then
\[
 \E\sum_{r\in\mathcal R}c_r
 \le C\bigl[H_{\rm tail}+(s-1)4^q\bigr].
\]
\end{lemma}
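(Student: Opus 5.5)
The idea is to build the dominating sequence $b$ from the progress bound \eqref{eq:uniform-progress}, together with a geometric delay argument for Median failures.

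\emph{Definition of $b$.} For each round $r$, let
\[
 N_r=1+|\{i\ne *:\Delta_i\le2\varepsilon_r\}|.
\]
Consider an entered round $r\ge2$ that lies before the stopping point. Let $k<r$ be the last earlier round with a successful Median. By \eqref{eq:uniform-progress}, and because active sets never grow, $|S_r|\le 2N_k$. If no earlier Median succeeded, use $|S_r|\le K$ together with the delay probability. Median failures are conditionally independent with probability at most $\kappa=0.01$ given the past, since they use fresh samples. Hence the probability that rounds $k+1,\dots,r-1$ all fail is at most $\kappa^{r-1-k}$. This gives
\[
 \E c_r\le b_r:=C\,4^r\sum_{k<r}\kappa^{r-1-k}N_k+C\,4^r\kappa^{r-1}K .
\]
Rounds past $Q+2$ need separate handling. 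The sharper conclusion after \eqref{eq:uniform-progress} shows that once a round $r\ge Q+2$ has a successful Median, the set becomes a singleton and $c$ vanishes afterward. So for $r>Q+2$ the factor $N_k$ in $b_r$ is replaced by the probability $\kappa^{r-Q-2}$ of no success since round $Q+2$, times the size bound $2N_Q\le 2K$.

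\emph{The linear bound.} Next, show $\sum_r b_r\le CH$. Substituting $j=r-1-k$ turns the double sum into $\sum_j(4\kappa)^j\cdot4\sum_k4^kN_k$, and $\sum_j(4\kappa)^j$ is finite. It remains to bound $\sum_k4^kN_k$. The $1$ in $N_k$ contributes $\sum_{k\le Q+2}4^k\le C\,4^{Q}\le CD\le CH$; the tail beyond $Q+2$ contributes a geometric series of the same order. Each suboptimal arm $i\in G_\ell$ is counted in $N_k$ only when $k\le\ell+1$, so its total contribution is at most $C4^{\ell}\le C\Delta_i^{-2}$. The term $4^r\kappa^{r-1}K$ is handled the same way: group the counts by gap and note $K-1\le$ the number of arms with $\Delta_i^{-2}\ge1$.

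\emph{The entropy bound.} To get $\Phi(b)\le CH(\Ent(I)+1)$, write $b=\sum_\ell b^{(\ell)}$. Here $b^{(\ell)}$ collects the contributions of group $G_\ell$ (with the optimal arm and the $+1$ terms assigned to group $R$). Each $b^{(\ell)}$ has total mass at most $C H_\ell$ and is spread geometrically around index $\ell$: it decays like $4^{-(\ell-r)}$ for $r<\ell$ and like $(4\kappa)^{r-\ell}$ for $r>\ell$. I would use subadditivity of $\Phi$ in the form
\[
 \Phi\Bigl(\sum_\ell x^{(\ell)}\Bigr)\le\sum_\ell\Phi(x^{(\ell)})+\Phi\bigl((\textstyle\sum_r x^{(\ell)}_r)_\ell\bigr),
\]
which is the chain rule for entropy (mixture entropy is at most the sum of the mixing entropy and the average component entropy). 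Each geometric profile has $\Phi(b^{(\ell)})\le C H_\ell$. The mixing term is $\Phi((CH_\ell)_\ell)=C\,H\,\Ent(I)$. Monotonicity and homogeneity of $\Phi$ absorb the constants.

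\emph{Consequences.} The bound \eqref{eq:uniform-charges} then follows. First, $\E\sum c_r\le\sum b_r$. Second, the function $x\mapsto x\log(e+B/(\eta x))$ is concave, so Jensen reduces the logarithmic sum to $\sum_r b_r\log(e+B/(\eta b_r))$. Writing $\log(B/(\eta b_r))=a+\log(B/V)+\log(V/b_r)$ with $V=\sum b_r$ and using $B\ge H$ gives $V[a+\log(B/V)+1]+\Phi(b)$. Since $V\log(B/V)$ is increasing in $V$ for $V\le B/e$, and $V\le CH$, this term is at most $CH(1+\log(B/H))$.

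\emph{The localized version.} Here the same charging applies. Designated arms are never removed before $r_0$, and each is counted as a group-$q$ arm, which contributes at most $C4^q$ apiece. The stopping rule after the first successful round $r\ge q+2$ plays the role of $Q+2$. At such a round, the nondesignated arms, all with gap at least $2^{-(q+1)}>2\varepsilon_r$, fall below the thresholds, so at most about $s$ arms survive; rounds after a success are excluded. Failures beyond $q+2$ are summed geometrically against $(s-1+1)4^{q}$. This gives $C[H_{\rm tail}+(s-1)4^q]$.

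\emph{Main obstacle.} The step I expect to be hardest is making the delay argument rigorous under stopping. Median success must be conditioned on the past while the stopping times (an invalid Eliminate, budget rejection) are themselves random. I would handle this by bounding $c_r$ pathwise by $2N_k\,4^r$ times the indicator that rounds $k+1,\dots,r-1$ all had failed Medians while the analysis had not yet stopped. Taking expectations round by round then uses only conditional failure probabilities, which stay at most $\kappa$ on that event.
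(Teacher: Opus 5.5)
Your route is the paper's. You bound $|S_r|$ via \eqref{eq:uniform-progress} at the last successful Median round and pay a factor $\kappa$ for each intervening failure, so $b$ becomes the group weights (optimal arm placed in the top group $Q$) convolved with a two-sided geometric kernel. The logarithmic sum then follows from Jensen, and the localized bound uses the same charging with the designated arms moved to group $q$. Your per-group chain-rule inequality for $\Phi$ is a repackaging of the paper's $\Ent(X+Y)\le\Ent(X)+\Ent(Y)$. It works equally well, since each group's profile is dominated coordinatewise by $CH_\ell$ times a fixed shifted kernel.

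The step that does not hold as written is your treatment of rounds past the critical index.

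\emph{Global bound.} What you need is
\[
 \E c_r\le 4^r\Bigl[K\kappa^{r-1}+2\sum_{k<r,\;k\le Q+1}\kappa^{r-1-k}N_k\Bigr].
\]
The terms with $k\ge Q+2$ must simply be dropped, because a success there leaves a singleton. The terms with $k\le Q+1$ must be kept for every $r$. Replacing them for $r>Q+2$ by $\kappa^{r-Q-2}\cdot 2N_Q$ is not justified. $|S_{Q+2}|$ is only bounded by $2N_k$ for the last success $k$, and, for example, $N_{Q-1}$ can exceed $N_Q/\kappa^2$ when $G_{Q-2}$ is large.

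\emph{Localized bound.} The same problem is more visible here. The geometric tail after round $q+2$ cannot be charged to $s4^q$. Take $s=2$ with many nondesignated arms in $G_q$: a successful Median in round $q+1$ followed by a valid False answer can leave about $|G_q|$ arms active. The available guarantees then only give $\E c_{q+2+j}$ of order $(4\kappa)^j(4^q+H_q)$. The fix, as in the paper, is to use your own pre-$(q+2)$ charging to get
\[
 \E c_{q+2}\le C\bigl[H_{\rm tail}+(s-1)4^q\bigr],
\]
and then $\E c_{q+2+j}\le(4\kappa)^j\E c_{q+2}$. What happens at the first successful round $\ge q+2$ is irrelevant, since $\mathcal R$ ends there by definition.

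\emph{Designated arms.} The hypothesis does not imply that designated arms are never removed; validity of Eliminate protects only the maximal arm. You do not need that claim, because removals only shrink $|S_r|$.
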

\begin{proof}
\medskip
\noindent\emph{Step 1: A deterministic bound for the round weights.}
For the global estimate, treat the optimal arm as if it had gap \(g\), and hence as if it belonged
to group \(Q=\lfloor\log_4D\rfloor\). Define the augmented group sizes and weights by
\[
 n'_j=|G_j|+\mathbf 1_{\{j=Q\}},
 \qquad
 v_j=n'_j4^j,
 \qquad
 A_\ell=\sum_{j\ge\ell}n'_j.
\]
Thus \(\sum_jn'_j=K\).

Suppose that Median succeeds in round \(k\). If \(k\le Q+1\), then
\eqref{eq:uniform-progress} implies that any subsequently entered round has at most
\(2A_{k-1}\) active arms. If \(k\ge Q+2\), the successful round leaves a singleton, so no later
round is entered.

To bound \(\E c_r\), decompose according to the last successful Median call before round \(r\).
The probability that all preceding \(r-1\) Median calls fail is at most \(\kappa^{r-1}\).
If the last success occurs in round \(k<r\), the subsequent \(r-k-1\) Median calls must all fail,
an event of conditional probability at most \(\kappa^{r-k-1}\). Therefore
\[
 \E c_r
 \le4^r\left[
 K\kappa^{r-1}
 +2\sum_{k=1}^{r-1}\kappa^{r-k-1}A_{k-1}
 \right].
\]
An earlier invalid Eliminate call, budget rejection, or other termination can only set some later
\(c_r\)'s to zero, so the same bound holds for the stopped process.

For a fixed augmented arm in group \(j\), its contribution to the expression in brackets is at
most
\[
 \kappa^{r-1}
 +2\sum_{k=1}^{\min\{r-1,j+1\}}\kappa^{r-k-1}
 \le4\kappa^{\max\{0,r-j-2\}}.
\]
Define the two-sided kernel
\[
 k_d=
 \begin{cases}
  4^d, & d\le2,\\
  4^d\kappa^{d-2}, & d>2.
 \end{cases}
\]
Then
\[
 \E c_r\le b_r,
 \qquad
 b_r=4\sum_{j\ge0}v_jk_{r-j},
 \qquad r\ge1.
\]
Because \(\kappa=0.01\),
\[
 \sum_{d\in\mathbb Z}k_d
 =\sum_{d\le2}4^d+\sum_{d\ge3}4^d\kappa^{d-2}
 =\frac{64}{3}+\frac{64\kappa}{1-4\kappa}
 =22.
\]
In particular,
\[
 \sum_{r\ge1}b_r\le88\sum_jv_j.
\]

\medskip
\noindent\emph{Step 2: Entropy calculation.}
The probability vector \((k_d/22)_{d\in\mathbb Z}\) has finite entropy because both tails decay
geometrically. Put
\[
 V=\sum_jv_j,
\]
let \(X\) have law \((v_j/V)_j\), and let \(Y\) be independent of \(X\) with law
\((k_d/22)_d\). Extend the convolution to all \(r\in\mathbb Z\) by setting
\[
 \widetilde b_r=4\sum_jv_jk_{r-j}.
\]
Its total mass is \(88V\), and its normalized distribution is the law of \(X+Y\). Since \(X+Y\)
is a function of \((X,Y)\),
\[
 \Ent(X+Y)\le\Ent(X,Y)=\Ent(X)+\Ent(Y).
\]
Consequently,
\[
 \Phi(\widetilde b)
 =88V\Ent(X+Y)
 \le88\Phi(v)+88V\Ent(Y).
\]
The sequence \(b\) is obtained by restricting \(\widetilde b\) to \(r\ge1\). Coordinatewise
monotonicity of \(\Phi\), first for finite truncations and then by passage to the limit, gives
\[
 \Phi(b)\le88\Phi(v)+88V\Ent(Y).
\]

The closest competitor belongs to \(G_Q\), so \(|G_Q|\ge1\). Adding the artificial optimal arm
therefore at most doubles the weight of that group. Since
\[
 |G_j|4^j\le H_j,
\]
we have \(v_j\le2H_j\) for every \(j\). The monotonicity and homogeneity of \(\Phi\) now give
\[
 V\le2H,
 \qquad
 \Phi(v)\le\Phi((2H_j)_j)=2H\Ent(I).
\]
It follows that
\[
 \sum_r b_r\le CH,
 \qquad
 \Phi(b)\le CH(\Ent(I)+1),
\]
which proves the first set of assertions.

\medskip
\noindent\emph{Step 3: Logarithmically weighted cost.}
Set
\[
 f(z)=z\log\left(e+\frac{B}{\eta z}\right),
 \qquad f(0)=0.
\]
This function is increasing and concave. Hence Jensen's inequality, followed by
\(\E c_r\le b_r\), yields
\[
 \E f(c_r)\le f(\E c_r)\le f(b_r).
\]
Let \(N=\sum_rb_r\). For every \(r\) with \(b_r>0\),
\[
 e+\frac{B}{\eta b_r}
 \le\frac{N}{b_r}
       \left(e+\frac{B}{\eta N}\right).
\]
After taking logarithms, multiplying by \(b_r\), and summing,
\[
 \sum_rf(b_r)
 \le N\log\left(e+\frac{B}{\eta N}\right)+\Phi(b).
\]
Since \(N\le CH\), \(B\ge H\), and \(a=\log(1/\eta)\), monotonicity of \(f\) gives
\[
 N\log\left(e+\frac{B}{\eta N}\right)
 \le CH\bigl[a+1+\log(B/H)\bigr].
\]
Together with the entropy bound for \(b\), this proves \eqref{eq:uniform-charges}.

\medskip
\noindent\emph{Step 4: Localized estimate.}
Let \(\mathcal D\) be the set of \(s-1\) designated arms and put
\[
 W_{\mathrm{loc}}=H_{\mathrm{tail}}+(s-1)4^q.
\]
For this step, set \(c_r=0\) outside the rounds in \(\mathcal R\). Thus the cutoff \(r_0\), any
earlier termination, and the first successful Median call at or after round \(q+2\) are all
incorporated into the definition of \(c_r\).

Assign the designated arms artificially to group \(q\) and define
\[
 n_j^{\mathrm{loc}}
 =|\{i\in G_j:i\notin\mathcal D\}|
   +(s-1)\mathbf 1_{\{j=q\}},
 \qquad
 A_\ell^{\mathrm{loc}}
 =\sum_{j\ge\ell}n_j^{\mathrm{loc}}.
\]
By the assumptions on the nondesignated arms,
\[
 \sum_{j=0}^q n_j^{\mathrm{loc}}=K,
 \qquad
 \sum_{j=0}^q n_j^{\mathrm{loc}}4^j
 \le H_{\mathrm{tail}}+(s-1)4^q
 =W_{\mathrm{loc}}.
\]

For \(k\le q+1\), every designated arm is counted in
\(A_{k-1}^{\mathrm{loc}}\). Moreover, every nondesignated arm with
\(\Delta_i\le2\varepsilon_k=2^{1-k}\) belongs to a group of index at least \(k-1\). Therefore, if a
successful Median round \(k\le q+1\) is followed by another entered round,
\eqref{eq:uniform-progress} gives
\[
 |S_{k+1}|\le2A_{k-1}^{\mathrm{loc}}.
\]
The same decomposition according to the last successful Median call now yields, for
\(1\le r\le q+2\),
\[
 \E c_r
 \le4^r\left[
 K\kappa^{r-1}
 +2\sum_{k=1}^{r-1}
     \kappa^{r-k-1}A_{k-1}^{\mathrm{loc}}
 \right].
\]
Applying the kernel estimate from Step 1 and summing over these rounds gives
\[
 \E\sum_{r=1}^{q+2}c_r
 \le4\sum_{j=0}^q n_j^{\mathrm{loc}}4^j
          \sum_{r=1}^{q+2}k_{r-j}
 \le88W_{\mathrm{loc}}.
\]

It remains to control rounds after \(q+2\). The active set never grows. For $j \ge 1$, in order for 
\(c_{q+2+j}\) to contribute to the localized sum, the Median calls in rounds
\(q+2,\ldots,q+1+j\) must all fail; otherwise the sum stops after the first successful call.
Conditionally on the history upon entering round \(q+2\), these \(j\) failures have probability at
most \(\kappa^j\). Hence
\[
 \E c_{q+2+j}
 \le(4\kappa)^j\E c_{q+2}
 \le C W_{\mathrm{loc}}(4\kappa)^j,
 \qquad j\ge 1.
\]
Since \(4\kappa<1\), summing this geometric tail and combining it with the estimate for the earlier
rounds gives
\[
 \E\sum_{r\in\mathcal R}c_r
 \le C W_{\mathrm{loc}}
 =C\bigl[H_{\mathrm{tail}}+(s-1)4^q\bigr].
\]
\end{proof}

\subsection{Joint and localized errors}

Until the end of this section, all probabilities refer to the comparison process. We first control
the joint probability that several arms are eliminated before the rounds at which their gaps become
distinguishable.

Fix one stage, and let \(\mathcal E\) be the set of rounds in which Eliminate is called. Whenever
such a call is made in round \(r\), the first budget test has ensured
\(U_r+4w_r<B\), and the subsequent update is \(U_{r+1}=U_r+4w_r\). Hence, pathwise,
\begin{equation}\label{eq:uniform-riskbudget}
 \sum_{r\in\mathcal E}\beta_r
 =\frac{\rho}{B}\sum_{r\in\mathcal E}4w_r
 =\frac{\rho}{B}\sum_{r\in\mathcal E}4|S_r|4^r
 <\rho.
\end{equation}

For a suboptimal arm \(i\in G_q\), define
\[
 M_i
 =\mathbf 1\{i\text{ is removed by an Eliminate call in some round }r<q\}.
\]
Thus \(M_i=1\) means that \(i\) is removed before the algorithm reaches the scale associated with
its gap. For the optimal arm, define
\[
 M_*
 =\mathbf 1\{*\text{ is removed by an Eliminate call in any round}\}.
\]
We refer to either event as a \emph{premature elimination}, corresponding to \cite[definition B.8]{CLQ17}.

The next lemma combines \eqref{eq:uniform-riskbudget} with the product guarantee
for Eliminate. Its backward-induction argument adapts \cite[lemma B.9]{CLQ17} to the comparison
process.

\begin{lemma}\label{lem:uniform-error}
For a fixed collection of $k$ arms in one stage,
\begin{equation}\label{eq:uniform-joint}
 \Pp(M_i=1\text{ for every arm in the collection})\le\rho^k.
\end{equation}
\end{lemma}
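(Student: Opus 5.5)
We would prove \eqref{eq:uniform-joint} by backward induction over the rounds of the stage, with a potential given by the unused part of the Eliminate risk budget. Set $\rho_r=\rho(1-U_r/B)$. The first budget test guarantees $U_r+4w_r<B$ whenever Eliminate is called in round $r$. Hence, pathwise,
\[
 \beta_r=\frac{4w_r\rho}{B}\le\rho_r,
 \qquad
 \rho_{r+1}=\rho_r-\beta_r\ \text{ if Eliminate is called},
 \qquad
 \rho_{r+1}=\rho_r\ \text{ otherwise}.
\]
This is the pathwise form of \eqref{eq:uniform-riskbudget}. Since $\rho_1=\rho$, it suffices to prove the following claim, and the claim at $r=1$ gives \eqref{eq:uniform-joint}.

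\emph{Claim.} Conditional on the history at the start of round $r$, let $A$ be the set of arms of the fixed collection that are still active and still at risk. For the optimal arm, at risk means active. For $i\in G_q$, it means active with $r<q$. Then the conditional probability that every arm in $A$ is later prematurely eliminated is at most $\rho_r^{|A|}$.

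If some collection arm was already removed non-prematurely, or has passed its scale while still active, the target event is impossible, so the bound is trivial. The base case is round $R+1$. There no further Eliminate calls occur, so the probability is $\mathbf 1\{A=\varnothing\}=\rho_{R+1}^{0}$.

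\emph{Step 1: every at-risk arm clears the upper threshold.} We check that each arm of $A$ has mean at least the upper threshold $u_r=\widehat\mu_r-\tfrac58\varepsilon_r$ of the round-$r$ Eliminate call. In the comparison process, $\widehat\mu_r\le\mu_{\widehat i_r}+\varepsilon_r/8\le m+\varepsilon_r/8$, and this needs no validity assumption. So $u_r\le m-\varepsilon_r/2=m-2^{-r-1}$. For $i\in G_q$ with $q>r$,
\[
 \Delta_i\le2^{-q}\le2^{-r-1},
 \qquad\text{so}\qquad
 \mu_i\ge m-2^{-r-1}\ge u_r .
\]
The optimal arm trivially satisfies $\mu_*=m\ge u_r$. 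All these arms are fixed before the call, since they are determined by the history before the call and the fixed collection. The product guarantee for Eliminate therefore applies conditionally: for every $A'\subseteq A$, the probability that all of $A'$ is removed in round $r$ is at most $\beta_r^{|A'|}$.

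\emph{Step 2: the inductive step.} Let $X$ be the random set of arms of $A$ removed in round $r$. If $X\ne A$, the survivors lie in $A\setminus X$, possibly minus arms that stop being at risk. Arms that stop being at risk make the event impossible, so we may bound by the at-risk set $A\setminus X$. Applying the induction hypothesis at round $r+1$ and conditioning gives
\[
 \Pp(\text{all of }A\text{ premature}\mid\text{history})
 \le\sum_{A'\subseteq A}\Pp(X=A')\,(\rho_r-\beta_r)^{|A\setminus A'|}.
\]
We replace $\Pp(X=A')$ by the larger quantity $\Pp(A'\subseteq X)\le\beta_r^{|A'|}$; all terms are nonnegative. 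The binomial theorem then gives
\[
 \sum_{A'\subseteq A}\beta_r^{|A'|}(\rho_r-\beta_r)^{|A|-|A'|}=\rho_r^{|A|}.
\]
Here $\beta_r$ and $\rho_{r+1}$ are measurable with respect to the history before the call, so they are constants under the conditioning. In rounds where Fraction returns False, or the stage ends, no arm of $A$ is removed and $\rho_{r+1}=\rho_r$. The inductive step is then immediate.

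\emph{Expected obstacle.} The main difficulty is measurability bookkeeping. One must make sure that the inductive bound is stated conditionally on the entire pre-round history, including the Median, Mean and Fraction outcomes of round $r$. This legitimizes treating $\beta_r$, $u_r$ and the designated arms as fixed when invoking the product guarantee. One must also check that replacing ``exactly $A'$ removed'' by ``at least $A'$ removed'' is harmless, which it is because the coefficients are nonnegative. The deterministic sample limit only terminates the stage earlier, so it can only decrease the probability.
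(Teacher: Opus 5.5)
Your proposal is correct and follows the paper's proof essentially step for step. Your potential $\rho_r=\rho(1-U_r/B)$ is exactly the paper's remaining allowance $u=\rho-\sum_{v\in\mathcal E,\,v<r}\beta_v$, and the other ingredients are the same as the paper's: the threshold check via $\widehat\mu_r\le m+\varepsilon_r/8$, bounding exact-subset removal probabilities by the product guarantee $\beta_r^{|A'|}$, and the binomial identity $\sum_{A'\subseteq A}\beta_r^{|A'|}(\rho_r-\beta_r)^{|A|-|A'|}=\rho_r^{|A|}$.
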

\begin{proof}
The corrected mean estimate is at most $m+\varepsilon_r/8$, even if the true optimal arm was removed
earlier. Thus the upper elimination threshold in round $r$ is at most $m-\varepsilon_r/2$. For an arm in gap group $q>r$, $\Delta_i\le2^{-q}\le\varepsilon_r/2$, so its mean is at least
the upper elimination threshold. The optimal arm also lies above this threshold. Hence every arm
whose removal in round $r$ would be premature satisfies the hypothesis of the simultaneous-removal
bound.
Conditional on the past, a specified set of $j$ eligible arms is therefore eliminated in that call
with probability at most $\beta_r^j$.

Induct backward over the finitely many remaining rounds. If a specified arm has already been
removed at or after its gap-group index, the joint event is impossible. Immediately before a call
in round $r$, let
\[
 u=\rho-\sum_{\substack{v\in\mathcal E\\v<r}}\beta_v.
\]
This quantity is determined by the past. By \eqref{eq:uniform-riskbudget}, the current and future
calls together receive total error allowance at most $u$ on every continuation. 
Suppose that $k$ specified arms still have to be eliminated
prematurely. If the current round has reached the gap-group index of
any specified suboptimal arm, the joint event is impossible.
Otherwise all $k$ specified arms are eligible for premature
elimination in the current call. The induction claim is that their
joint probability is at most $u^k$.  
Condition on the history immediately before the next Eliminate call,
including the preceding Mean and Fraction outputs. Its error budget $\beta$ is then fixed, and
the total budget available to later calls is at most $u-\beta$ on every continuation. The conditional probability of any particular exact subset of $j$ premature removals
is at most $\beta^j$, since it is bounded by the probability that those $j$ arms are all removed. Conditional on the resulting history, the probability that the remaining $k-j$ arms are all
eliminated prematurely is at most $(u-\beta)^{k-j}$. Summing over
subsets gives
\[
 \sum_{j=0}^k \binom{k}{j}\beta^j(u-\beta)^{k-j}=u^k.
\]
If no further call is made and $k>0$ specified arms still require premature elimination, the
conditional probability is zero. If $k=0$, the event is already satisfied and its conditional
probability is one. These are the terminal cases of the induction. Initially the available budget
is $\rho$, so the induction proves \eqref{eq:uniform-joint}.
\end{proof}

When only a few arms are close to optimal, we use a sharper bound
involving the sum of the inverse squared gaps of the remaining,
easier arms. The next lemma, analogous to
\cite[lemma B.6]{CLQ17}, controls the probability of an invalid Eliminate call before the
algorithm reaches the scale needed to distinguish the difficult arms.

Order the arms so that
\[
 \mu_{[1]}=m>\mu_{[2]}\ge\cdots\ge\mu_{[K]},
 \qquad
 \Delta_{[i]}=m-\mu_{[i]}.
\]
Ties among the suboptimal arms may be ordered arbitrarily.

\begin{lemma}\label{lem:uniform-local-error}
Fix \(2\le s\le K\). If \(s=2\), set \(r_*=\infty\) and impose no restriction on \(B\). If
\(s\ge3\), assume
\[
 B<\Delta_{[s-1]}^{-2}
\]
and set
\[
 r_*=\left\lfloor\log_2\frac{1}{\Delta_{[s-1]}}\right\rfloor.
\]
Then
\begin{equation}\label{eq:uniform-local}
 \Pp(\text{an invalid Eliminate call occurs in some round }r<r_*)
 \le
 Cs\rho\frac{\sum_{i=s}^K\Delta_{[i]}^{-2}}{B}.
\end{equation}
\end{lemma}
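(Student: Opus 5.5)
The plan is to bound the probability of an invalid call by the expected sum of the error allowances \(\beta_r\) over the relevant rounds, and then to control that sum with the localized estimate in \zcref{lem:uniform-charging}. Conditional on the history just before an Eliminate call in round \(r\), the call is invalid with probability at most \(\beta_r=4w_r\rho/B\). Let \(\mathcal R'\) be the set of rounds \(r<r_*\) in which Eliminate is called, up to and including the first invalid call. A standard stopping argument then gives
\[
 \Pp(\text{invalid call in a round }r<r_*)\le\E\sum_{r\in\mathcal R'}\beta_r=\frac{4\rho}{B}\,\E\sum_{r\in\mathcal R'}c_r .
\]
Here \(c_r=w_r\) on executed rounds. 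It therefore suffices to show \(\E\sum_{r\in\mathcal R'}c_r\le Cs\sum_{i\ge s}\Delta_{[i]}^{-2}\).

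\textbf{Parameters for the localized estimate.} Designate the arms \([1],\dots,[s-1]\); this set includes the optimal arm. Take the cutoff \(r_0=r_*\), and let \(q\) be the largest gap-group index among the nondesignated arms. Then \(q\) is the group index of \([s]\), so \(4^q\le\Delta_{[s]}^{-2}\le H_{\rm tail}\), where \(H_{\rm tail}=\sum_{i\ge s}\Delta_{[i]}^{-2}\). I must check the hypothesis that each designated arm has mean at least every upper Eliminate threshold in rounds \(r<r_*\). The corrected estimate gives an upper threshold of at most \(m-\varepsilon_r/2=m-2^{-r-1}\le m-2^{-r_*}\). Since \(r_*=\lfloor\log_2(1/\Delta_{[s-1]})\rfloor\), we have \(\Delta_{[s-1]}\le2^{-r_*}\), so every designated mean is at least \(m-2^{-r_*}\) and the hypothesis holds. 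When \(s=2\) the only designated arm is the optimal one, \(r_*=\infty\), and the hypothesis is just the fact established earlier that the upper threshold is at most \(m-\varepsilon_r/2\). The localized estimate then yields
\[
 \E\sum_{r\in\mathcal R}c_r\le C\bigl[H_{\rm tail}+(s-1)4^q\bigr]\le Cs\,H_{\rm tail},
\]
which is the required bound, provided \(\mathcal R'\subseteq\mathcal R\).

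\textbf{Main obstacle: rounds after \(\mathcal R\) ends.} The remaining step is to show that no Eliminate call in a round \(r<r_*\) falls outside \(\mathcal R\), that is, after the first successful Median round \(k\ge q+2\) (with no earlier invalid call). Since every nondesignated arm has \(\Delta_i>2^{-(q+1)}\ge2\varepsilon_k\), the argument behind the "sharper conclusion" applies: all nondesignated arms lie below the lower thresholds. Hence round \(k\), if it is itself valid, leaves only designated arms. (If \(s=2\) the set is a singleton and the stage returns, so nothing remains to prove.) For \(s\ge3\), consider any later round \(r<r_*\). Every remaining arm has mean at least \(m-\Delta_{[s-1]}\ge m-2^{-r_*}\). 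The Fraction upper threshold is at most \(m-\varepsilon_r\le m-2^{1-r_*}\), which is strictly smaller. So no active arm lies below it, True is invalid, and the comparison process returns False. No Eliminate call is made, and hence no invalid call can occur in these rounds.

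A round rejected by a budget test makes no call, and truncation by the cutoff or by termination only shrinks the sum. Combining these observations gives \(\mathcal R'\subseteq\mathcal R\), and the display above proves \eqref{eq:uniform-local}. The assumption \(B<\Delta_{[s-1]}^{-2}\) is not used beyond fixing the setting for \(r_*\). If needed, the case \(CsH_{\rm tail}\ge B\) is in any case trivial by the pathwise bound \eqref{eq:uniform-riskbudget}.
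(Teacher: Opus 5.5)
Your reduction matches the paper's. That is, you bound the probability by $\frac{4\rho}{B}\E\sum_{r\in\mathcal R'}c_r$, designate the top $s-1$ arms, take the cutoff $r_0=r_*$, check that designated arms lie above every upper Eliminate threshold, and apply the localized part of \zcref{lem:uniform-charging}. The $s=2$ case is also fine.

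The gap is in your treatment of rounds after the first successful Median round $k\ge q+2$. You claim that a valid round $k$ ``leaves only designated arms,'' by the argument behind the sharper conclusion for $r\ge Q+2$. That argument used that the optimal arm is the \emph{only} arm above the lower thresholds. For $s\ge3$ it breaks in two ways.
\begin{enumerate}
\item The designated arms other than the optimum lie above the Fraction lower threshold. So if the tail fraction $\gamma$ of $S_k$ is below $\theta_k$, False is valid and the comparison process may return it. Then $S_{k+1}=S_k$ still contains tail arms. For example, take three designated arms and one tail arm: one arm below the threshold is fewer than $0.3\cdot4$.
\item If Fraction returns True, a valid Eliminate only guarantees that tail arms make up at most $0.1|S_{k+1}|$. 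The computation $n\le0.1(n+1)\Rightarrow n=0$ becomes $n\le0.1(n+s-1)$, which allows surviving tail arms once $s\ge10$.
\end{enumerate}
Your argument that True is invalid in every later round $r<r_*$ relies on there being no tail arms left, so it does not go through as written.

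The missing idea is the monotonicity of the fraction thresholds, which is exactly why $\theta_r$ is increasing. In a later round $v<r_*$, your computation shows every designated arm lies above the upper Fraction threshold. Hence the only arms below it are tail arms, and True is valid only if the tail fraction exceeds $\theta_{v-1}$.

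After round $k$, validity gives a bound on the tail fraction in each case:
\begin{itemize}
\item In the False case, the active set is unchanged and the tail fraction is $<\theta_k\le\theta_{v-1}$, because all tail arms lie below the Fraction lower threshold.
\item In the True case with a valid Eliminate, the tail fraction is $\le0.1<0.3\le\theta_{v-1}$, because all tail arms lie below the Eliminate lower threshold.
\end{itemize}
Either way True is invalid, so the comparison process returns False and the active set stays unchanged. Induction then shows that no further Eliminate call occurs before $r_*$. With this replacement you do get $\mathcal R'\subseteq\mathcal R$, and the rest of your proof goes through.
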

\begin{proof}
Stop the analysis after the first invalid Eliminate call, including the round in which that call
occurs. Set
\[
 q=\left\lfloor\log_2\frac{1}{\Delta_{[s]}}\right\rfloor,
 \qquad
 H_{\mathrm{tail}}=\sum_{i=s}^K\Delta_{[i]}^{-2}.
\]
Call the arms of ranks \(1,\ldots,s-1\) the difficult arms and those of ranks
\(s,\ldots,K\) the tail arms.

\medskip
\noindent\emph{Step 1: Separate the difficult and tail arms.}
For \(r<r_*\),
\[
 \Delta_{[s-1]}\le\frac{\varepsilon_r}{2}.
\]
When \(s=2\), this remains true because \(\Delta_{[1]}=0\). The corrected Mean output satisfies
\[
 \widehat\mu_r\le m+\frac{\varepsilon_r}{8}.
\]
Consequently, the upper mean thresholds in Fraction and Eliminate are at most
\[
 \widehat\mu_r-\frac{9}{8}\varepsilon_r\le m-\varepsilon_r,
 \qquad
 \widehat\mu_r-\frac{5}{8}\varepsilon_r
 \le m-\frac{\varepsilon_r}{2}.
\]
Every active difficult arm therefore lies above both upper thresholds.

The definition of \(q\) gives
\[
 \Delta_{[s]}>2^{-(q+1)}.
\]
Hence, if \(r\ge q+2\), then
\[
 \Delta_{[s]}>2\varepsilon_r.
\]
When Median succeeds in such a round, the corrected Mean output satisfies
\[
 \widehat\mu_r\ge m-\frac{\varepsilon_r}{4}.
\]
The lower mean thresholds in Fraction and Eliminate are therefore at least
\[
 m-2\varepsilon_r
 \qquad\text{and}\qquad
 m-\varepsilon_r,
\]
respectively. Every tail arm lies below both lower thresholds.

\medskip
\noindent\emph{Step 2: Control the calls after a successful Median round.}
Suppose that the first successful Median call with index at least \(q+2\) occurs in a round
\(r<r_*\). Let \(\gamma\) be the fraction of the current active set consisting of tail arms.

If Fraction returns False, its validity gives
\[
 \gamma<\theta_r,
\]
because every tail arm is below the lower Fraction threshold. The active set is unchanged.
In any later round \(v<r_*\), every arm below the upper Fraction threshold is a tail arm, whereas
\[
 \theta_{v-1}\ge\theta_r.
\]
It follows inductively that True is invalid in every such round. All subsequent Fraction outputs
before \(r_*\) are therefore False.

If Fraction returns True and the resulting Eliminate call is valid, every surviving tail arm is
below the lower elimination threshold. The fraction of tail arms in the returned set is therefore
at most \(0.1\). Since \(\theta_{v-1}\ge0.3\) in every later round \(v\),
the same induction shows that all subsequent Fraction outputs before
\(r_*\) are False.

Thus, after the first successful Median call in a round \(r\ge q+2\), either the Eliminate call in
that round is invalid or no further Eliminate call occurs before \(r_*\).

\medskip
\noindent\emph{Step 3: Bound the total weight of the relevant rounds.}
Let \(\mathcal R\) be the collection of entered rounds before \(r_*\), stopped after and including
the first successful Median round with index at least \(q+2\), if such a round occurs. Step 2 shows
that every possible first invalid Eliminate call before \(r_*\) occurs in a round belonging to
\(\mathcal R\).

Apply the localized part of \zcref{lem:uniform-charging} with cutoff \(r_0=r_*\), designating the
top \(s-1\) arms and assigning them artificially to group \(q\). Step 1 verifies the required
upper-threshold condition. Every tail arm has gap at least \(\Delta_{[s]}\), so its gap-group index
is at most \(q\). The lemma therefore gives
\[
 \E\sum_{r\in\mathcal R}|S_r|4^r
 \le C\bigl[H_{\mathrm{tail}}+(s-1)4^q\bigr].
\]
Since
\[
 4^q\le\Delta_{[s]}^{-2}\le H_{\mathrm{tail}},
\]
we obtain
\[
 \E\sum_{r\in\mathcal R}|S_r|4^r
 \le CsH_{\mathrm{tail}}.
\]

\medskip
\noindent\emph{Step 4: Sum the probabilities of the first invalid call.}
Before the first invalid Eliminate call, the optimal arm remains active. Conditional on the past,
the probability that the call in round \(r\) is invalid is therefore at most
\[
 \beta_r=\frac{4\rho|S_r|4^r}{B}.
\]
Summing over the possible locations of the first invalid call and using the preceding estimate,
\[
 \begin{aligned}
 \Pp(\text{an invalid Eliminate call before round }r_*)
 &\le
 \frac{4\rho}{B}
 \E\sum_{r\in\mathcal R}|S_r|4^r \le
 \frac{Cs\rho H_{\mathrm{tail}}}{B}.
 \end{aligned}
\]
Substituting the definition of \(H_{\mathrm{tail}}\) proves \eqref{eq:uniform-local}.
\end{proof}

\subsection{Summing errors across budget scales}

For each suboptimal arm, write
\[
 w_i=\Delta_i^{-2}.
\]
At a stage with budget \(B\), define
\[
 h=1+|\{i\ne *:w_i>B\}|,
 \qquad
 x=\frac1B\sum_{\substack{i\ne *\\w_i\le B}}w_i.
\]
Call the optimal arm and the \(h-1\) suboptimal arms with \(w_i>B\) \emph{hard}; call the remaining
suboptimal arms \emph{easy}. Thus \(Bx\) is the sum of the inverse squared gaps of the easy arms.

\medskip
\noindent\emph{Step 1: A joint bound for the hard arms.}
Suppose that a hard suboptimal arm \(i\in G_q\) is removed in an executed round \(r\). Its weight
satisfies
\[
 B<w_i<4^{q+1}.
\]
Moreover, the first budget test and \(|S_r|\ge2\) give
\[
 8\cdot4^r
 \le4|S_r|4^r
 <B
 <w_i
 <4^{q+1}.
\]
It follows that \(r<q\), so every removal of a hard suboptimal arm is premature. Every removal of
the optimal arm is premature by definition.

If a stage returns an incorrect arm, then the optimal arm and at least \(h-2\) of the \(h-1\) hard
suboptimal arms must have been removed. For \(h\ge2\), take a union bound over the hard suboptimal
arm that might remain. Applying \eqref{eq:uniform-joint} to each resulting collection of \(h-1\)
arms gives
\begin{equation}\label{eq:uniform-J}
 J_1=\rho,
 \qquad
 J_h=(h-1)\rho^{h-1}\quad(h\ge2)
\end{equation}
as upper bounds for the stage's wrong-output probability.

\medskip
\noindent\emph{Step 2: A bound using the easy arms.}
Suppose \(h<K\), so at least one easy arm is present. The hard suboptimal arms have ranks
\(2,\ldots,h\), while the easy arms have ranks \(h+1,\ldots,K\). Apply
\eqref{eq:uniform-local} with \(s=h+1\); when \(h=1\), use its \(s=2\) case.

For \(h\ge2\), the cutoff \(r_*\) in that lemma is the gap-group index of the hard suboptimal arm
with the largest gap. The calculation in Step 1 shows that every executed Eliminate call occurs
before this cutoff. Since a wrong output requires the optimal arm to be removed, it requires an
invalid Eliminate call before \(r_*\). The same conclusion holds for \(h=1\), for which
\(r_*=\infty\). Because
\[
 \sum_{i=h+1}^K\Delta_{[i]}^{-2}=Bx,
\]
the local bound gives
\begin{equation}\label{eq:uniform-L}
 \Pp(\text{wrong output in the stage})
 \le C(h+1)\rho x.
\end{equation}

If \(h=K\), every suboptimal arm satisfies \(\Delta_i<B^{-1/2}\). In any executed round,
\[
 8\cdot4^r<B,
\]
and hence \(\varepsilon_r=2^{-r}>B^{-1/2}\). The upper Fraction threshold is at most
\(m-\varepsilon_r\), which is below every arm mean. Thus True is not valid, every corrected
Fraction output is False, and the stage cannot accept. Its wrong-output probability is therefore
zero.

\medskip
\noindent\emph{Step 3: A necessary condition for acceptance.}
Assign truncated weights
\[
 v_*=1,
 \qquad
 v_i=\min\{w_i/B,1\}\quad(i\ne *),
\]
and write
\[
 v_{\mathrm{tot}}=\sum_i v_i=h+x.
\]
Every accepted stage satisfies
\begin{equation}\label{eq:uniform-certificate}
 \sum_i v_i(1-M_i)\le2.
\end{equation}

To verify this, consider a suboptimal arm \(i\in G_q\) removed in a
round \(r\ge q\). Then
\[
 Bv_i\le w_i<4^{q+1}\le4^{r+1}.
\]
The total \(Bv_i\) over all such removals in round \(r\) is therefore at most
\[
 |S_r|4^{r+1}=4|S_r|4^r,
\]
the increment of \(U\) in that round. The first budget test ensures that the sum of these increments
over all Eliminate calls is less than \(B\). Hence the total weight of suboptimal arms removed at or after their
gap-group indices is less than one.

The single returned arm contributes at most one more. If the output is wrong, then \(M_*=1\), so
the original optimal arm contributes zero to the left side of
\eqref{eq:uniform-certificate}. If the output is correct, the optimal arm is the returned arm.
This proves \eqref{eq:uniform-certificate}.

\medskip
\noindent\emph{Step 4: An exponential-moment bound for acceptance.}
Set
\[
 u=\log(1/\rho).
\]
By choosing the absolute constant \(c\) sufficiently small, we may assume \(u\ge4\). Since
\(M_i\in\{0,1\}\),
\[
 \exp\left(u\sum_i v_iM_i\right)
 =\prod_i\left[1+(e^{uv_i}-1)M_i\right].
\]
All coefficients in this product expansion are nonnegative. Applying
\eqref{eq:uniform-joint} to each product of indicators gives
\[
 \E\exp\left(u\sum_i v_iM_i\right)
 \le\prod_i\left[1+\rho(e^{uv_i}-1)\right].
\]
For \(0\le v\le1\), convexity gives
\[
 e^{uv}-1\le v(e^u-1).
\]
Therefore
\[
 \begin{aligned}
 \E\exp\left(u\sum_i v_iM_i\right)
 &\le
 \exp\left(\rho(e^u-1)\sum_i v_i\right)\\
 &\le e^{v_{\mathrm{tot}}},
 \end{aligned}
\]
because \(\rho(e^u-1)=1-\rho\le1\).

On acceptance, \eqref{eq:uniform-certificate} implies
\[
 \sum_i v_iM_i\ge v_{\mathrm{tot}}-2.
\]
If \(v_{\mathrm{tot}}\ge4\), Markov's inequality consequently gives
\begin{equation}\label{eq:uniform-accept}
 \begin{aligned}
 \Pp(\text{accept})
 &\le e^{v_{\mathrm{tot}}}\rho^{v_{\mathrm{tot}}-2}\\
 &\le\rho^{v_{\mathrm{tot}}/4}.
 \end{aligned}
\end{equation}
The final inequality uses \(u=\log(1/\rho)\ge4\). Since every incorrect output is an acceptance,
\eqref{eq:uniform-accept} also bounds the wrong-output probability.

The preceding exponential-moment argument follows
\cite[lemmas B.11--B.12 and section B.5]{CLQ17}, with the truncated weights above.

\medskip
\noindent\emph{Step 5: Sum over the geometric budget sequence.}
For a fixed value \(h<K\), the corresponding stages form an interval in the sequence
\(B_t=100^t\). Throughout this interval the easy-arm set is fixed, and each successive stage
replaces \(x\) by \(x/100\). We divide the stages into three ranges.

If \(x<1\) and \(h=1\), summing \eqref{eq:uniform-L} over the geometric sequence of \(x\)'s gives
a contribution of at most \(C\rho\). If \(x<1\) and \(h\ge2\), then
\[
 \min\{J_h,C(h+1)\rho x\}
 \le\sqrt{J_hC(h+1)\rho x}
 \le Ch\rho^{h/2}\sqrt{x}.
\]
For fixed \(h\), the sum of \(\sqrt{x}\) over these stages is at most \(10/9\). Their total
contribution is therefore at most \(Ch\rho^{h/2}\), and summing over \(h\ge2\) gives \(O(\rho)\).

If \(1\le x<16\), there is at most one stage for each \(h\), because successive positive values of
\(x\) differ by a factor of \(100\). Hence \eqref{eq:uniform-J} gives a total contribution bounded
by
\[
 J_1+\sum_{h\ge2}J_h=O(\rho).
\]

Finally, suppose \(x\ge16\). For each fixed \(h\), enumerate these stages backward from the last one
in this range. At the \(j\)th preceding stage,
\[
 x\ge16\cdot100^j.
\]
Using \(v_{\mathrm{tot}}=h+x\) in \eqref{eq:uniform-accept}, the total contribution of this range is
at most
\[
 \sum_{h\ge1}\rho^{h/4}
 \sum_{j\ge0}\rho^{4\cdot100^j}
 =O(\rho).
\]

Stages with \(h=K\) contribute zero by Step 2. Generate the independent
data for every stage in advance, so that the output of each stage is
defined even if that stage is never reached. The event that the
comparison process returns an incorrect arm is contained in the union
of these stagewise wrong-output events. The preceding bounds and the
union bound therefore give probability at most \(C\rho\). Combining
this estimate with \eqref{eq:uniform-coupling}, the error 
probability of the actual algorithm is at most
\[
 C\eta+C\rho=C\eta+C\eta^2.
\]
Since \(\eta=c\delta\), choosing the absolute constant \(c>0\) sufficiently small makes this
quantity at most \(\delta\).

\section{The expected sample complexity}\label{sec:expected-cost}

It remains to bound the unconditional expected stopping time \(\E T\). We first show that the
outer Mean and Fraction calls use only a fixed fraction of each stage budget. We then show that,
once the budget is sufficiently large, a stage accepts with probability at least \(199/200\).
Together with the deterministic sample limit and the geometric growth of the stage budgets, this
will control the expected cost of all restarts.

\subsection{The cost of the outer tests}

The increment \(\theta_r-\theta_{r-1}\) is indexed by
\(j=R_t+1-r\), the number of permitted rounds from \(r\) through
\(R_t\). After factoring out \(4^{R_t}\), its polynomial growth is
dominated by the resulting geometric decay. The series is summable
independently of the active sets.

\begin{lemma}\label{lem:uniform-aux}
There is an absolute integer \(J\) such that, in every stage \(t\) and on every sample path, the total
number of samples used by the outer Mean and Fraction calls is at most \(B_t/100\).
\end{lemma}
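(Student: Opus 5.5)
Since the outer Mean and Fraction calls have deterministic costs, the proof is a pathwise sum over rounds \(r=1,\dots,R\) with \(R=R_t\); each stage makes at most one such pair per round. No probability is needed, and the active sets play no role. The cost bounds do not depend on \(|S_r|\).

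\emph{Mean calls.} The call in round \(r\) costs at most \(C\varepsilon_r^{-2}\log(1/\alpha_{t,r})=C4^r\log(50t^2r^2/\eta)\). This is at most \(C4^r(a+\log(t+1)+\log(r+1))\). Summing the geometric series gives at most \(C4^R(L_t+\log(R+1))\). Since \(4^R\le 4^{-J}B_t/L_t\), this is at most \(C4^{-J}B_t(1+\log(R+1)/L_t)\). We have \(R\le\log_4 B_t=t\log_4 100\), so \(\log(R+1)\le C\log(t+1)\le CL_t\). Hence the Mean calls cost at most \(C4^{-J}B_t\).

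\emph{Fraction calls.} In round \(r\) the mean thresholds differ by \(u-l=\tfrac58\varepsilon_r\). The fraction thresholds differ by \(\theta_r-\theta_{r-1}=1/(10j^2)\) with \(j=R+1-r\), which lies in \((0,0.1]\) as required. The stated cost is at most
\[
 C4^r\log(1/\alpha_{t,r})\,j^4\log(10j^2).
\]
Write \(4^r=4^{R+1-j}\) and bound \(\log(1/\alpha_{t,r})\le C(L_t+\log(R+1))\le CL_t\) as before. The total is then at most
\[
 C4^R L_t\sum_{j\ge1}4^{-j}j^4\log(10j^2)\le C'4^RL_t\le C'4^{-J}B_t .
\]
This reindexing from the final round is the main point: the polynomial factor \(j^4\log j\) is absorbed by the geometric weight \(4^{-j}\). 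The resulting constant does not depend on \(R\) or \(t\).

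\emph{Conclusion.} Adding the two bounds, the outer tests cost at most \(C''4^{-J}B_t\) on every path. Choose the absolute integer \(J\) with \(C''4^{-J}\le1/100\). If \(R_t=0\), the stage rejects without sampling and the bound is trivial.

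The only delicate step is checking that \(\log(1/\alpha_{t,r})\lesssim L_t\) uniformly in \(r\le R_t\), which uses \(R_t=O(t)\) and \(a\ge\log(1/c)>0\). One must also confirm that the parameter ranges required by the Fraction primitive hold, so that its cost formula applies.
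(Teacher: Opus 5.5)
Your proposal is correct and follows essentially the same approach as the paper. Both arguments bound $\log(1/\alpha_{t,r})\le CL_t$ uniformly over $r\le R_t$ using $R_t=O(t)$, reindex by $j=R_t+1-r$ so that the factor $j^4\log(10j^2)$ is absorbed by $4^{-j}$, and then choose $J$ using $L_t4^{R_t}\le4^{-J}B_t$; the only cosmetic difference is that you sum the Mean costs separately, whereas the paper bounds both outer calls together by $C4^rL_tj^6$.
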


\begin{proof}
For \(1\le r\le R_t\), we have
\[
 R_t\le\log_4 B_t=(\log_4 100)t.
\]
Since
\[
 \alpha_{t,r}=\frac{\eta}{50t^2r^2},
\]
it follows that
\[
 \log\frac1{\alpha_{t,r}}
 =a+\log 50+2\log t+2\log r
 \le C\bigl[a+\log(t+1)\bigr]
 =CL_t.
\]

Fix a permitted round \(r\) and put
\[
 j=R_t+1-r.
\]
The Mean call has accuracy \(\varepsilon_r/8\), so its cost is at most $
 C4^rL_t$. 
The two mean thresholds in the Fraction call differ by $
 \frac58\varepsilon_r$, 
while its fraction thresholds differ by
\[\theta_r-\theta_{r-1}=\frac1{10j^2}.\]
The Fraction cost is therefore at most
$C4^rL_t\,j^4\log(10j^2)$. 
Since \(j\ge1\), the combined cost of the two outer calls in round \(r\) is at most
\[
 C4^rL_tj^6.
\]

Summing over all permitted rounds, including rounds the stage may not reach, gives
\[
 \begin{aligned}
 \text{total outer-test cost}
 &\le
 CL_t\sum_{r=1}^{R_t}4^r(R_t+1-r)^6\\
 &=
 CL_t4^{R_t}
 \sum_{j=0}^{R_t-1}4^{-j}(j+1)^6\\
 &\le
 CL_t4^{R_t}
 \sum_{j=0}^{\infty}4^{-j}(j+1)^6.
 \end{aligned}
\]
The last series is finite and yields an absolute constant.

Whenever \(R_t\ge1\), its definition gives
\[
 L_t4^{R_t}\le4^{-J}B_t.
\]
Choose the absolute integer \(J\) sufficiently large that the resulting constant multiple of
\(4^{-J}B_t\) is at most \(B_t/100\). If \(R_t=0\), the stage rejects before making any outer call,
so its outer-test cost is zero.
\end{proof}

\subsection{Acceptance at a sufficient budget}

The next lemma fixes the absolute multiplier \(M\) in the deterministic sample limit. It also
introduces an absolute constant \(A\) used only in the analysis.

\begin{lemma}\label{lem:uniform-success}
Define
\[
 F=H(a+\Ent(I))+D\ell_g,
 \qquad
 \ell_g=\log\log(e^e/g).
\]
There are absolute constants \(M\) and \(A\) such that every stage with \(B_t\ge AF\) accepts with
probability at least \(199/200\).
\end{lemma}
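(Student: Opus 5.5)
We need to show that with $B=B_t\ge AF$, a stage accepts with probability at least $199/200$. We identify the ways a stage can reject: (i) $R_t=0$ or the last permitted round ends without a singleton; (ii) the first budget test triggers; (iii) the second budget test triggers; (iv) the deterministic sample limit $\lceil MB\rceil$ is reached. We work in the comparison process and add the disagreement probability $C\eta/t^2\le C\eta$ from \eqref{eq:uniform-coupling}. We also add the probability $C\rho$ of an invalid Eliminate call in the stage; by \eqref{eq:uniform-riskbudget}, this is at most $\sum\beta_r<\rho$ before the first invalid call. Both are tiny once $c$ is small. So we may assume every Eliminate call is valid, and then bound each rejection mode by a small constant via Markov's inequality.

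\emph{Round cap.} By the progress analysis, every successful Median round $r\ge Q+2$ leaves the optimal singleton. Since $D\ell_g\le F\le B/A$, we have $4^{Q}\le D\le B/(A\ell_g)$. Using $\log(t+1)\le C\log\log B$, we get $L_t\le a+C\log\log B$. We need $R_t\ge Q+2+j$ for a few extra rounds. This requires $4^{Q+J+2+j}L_t\le B$, i.e.\ $D(a+\log\log B)\lesssim B/4^{j}$. The $a$ part follows from $Da\le Ha\le F$. The $\log\log B$ part needs $D\log\log B\lesssim B$. If $B\le D^2$, then $\log\log B\le C\ell_g$. Otherwise $D\log\log B\le\sqrt B\log\log B\ll B$. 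With $A$ large this gives room for $j$ extra rounds, so failing to reach a successful Median round at index $\ge Q+2$ has probability at most $\kappa^{j}$.

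\emph{Budget tests.} Since $B\ge H$, \eqref{eq:uniform-charges} applies. Markov gives $U\le 4\sum_r c_r$ with $\E\le CH\le B/400$ once $A$ is large. For $V$, $z_r\le w_r\log(e+B/(\eta w_r))$, so $\E\sum z_r\le CH[a+\Ent(I)+1+\log(B/H)]$. The hard term is $H\log(B/H)$. Put $y=B/H\ge A$; then $H\log y=B\log y/y\le B/10^4$ for large $A$, and the remaining part is $\le CF\le B/10^4$. Markov then makes both rejection probabilities at most $1/1000$.

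\emph{Sample limit.} By \zcref{lem:uniform-aux}, the outer tests cost at most $B/100$ deterministically. Median costs $O(w_r)$ deterministically. Eliminate has conditional expected cost $\le C w_r\log(1/\beta_r)=Cw_r\log(B/(4\rho w_r))\le C z_r$, since $\log(1/\rho)=2a$. By the previous paragraph, the expected total is at most $CB$. Choosing $M$ as a large multiple of that constant, Markov bounds the limit probability by $1/1000$. Summing all contributions gives at most $1/200$.

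The main obstacle is the round-cap step: showing that $R_t$ exceeds $Q+2$ by a margin. This is exactly where the $D\ell_g$ term in $F$ absorbs the $\log(t+1)$ inside $L_t$, and the case split $B\le D^2$ versus $B>D^2$ must be handled carefully.
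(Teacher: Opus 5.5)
Your proof is correct. Its skeleton matches the paper's. You pass to the comparison process, pay $C\eta/t^2$ for disagreement and $\rho$ for an invalid Eliminate call, and then bound three remaining rejection modes: the budget tests (Markov plus \eqref{eq:uniform-charges}), the final round (rounds beyond $Q+2$ in which every Median call must fail), and the sample limit. Two sub-steps are argued differently from the paper.

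\emph{Round count.} The paper works at the first index $t_0$ with $B_{t_0}\ge AF$. It bounds $t_0$ via $\log D=2(e^{\ell_g}-e)\le 2e^{f}$, where $f=F/D$. It then propagates $B_t/L_t\ge (A/C_A)D$ to later stages by monotonicity of $B_t/L_t$. You instead treat each $B\ge AF$ directly, using $\log(t+1)\le\log\log B$ and splitting into two cases. When $B\le D^2$, you get $D\log\log B\le CD\ell_g\le CF$. When $B>D^2$, you get $D\log\log B\le\sqrt B\log\log B=o(B)$, using $B\ge A$. Your version shows more transparently why $D\ell_g$ is the right additive term; the paper's avoids the case split.

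\emph{Sample limit.} The paper uses the pathwise bound $\sum_{\text{executed }r}z_r<100B$ enforced by the $V$-test. This gives expected uncapped cost at most $C_0B$ on every instance and every path, including after invalid subroutine calls. So $M$ is fixed without reference to $A$, $F$, or the charging lemma. Your sentence \emph{by the previous paragraph, the expected total is at most $CB$} only controls the process stopped at the first invalid Eliminate call, since \eqref{eq:uniform-charges} concerns that stopped process. This suffices under your convention of separately paying $\rho$ for invalid calls, and it uses $B\ge AF\ge H$. Citing the $V$-cap gives the bound without either restriction.

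One small slip: Markov with mean at most $B/400$ gives $1/400$, not $1/1000$; just enlarge $A$.
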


\begin{proof}
Write \(B=B_t\) and \(R=R_t\).

\medskip
\noindent\emph{Step 1: Control rejection at the sample limit.}
Temporarily remove the deterministic sample limit, retaining both budget tests and the restriction
to \(R\) rounds. On every executed round, \(w_r<B/4\). If Eliminate is called, then
\[
 \beta_r=\frac{4\eta^2w_r}{B},
\]
and
\[
 \log\frac1{\beta_r}
 =\log\frac{B}{4\eta^2w_r}
 \le2\log\frac{B}{\eta w_r}.
\]
The conditional expected Eliminate cost in round \(r\) is therefore at most \(Cz_r\). The Median
cost is at most \(Cw_r\le Cz_r\). Since
\[
 \sum_{\text{executed }r}z_r<100B,
\]
the tower property bounds the expected total cost of all Median and Eliminate calls by \(CB\).
By \zcref{lem:uniform-aux}, the outer Mean and Fraction calls use at most \(B/100\) additional
samples. Thus the expected cost of the uncapped stage is at most $
 C_0B$ 
on every instance, including paths on which subroutine errors occur.

Choose the absolute constant \(M\) sufficiently large. Markov's inequality then shows that the
uncapped stage uses more than \(MB\) samples with probability at most \(1/2000\). Coupling the
capped and uncapped stages until the cap is reached shows that imposing the deterministic limit
causes rejection with probability at most \(1/2000\). The same estimate holds for the comparison
process, since the preceding cost bound applies equally after correcting the outer outputs.

\medskip
\noindent\emph{Step 2: Control rejection at the budget tests.}
In the comparison process, an Eliminate call in round \(r\) is invalid with conditional probability
at most \(\beta_r\). Consequently, \eqref{eq:uniform-riskbudget} and the tower property give
\[
 \Pp(\text{some Eliminate call is invalid})\le\rho.
\]

Stop the comparison process at its first invalid Eliminate call, but include the round containing
that call and any round rejected by a budget test. If the \(U\)-test rejects, then
\[
 \sum_r c_r\ge\frac{B}{4}.
\]
If the \(V\)-test rejects, then
\[
 \sum_r c_r\log\left(e+\frac{B}{\eta c_r}\right)\ge100B.
\]
Indeed, the corresponding sum without the \(e\) is \(V_r+z_r\), which is at least \(100B\).

For \(B\ge H\), Markov's inequality and \eqref{eq:uniform-charges} therefore give
\begin{equation}\label{eq:uniform-guardprob}
 \Pp(\text{a budget test rejects before the first invalid Eliminate call})
 \le
 \frac{CH}{B}
 \left[a+\Ent(I)+1+\log\frac{B}{H}\right].
\end{equation}

To bound the right side, put
\[
 \sigma=a+\Ent(I),
 \qquad
 y=\frac{B}{H}.
\]
Since \(F\ge H\sigma\), the condition \(B\ge AF\) gives \(y\ge A\sigma\). The function
\[
 y\longmapsto\frac{\sigma+1+\log y}{y}
\]
is decreasing for \(y\ge1\). Hence the right side of \eqref{eq:uniform-guardprob} is at most
\[
 C\frac{\sigma+1+\log(A\sigma)}{A\sigma}
 \le C\frac{1+\log A}{A}.
\]
It can therefore be made arbitrarily small by choosing the absolute constant \(A\) sufficiently
large.

\medskip
\noindent\emph{Step 3: Ensure sufficiently many rounds.}
Set
\[
 f=\frac{F}{D}.
\]
Since \(H\ge D\),
\[
 f=\frac{H}{D}[a+\Ent(I)]+\ell_g\ge a+\ell_g.
\]
Let \(t_0\) be the first stage index for which \(B_{t_0}\ge AF\). By minimality,
\[
 B_{t_0}<100AF=100ADf,
\]
and therefore
\[
 t_0+1
 \le
 2+\frac{\log(100A)+\log D+\log f}{\log100}.
\]
The definition of \(\ell_g\) gives
\[
 \log D=2\log(1/g)=2(e^{\ell_g}-e)\le2e^f.
\]
Since \(f\ge1\), we also have \(\log f\le e^f\). It follows that
\[
 t_0+1\le C[1+\log A+e^f].
\]
Using \(a\le f\), we obtain
\[
 \begin{aligned}
 L_{t_0}
 &=a+\log(t_0+1)\le2f+C+\log(1+\log A)
 \le C_Af,
 \end{aligned}
\]
where
\[
 C_A=O(1+\log\log(A+e)).
\]
Consequently,
\[
 \frac{B_{t_0}}{L_{t_0}}
 \ge\frac{A}{C_A}D.
\]
The ratio \(B_t/L_t\) is increasing in \(t\), so the same bound holds at every later stage.

Fix an integer \(b\ge1\). Because \(A/C_A\to\infty\) as \(A\to\infty\), we may choose \(A\) large
enough that every stage with \(B_t\ge AF\) satisfies
\[
 R_t\ge Q+2+b.
\]
Consider the comparison process without the deterministic sample limit.
Suppose that no budget test rejects and no Eliminate call is invalid.
If the active set has not become a singleton by the last permitted round,
then every Median call in rounds
\[
 Q+2,Q+3,\ldots,R_t
\]
must fail. Each call uses fresh samples and has conditional failure
probability at most \(\kappa\). Since the list contains at least \(b+1\)
calls, successive conditioning gives
\[
 \Pp\left(
 \begin{gathered}
  \text{no singleton by the last permitted round,}\\
  \text{no budget test rejects, and}\\
  \text{no Eliminate call is invalid}
 \end{gathered}
 \right)
 \le\kappa^{b+1}.
\]

\medskip
\noindent\emph{Step 4: Combine the rejection probabilities.}
First choose \(b\) so that \(\kappa^{b+1}\) is sufficiently small. Then choose \(A\) large enough
both to obtain the required number of rounds and to make
\eqref{eq:uniform-guardprob} sufficiently small. These choices can ensure
that, in the comparison process without the deterministic sample limit,
the probability of rejection before the first invalid Eliminate call,
either at a budget test or at the final round, is at most \(1/2000\).

The probability of an invalid Eliminate call is at most \(\rho\). By
\eqref{eq:uniform-coupling}, the within-stage probability that the actual and comparison processes
disagree is at most \(C\eta/t^2\). Choose the absolute constant \(c\) sufficiently small that
\[
 \rho\le\frac1{2000},
 \qquad
 \frac{C\eta}{t^2}\le\frac1{2000}
\]
for every \(t\ge1\). Step 1 bounds the probability of rejection at the sample limit by
\(1/2000\). Summing these four contributions gives a rejection probability below \(1/200\).
Thus every stage with \(B_t\ge AF\) accepts with probability at least \(199/200\).
\end{proof}

\subsection{Summing the stage costs}

Let \(t_*\) be the first index such that
\[
 B_{t_*}\ge AF.
\]
By minimality,
\[
 B_{t_*}<100AF.
\]
Every stage uses fresh data and begins with all \(K\) arms. Hence
\zcref{lem:uniform-success}, applied conditionally after each rejection, gives
\[
 \Pp(\text{the algorithm reaches stage }t_*+k)\le200^{-k},
 \qquad k\ge0.
\]
Since stage \(t\) uses at most \(\lceil MB_t\rceil\) samples,
\[
 \begin{aligned}
 \E T
 &\le
 \sum_{t<t_*}\lceil MB_t\rceil
 +\sum_{k\ge0}200^{-k}\lceil M100^kB_{t_*}\rceil\\
 &\le CB_{t_*}
 \le CF.
 \end{aligned}
\]
The reach probabilities tend to zero, so the algorithm also stops almost surely.

Finally,
\[
 a=\log(1/\eta)=\log(1/\delta)+\log(1/c).
\]
Because \(c\) is absolute and \(\delta<0.01\), \(a\) is bounded by an absolute constant multiple of
\(\log(1/\delta)\). Therefore
\[
 \E T
 \le
 C\left\{
 H(\log(1/\delta)+\Ent(I))
 +D\log\log(e^e/g)
 \right\}.
\]
Together with the correctness bound from the preceding section, this proves
\zcref{thm:uniform}.

\begin{proof}[Proof of \zcref{cor:almost}]
Apply the lower bound in \zcref{thm:instance} to
$H(\log(1/\delta)+\Ent(I))$ in \zcref{thm:uniform}. 
\end{proof}

\appendix
\zcsetup{countertype={section=appendix}}
\section{Conventions in the original conjectures}\label{app:conventions}
Our formulation follows that of \cite[section 1 and definition 3.1]{CL16} with minor convention differences. We record these below and explain how \zcref{thm:instance,thm:uniform} still establish
\cite[conjectures 3.2 and 3.5]{CL16} as formulated therein.

\subsection{Gap-group boundary conventions}
Choose a suboptimal arm $i$ at random with probability $\Delta_i^{-2}/H$. Let $R$ be the index of its gap group under our convention and $S$ the corresponding index under
the convention $[2^{-s},2^{-s+1})$ in~\cite{CL16}, including the group containing the endpoint $1$. Every cell of either partition meets at most two cells of the
other. Hence
\[
 \Ent(S\mid R)\le\log2,\qquad
 \Ent(R\mid S)\le\log2.
\]
The entropy chain rule gives
\[
 |\Ent(R)-\Ent(S)|\le\log2.
\]
Thus the contributions $H\Ent(I)$ differ by an additive $O(H)$, absorbed by $H\log(1/\delta)$. The
definition of the gap groups in~\cite[definition 1.7]{CLQ17} already agrees with ours.

\subsection{Iterated logarithm}
For $x=\log(1/g)\ge0$,
\[
 0\le\log(e+x)-\log\max\{e,x\}\le\log2.
\]
Our shifted iterated logarithm and the clipped form used in \cite{CL16} therefore differ by at most a constant. Since
$D\le H$, the resulting $O(H)$ change is absorbed by $H\log(1/\delta)$. The unregularized logarithm
is undefined at $g=1$ and can be negative for large gaps; clipping makes the sample-complexity
convention explicit.

\subsection{Stopping conventions}
The correctness definition in \cite{CL16} only requires a correct output with probability at least $1-\delta$, not that the algorithm also almost surely stops.
The algorithms proving the upper bounds stop almost
surely on every instance in $\mathcal S_K$, and therefore satisfy either correctness definition. The lower-bound proof does not require the algorithm to stop on the Gaussian alternative instances, as it uses only their correct-output probabilities at finite horizons. 
If the expected count on the target instance is infinite, its lower bound is immediate. Otherwise the algorithm stops almost surely on that instance, permitting the limits in the lower-bound proof. Thus the bounds also cover the original correctness convention.

These comparisons preserve all bounds up to absolute constants. Together with \zcref{cor:almost},
they give the stated formulations of the original conjectures.

\section{Sampling subroutines}\label{app:primitives}
We prove the subroutine guarantees used in \zcref{sec:algorithm} for $1$-sub-Gaussian reward
laws. Input sets may have tied optimal arms. Different calls use fresh samples. Within Fraction, arm identities are sampled independently across draws; conditional on
these identities, sample blocks are independent. Fraction samples are also independent of
the estimation blocks used to delete arms. All sample sizes are rounded up to integers. Accuracy
parameters and differences between mean thresholds are at most one, so rounding is absorbed in the
cost bounds.

\subsection{Mean estimation}
For independent samples $X_1,\ldots,X_n$ from a $1$-sub-Gaussian arm with mean $\mu$,
independence gives
\[
 \E\exp\left(\lambda\sum_{s=1}^n(X_s-\mu)\right)\le e^{n\lambda^2/2}.
\]
Writing $\overline X_n=n^{-1}\sum_{s=1}^nX_s$ and applying the Chernoff bound with
$\lambda=x$ and $\lambda=-x$ yields, for $x>0$,
\begin{equation}\label{eq:subgaussian-tail}
 \Pp(\overline X_n-\mu\ge x)\le e^{-nx^2/2},\qquad
 \Pp(\overline X_n-\mu\le-x)\le e^{-nx^2/2}.
\end{equation}
This is the standard sub-Gaussian sample-mean bound; see \cite[corollary 5.5]{LS20}.
For accuracy $e$ and error level $\alpha$, take
\[
 n(e,\alpha)=\left\lceil2e^{-2}\log(2/\alpha)\right\rceil
\]
samples and return their average. Equation \eqref{eq:subgaussian-tail} gives
\[
 \Pp(|\widehat\mu-\mu|>e)\le2e^{-ne^2/2}\le\alpha.
\]
The count is deterministic and of order $e^{-2}\log(1/\alpha)$ on the stated parameter range.
Fresh samples give the same bound conditional on the history before a call.

\subsection{Median elimination}
We use the median-elimination construction of \cite{EMM06}, in the form stated in \cite[fact
5.2]{CLQ17}. Start with $S_1=S$. While $|S_j|\ge2$, put
\[
 e_j=\frac e4\left(\frac34\right)^{j-1},\qquad
 a_j=0.01\cdot2^{-j}.
\]
Estimate each current mean to accuracy $e_j/2$ with individual failure probability at most $a_j/3$,
and retain the $\lceil|S_j|/2\rceil$ arms with the largest estimates, using fixed tie-breaking.
Return the remaining arm.

Conditional on the current set, fix one optimal arm. Its estimate is inaccurate with probability at
most $a_j/3$. If it is accurate and no arm whose mean is within $e_j$ of its mean survives, then at
least $\lceil|S_j|/2\rceil$ retained arms have inaccurate estimates. The expected number of
inaccurate estimates is at most $|S_j|a_j/3$, so Markov's inequality bounds the probability of this
event by $2a_j/3$. Thus the maximum true mean drops by more than $e_j$ with conditional probability
at most $a_j$. As
\[
 \sum_{j\ge1}e_j=e,\qquad \sum_{j\ge1}a_j=0.01,
\]
the returned arm has mean within $e$ of the original maximum with probability at least $0.99$.

Until termination, $|S_j|\le2|S|2^{-(j-1)}$. The deterministic total cost is bounded by
\[
 C|S|e^{-2}\sum_{j\ge1}\left(\frac89\right)^{j-1}(j+1)
 =O(|S|e^{-2}).
\]
Fresh samples give the same guarantee conditional on an arbitrary history before the call.

\subsection{Fraction testing}
Write $\varepsilon=u-l$ and $d=\theta_+-\theta_-$. Draw
\[
 m_F=\left\lceil36d^{-2}\log(2/\alpha)\right\rceil
\]
independent uniform arm identities from $S$, with replacement. For each draw, obtain a fresh mean
estimate of accuracy $\varepsilon/2$ and error level $d/6$. Let $Y_j$ indicate that this estimate is
below $(l+u)/2$. Return True precisely when
\[
 m_F^{-1}\sum_{j=1}^{m_F}Y_j>(\theta_-+\theta_+)/2.
\]
The variables $Y_j$ are independent and identically distributed. If the fraction of arms with mean
below $l$ is at least $\theta_+$, then
\[
 \E Y_j\ge\theta_+-d/6.
\]
If the fraction with mean below $u$ is at most $\theta_-$, then
\[
 \E Y_j\le\theta_-+d/6.
\]
In the first case, False is an invalid answer; in the second, True is invalid. Either invalid answer
requires a deviation of at least $d/3$ from the Bernoulli mean. Hoeffding's inequality bounds its
probability by
\[
 \exp(-2m_Fd^2/9)\le\alpha/2.
\]
These implications prove the stated True/False guarantees. The deterministic number of samples is
\[
 O\left(\varepsilon^{-2}\log(1/\alpha)d^{-2}\log(1/d)\right)
\]
for $0<\varepsilon\le1$, $0<d\le0.1$, and $0<\alpha<0.1$. This is Algorithm 2 and Fact 5.3
of~\cite{CLQ17} with explicit rounding.

\subsection{Elimination}\label{app:elimination}
We use the elimination procedure in \cite[algorithm 3 and fact 5.4]{CLQ17}, returning immediately if
the active set is empty and retaining arms whose estimates equal the deletion threshold. Fraction testing and deletion use separate samples. Put
\[
 \varepsilon=u-l,\qquad v=(l+u)/2,\qquad
 q_j=\frac{\alpha}{10\cdot2^j}\quad(j\ge1).
\]
Start with $S_1=S$. At internal stage $j$:
\begin{enumerate}[leftmargin=*]
\item If $S_j$ is empty, return it. Otherwise call Fraction with mean thresholds $l,v$, fraction thresholds $0.05,0.1$, and error level $q_j$.
\item If False, return $S_j$.
\item If True, estimate each current arm mean independently to accuracy $\varepsilon/4$ with error level $q_j$, and retain precisely the arms whose estimates are at least $v+\varepsilon/4$. Continue with the retained set.
\end{enumerate}

For the cost calculation only, set the active-set size to zero after
the procedure returns.
Conditional on a current set, a valid True answer implies that more than $0.05|S_j|$ arms have mean below $v$. Each such arm survives the independent
estimation step with probability at most $q_j$. A False answer makes the number of active arms in the next internal stage
zero. On an invalid Fraction answer, the number of active arms in the next internal stage is still at most $|S_j|$. Averaging over
this event, whose probability is at most $q_j$, gives
\[
 \E[|S_{j+1}|\mid S_j]
 \le[1-0.05(1-q_j)^2]|S_j|\le0.96|S_j|.
\]
The contraction holds without conditioning on correctness. 
Iterating it gives
\[
 \E|S_j|\le0.96^{j-1}|S|.
\]
The conditional stage cost is at most
\[
 C|S_j|\varepsilon^{-2}[\log(1/\alpha)+j].
\]
Here the Fraction cost is absorbed because $|S_j|\ge1$ while active. Summing the expected
internal-stage costs bounds the unconditional expected total number of samples by
\[
 C|S|\varepsilon^{-2}\sum_{j\ge1}0.96^{j-1}[\log(1/\alpha)+j]
 =O(|S|\varepsilon^{-2}\log(1/\alpha)).
\]
The same estimate gives
\[
 \Pp(\text{still active at internal stage }j)
 \le\E|S_j|\le0.96^{j-1}|S|.
\]
This tends to zero as $j\to\infty$, so the subroutine terminates almost surely.

We next verify the fraction guarantee and bound the probability that specified arms of mean
at least $u$ are all removed. The probability that any internal Fraction answer is
invalid is at most $\sum_jq_j=\alpha/10$. If every such answer is valid, a nonempty set is returned only after a False answer and therefore has
low-mean fraction less than $0.1$. The empty set satisfies the non-strict inequality automatically.
For each input arm, generate independent estimation blocks in advance for all internal stages,
separately from the Fraction samples. If an arm has mean at least $u$, every accurate estimate
is at least
\[
 u-\varepsilon/4=v+\varepsilon/4,
\]
so the arm cannot be removed unless one of its own estimates is inaccurate. For each arm, the event
that at least one estimate is inaccurate has probability at most $\alpha/10$. These events are
independent across arms. Consequently the probability that all $k$ specified arms with mean at least
$u$ are removed is at most $(\alpha/10)^k\le\alpha^k$. This argument based on independent estimation
errors follows \cite[lemma B.9]{CLQ17} and permits adaptive use of the blocks.

For one designated optimal arm of mean at least $u$, the probability of losing it or violating the
bound on the number of returned arms with mean below $l$ is at most $\alpha/5<\alpha$. Fresh samples give these guarantees
conditional on any history before the call.

\bibliographystyle{plain}
\bibliography{gap_entropy}

\end{document}